\documentclass[10pt,letterpaper]{article}
\usepackage{language2drive_preprint}
\usepackage{amsmath,amssymb,amsthm,booktabs,array,tabularx,longtable,multirow,makecell}
\usepackage{graphicx,colortbl,url,pifont}
\usepackage{tikz}
\usetikzlibrary{positioning,arrows.meta,calc}
\usepackage{listings,placeins}
\hypersetup{
pdftitle={Language2Drive: Do Vision-Language-Action Models Really Follow the Instructions in Autonomous Driving?},
pdfauthor={Zihan You, Zehua Fan, Wenqi Lyu, Aoqi Wang, Ni Wang, Yifei Yang, Linge Zhao, Huan Li, Sining Ang, Yuguang Yang, Jueyu Chen, Xiangyu Li, Qianshan Wei, Jieyuan Pei, Yuxin Wang, Zhiwen Chen, Jiaqi Li, Wenxuan Song, Yanyuan Qiao, Benjin Zhu, Haoang Li, Yan Wang},
pdfsubject={Language2Drive benchmark},
pdfkeywords={autonomous driving, vision-language-action models, instruction following, benchmark}}
\newcolumntype{Y}{>{\raggedright\arraybackslash}X}

\newcommand{\ltd}{\textnormal{Language2Drive}}

\newcommand{\SISR}{\mathrm{SISR}}
\newcommand{\SemISR}{\mathrm{SemISR}}
\newcommand{\CFPS}{\mathrm{CFPS}}
\newcommand{\SemCFPS}{\mathrm{Sem\mbox{-}CFPS}}

\newtheorem{proposition}{Proposition}
\newtcolorbox{mygraybox}{
  colback=gray!10,
  colframe=black,
  boxrule=1.5pt,
  arc=4pt
}
\begin{document}
\pdfbookmark[0]{Language2Drive}{title}
\thispagestyle{plain}
\begin{tcolorbox}[enhanced,frame hidden,colback=ltdpale,arc=9pt,
left=13pt,right=13pt,top=13pt,bottom=13pt,before skip=0pt,after skip=0pt]
\ifshowinstitutionbadges
  % Official institutional marks; see assets/branding/README.md for sources.
% Display order follows the affiliation list; disabled by default.
\begingroup\setlength{\parskip}{0pt}\centering
\institutionbadge{tsinghua.pdf}\hfill
\institutionbadge{southeast.pdf}\hfill
\institutionbadge{liauto.pdf}\hfill
\institutionbadge{hkustgz.pdf}\hfill
\institutionbadge{sjtu.pdf}\hfill
\institutionbadge{adelaide.pdf}\hfill
\institutionbadge{bistu.pdf}\hfill
\institutionbadge{amazon.pdf}\hfill
\institutionbadge{bjtu.pdf}\hfill
\institutionbadge{wuhan.pdf}\par
\vspace{7pt}
\institutionbadge{hust.pdf}\hfill
\institutionbadge{ustc.pdf}\hfill
\institutionbadge{nio.pdf}\hfill
\institutionbadge{utexas.pdf}\hfill
\institutionbadge{cas.pdf}\hfill
\institutionbadge{hithink.pdf}\hfill
\institutionbadge{amd.pdf}\hfill
\institutionbadge{hku.pdf}\hfill
\institutionbadge{epfl.pdf}\par
\endgroup

  \vspace{10pt}
\fi
{\sffamily\bfseries\fontsize{18}{22}\selectfont\color{ltdtitle}\centering
Language2Drive: Do Vision-Language-Action Models\\
Really Follow the Instructions in Autonomous Driving?\par}
\vspace{9pt}
{\rmfamily\fontsize{10}{14}\selectfont\color{ltdink}\centering\setlength{\parskip}{0pt}
\authorname{Zihan You}{1,2,*,\dagger}\hspace{10pt}%
\authorname{Zehua Fan}{5,*}\hspace{10pt}%
\authorname{Wenqi Lyu}{6,*}\hspace{10pt}%
\authorname{Aoqi Wang}{7,*}\hspace{10pt}%
\authorname{Ni Wang}{8}\hspace{10pt}%
\authorname{Yifei Yang}{9}\\[1.5pt]
\authorname{Linge Zhao}{10}\hspace{10pt}%
\authorname{Huan Li}{11}\hspace{10pt}%
\authorname{Sining Ang}{12}\hspace{10pt}%
\authorname{Yuguang Yang}{1}\hspace{10pt}%
\authorname{Jueyu Chen}{12,13}\hspace{10pt}%
\authorname{Xiangyu Li}{14}\\[1.5pt]
\authorname{Qianshan Wei}{15}\hspace{10pt}%
\authorname{Jieyuan Pei}{1,4,16}\hspace{10pt}%
\authorname{Yuxin Wang}{17}\hspace{10pt}%
\authorname{Zhiwen Chen}{4}\hspace{10pt}%
\authorname{Jiaqi Li}{18}\\[1.5pt]
\authorname{Wenxuan Song}{4}\hspace{10pt}%
\authorname{Yanyuan Qiao}{19}\hspace{10pt}%
\authorname{Benjin Zhu}{3}\hspace{10pt}%
\authorname{Haoang Li}{4}\hspace{10pt}%
\authorname{Yan Wang}{1,\ddagger}\par}
\vspace{7pt}
{\centering\color{ltdink!18}\rule{.96\linewidth}{.35pt}\par}
\vspace{6pt}
{\rmfamily\fontsize{8.2}{11.3}\selectfont\color{ltdink!85}\centering\setlength{\parskip}{0pt}
\affil{1}{Institute for AI Industry Research (AIR), Tsinghua University}\\[1pt]
\affil{2}{Southeast University}\hspace{5pt}%
\affil{3}{Li Auto Inc.}\hspace{5pt}%
\affil{4}{The Hong Kong University of Science and Technology (Guangzhou)}\\[1pt]
\affil{5}{Shanghai Jiao Tong University}\hspace{5pt}%
\affil{6}{University of Adelaide}\hspace{5pt}%
\affil{7}{Beijing Information Science and Technology University}\\[1pt]
\affil{8}{Amazon}\hspace{5pt}%
\affil{9}{Beijing Jiaotong University}\hspace{5pt}%
\affil{10}{Wuhan University}\hspace{5pt}%
\affil{11}{Huazhong University of Science and Technology}\\[1pt]
\affil{12}{University of Science and Technology of China}\hspace{5pt}%
\affil{13}{NIO}\hspace{5pt}%
\affil{14}{University of Texas at Austin}\\[1pt]
\affil{15}{Chinese Academy of Sciences}\hspace{5pt}%
\affil{16}{HiThink Research}\hspace{5pt}%
\affil{17}{AMD}\hspace{5pt}%
\affil{18}{The University of Hong Kong}\hspace{5pt}%
\affil{19}{EPFL}\par}
\vspace{5pt}
{\rmfamily\fontsize{8}{11}\selectfont\color{ltdink!85}\centering
\textsuperscript{*}Core Contributors\qquad
\textsuperscript{\ensuremath{\dagger}}Project Lead\qquad
\textsuperscript{\ensuremath{\ddagger}}Corresponding Author\par}
\vspace{9pt}
{\sffamily\bfseries\small\color{ltdink}Abstract\par}
\vspace{2pt}
{\color{ltdink}\fontsize{9.5}{11.8}\selectfont
Recent Vision-Language-Action (VLA) models increasingly use natural-language instructions to condition autonomous driving behavior. Yet existing closed-loop benchmarks primarily measure route completion, safety, and rule compliance, leaving a fundamental question unresolved: does language actually determine which feasible behavior the vehicle executes? A trajectory may be safe and successful while still violating or even ignoring the passenger's instruction. To address this, we introduce \ltd, a benchmark designed to directly test this capability. Its key evaluation unit, the paired counterfactual unit (PCU), holds the initial driving scene fixed while changing only the instruction. This design enables us to distinguish isolated instruction success from counterfactual paired success across four instruction-following scopes. To test whether this distinction persists across evaluation regimes, we instantiate Language2Drive in complementary closed-loop CARLA simulation, reconstructed real-world scenes, and open-loop driving benchmarks, including our released CARLA-Language2Drive simulator with purpose-built maps. Evaluating representative driving VLAs across these settings reveals a striking mismatch between strong driving competence and weak language control: 93.2\% of successful rollouts do not have a successful counterpart under the alternative feasible instruction. This failure is dominated by one-sided completion, where a model repeatedly executes one preferred behavior but fails when the same scene requires the counterfactual alternative. These results expose two fundamental limitations of current evaluation: \textbf{isolated success and driving scores, which are widely used by conventional driving benchmarks, can substantially overstate language-following capability, while current driving VLAs themselves lack reliable instruction following under matched counterfactual tests}. Effective evaluation must therefore measure not only whether a VLA can drive successfully, but whether changing the instruction reliably alters which feasible behavior it selects and successfully executes.

\par}
\vspace{8pt}
{\fontsize{9.5}{12.5}\selectfont\hypersetup{urlcolor=ltdtitle}
\begin{minipage}[b]{.61\linewidth}
\textsf{\textbf{Project Page:}}\enspace\href{https://zihanyou.github.io/Language2Drive/}{\projectpageicon\enspace Project Page}\par
\textsf{\textbf{Code:}}\enspace\href{https://github.com/ZihanYou/Language2Drive}{\raisebox{-.2ex}{\includegraphics[height=1.4ex]{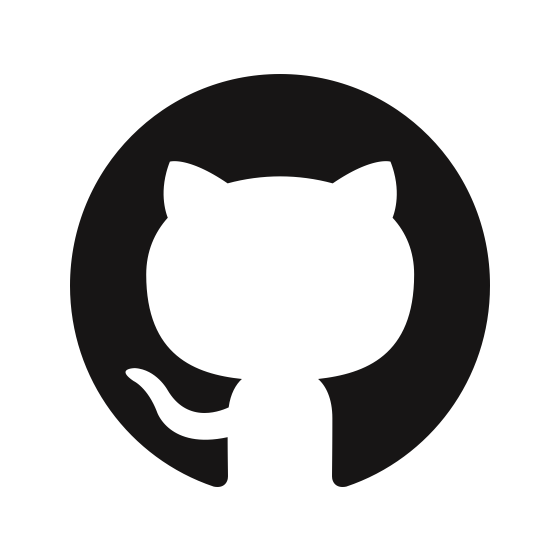}}\enspace GitHub}\par
\textsf{\textbf{Benchmark:}}\enspace\href{https://huggingface.co/datasets/Mike087/Language2Drive}{\raisebox{-.2ex}{\includegraphics[height=1.4ex]{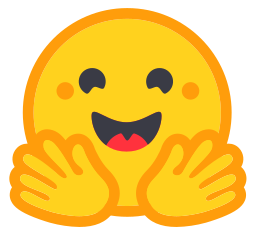}}\enspace Hugging Face}\par
\end{minipage}\hfill
\begin{minipage}[b]{.35\linewidth}\raggedleft
\raisebox{3pt}{\includegraphics[width=135pt]{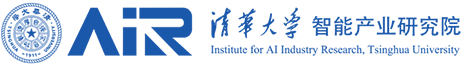}}\par
\end{minipage}\par}
\end{tcolorbox}

\clearpage
\begingroup
\pagestyle{contents}
\hypersetup{linkcolor=tocblue,linktoc=section}
\tableofcontents
\clearpage
\endgroup
\pagestyle{plain}

\section{Introduction}
\begin{figure}[t]
\centering
\includegraphics[width=\linewidth]{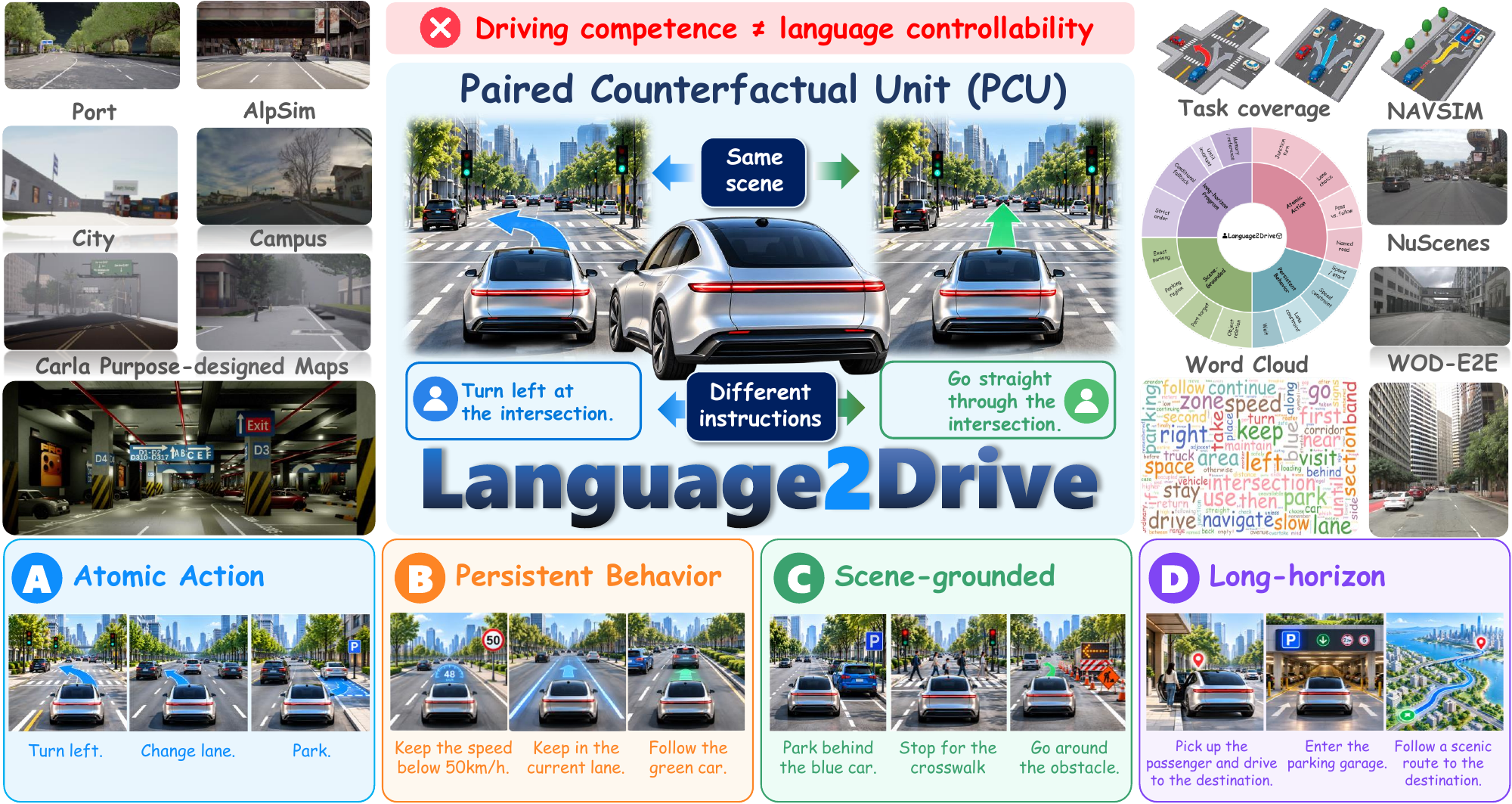}
\caption{\textbf{Overview of Language2Drive.}
Language2Drive evaluates autonomous-driving VLA instruction following across four scopes: (A) Atomic actions, (B) Persistent behaviors, (C) Scene-grounded instructions, and (D) Long-horizon programs. To support these task, we build purpose-designed CARLA maps includes cities, underground parking lot, campus and port. We also curate relevant scenarios from AlpSim, NAVSIM, nuScenes, and WOD-E2E. The Task-Coverage chart summarizes the diversity of driving behaviors, while the Word Cloud illustrates the breadth of natural-language expressions. We further introduce the Paired Counterfactual Unit (PCU), which varies the instruction while keeping the driving scene fixed, directly testing whether language changes the vehicle’s behavior. Our benchmark highlights a key distinction: driving competence does not necessarily imply language controllability or reliable instruction following.}
\label{fig:overview}
\end{figure}
\label{sec:intro}
Recent advances in Vision-Language-Action (VLA) models have opened a new paradigm for autonomous driving: natural-language instructions is able to condition driving behaviors~\citep{zhou2026opendrivevla,zhou2026autovla,shao2024lmdrive,tian2024drivevlm,hwang2024emma,renz2025simlingo}.  By building upon internet-scale pretraining in large vision-language models, VLAs inherit rich world knowledge, open-world perception, scene understanding, and multimodal reasoning capabilities from foundation models. It promise an interface to express passenger intent without direct manual control, in which passengers can specify how to drive and what task to accomplish. Realizing this promise requires more than generating plausible motion: when several safe, legal, and physically feasible behaviors are available, the system must select and sustain the one requested by the passenger. 

Existing evaluation does not fully establish this capability. Driving benchmarks emphasize route completion, safety, and traffic rule compliance~\citep{dauner2024navsim,cao2025pseudo,yu2025drivee2e,jia2024bench2drive}, while language-guided evaluations assess instructed route execution or instruction-specific predicted actions~\citep{shao2024lmdrive,renz2025simlingo}. These evaluations provide important evidence of driving competence and task completion. Yet a safe trajectory can still reach the wrong named parking space or violate a requested passing restriction. More fundamentally, even successful execution of an individual instruction does not establish that language reliably determines the selected behavior. The central question is therefore: Do VLAs really follow instructions, or can their apparent successes be explained by behaviors they would execute regardless of the request?

% To address this, we introduce \textbf{Language2Drive}, a closed-loop benchmark
% Consider an intersection where turning left and going straight are both safe and feasible. A policy that always turns left may succeed on the left-turn request without actually responding to the instruction, while changing the scene together with the instruction makes it unclear whether behavior changes are caused by language or the environment. We therefore argue that instruction following should be evaluated through matched alternatives from the same decision context rather than isolated successes. Based on this principle, \textbf{Language2Drive} built around \emph{paired counterfactual units} (PCUs), where two mutually exclusive but feasible instructions are evaluated in separate runs with matched scenes, non-language inputs, and simulation seeds. Since an instruction can be satisfied by many valid trajectories, each task is defined by  machine-checkable success criteria that specifies  spatial, temporal, and safety conditions on the realized behavior. We report Strict Instruction Success Rate (SISR) for individual requests and Counterfactual Paired Success (CFPS) for PCUs in which both alternatives are successfully completed, making it possible to identify systems that succeed only on one side of an instruction contrast. We further use three meaning-preserving expressions for each CARLA instruction to test whether this paired success remains stable across different wording.
To address this, we introduce \textbf{Language2Drive}, a closed-loop benchmark for evaluating reliable instruction following in autonomous driving. The key challenge is that success on a single request does not establish language-conditioned control. At an intersection where turning left and going straight are both safe and feasible, for example, a policy that consistently turns left may succeed on the left-turn request without demonstrating that it can execute the alternative when asked. Conversely, comparing instructions across different scenes confounds language variation with environmental variation. Language2Drive therefore evaluates matched alternatives through paired counterfactual units (PCUs). Each PCU holds the decision context, non-language inputs, and simulation seed fixed while evaluating two mutually exclusive but feasible instructions in separate runs. This design distinguishes isolated success from the ability to execute either requested alternative under matched conditions. Because an instruction may admit many valid trajectories, each request is evaluated by machine-checkable spatial, temporal, and safety criteria rather than reference-trajectory matching. Strict Instruction Success Rate (SISR) measures individual completion, whereas Counterfactual Paired Success (CFPS) requires both alternatives in a PCU to succeed. Three meaning-preserving expressions further test whether paired success remains stable across linguistic variation.

Language2Drive organizes tasks by what language controls: atomic actions, persistent behavior, scene-grounded navigation, and long-horizon programs. To support these tasks, we release \textbf{CARLA-Language2Drive}, a CARLA extension that adds six purpose-designed maps covering underground parking, a port, Chinese urban roads, a U.S.-style city with detailed signage, a campus, and a highway. A complementary AlpaSim track is built from reconstructed real-world road scenes selected through targeted mining and manual review~\citep{alpasim_2025,wu20253dgut}. These environments provide alternative maneuvers, identifiable referents, and execution intervals needed to test language-conditioned behavior. We also extend paired evaluation based on the test sets developed from nuScenes, NAVSIM, and WOD-E2E, complementing closed-loop execution with open-loop tests on recorded driving contexts.

Our evaluation reveals that \textbf{driving quality and individual instruction success do not establish reliable language control}. Across principal CARLA systems, 93.2\% of successful rollouts lack a successful opposite-instruction counterpart. Most observed successes are therefore one-sided: one request is completed, but its feasible alternative is not. In reconstructed scenes, Alpamayo~1.5 achieves 100\% Safety and a reference-path DrivingScore of 97.17, yet \textbf{zero} CFPS. Moreover, no evaluated CARLA system completes both alternatives under all three tested language expressions. The individual--paired discrepancy also appears in open-loop evaluations across all three datasets. Together, these findings show why driving scores and isolated successes must be supplemented by tests of alternative selection, complete execution, and consistency across equivalent instructions.

Our contributions are threefold. \textbf{First}, we introduce a paired evaluation framework that distinguishes individual instruction completion from successful execution of alternative requests under matched conditions. \textbf{Second}, we instantiate four language-control scopes in Language2Drive, contributing an extended CARLA simulator with six purpose-designed maps, a curated AlpaSim track, and complementary open-loop evaluations. \textbf{Third}, we reveal a substantial separation between driving quality and reliable instruction following, characterizing one-sided success, incomplete execution, and fragile paired success across paraphrases. These findings establish matched, instruction-specific execution as an essential complement to conventional driving evaluation.

\section{Language2Drive}
\label{sec:method}
\subsection{Overview}
% \vspace{-10pt}
We introduce Language2Drive, a closed-loop benchmark designed to evaluate whether vision-language-action models for autonomous driving reliably follow passenger instructions.
% Unlike evaluations centered on route completion and safety, Language2Drive targets scenarios where language must select among multiple feasible driving behaviors and govern their execution over time. 
\begin{tcolorbox}[
    colback=gray!10,  % Background color: 10% gray (light gray)
    colframe=black!95, % Frame color: 75% gray (darker gray for contrast)
    arc=4pt,          % Rounded corners with 4pt radius
    boxrule=1pt,    % Frame thickness
    title=Task Definition, % Optional title for the box
    fonttitle=\bfseries,
    breakable,        % Allows the box to break across pages if needed
    enhanced,         % For more advanced features like overlay
    attach boxed title to top center={yshift=-0.25mm-\tcboxedtitleheight/2,yshifttext=2mm-\tcboxedtitleheight/2},
    boxed title style={colback=gray!75,colframe=gray!75,sharp corners}
]
\textbf{Autonomous-driving instruction following} is the ability to select and execute the passenger-requested behavior among safe, legal, and physically feasible alternatives when navigation is absent or does not determine the choice. Without a passenger specifying each reaction, the system must still respond to traffic; when a request conflicts with safety or feasibility, it should perform an appropriate safe response. 

\end{tcolorbox}

\subsection{Paired counterfactual units}
\label{sec:pcu}
A single successful request can coincide with a system's default behavior. We therefore test the same decision context twice: for example, one run asks the vehicle to turn left, while the other to continue straight. Both requests are feasible, but a successful execution of one excludes the other. We call this matched test a Paired Counterfactual Unit (PCU). It tests whether the system can execute either alternative when asked. 
% (Figure~\ref{fig:pcu-anatomy}).

Both experimental conditions are initialized with the same map, ego/actor states, history, non-language inputs, and simulation seed. The human language instruction changes; later observations evolve with each run's actions. Condition-specific destinations, future reference paths, and turn commands remain private to the evaluator. Each PCU has two instructions, each with an associated success criterion; meaning-preserving expressions test the same pair without changing their targets. Appendix~\ref{app:theory} states the conditions under which an instruction-invariant system cannot succeed on both test conditions.

\subsection{Machine-checkable success criteria}
\label{sec:contracts}
Each instruction is paired with a machine-checkable success criterion over the executed trajectory.  It specifies the requested outcome, where the condition applies, and when execution is complete. Three examples illustrate the distinction between plausible motion and successful execution:
% \begin{itemize}\setlength{\itemsep}{1pt}
% \item \emph{``Turn left at this intersection.''} Enter the requested left exit, without satisfying the paired straight-exit criterion.
% \item \emph{``Park in space B204.''} Place the vehicle footprint inside B204 with a valid heading, sufficiently low speed, and the required dwell time.
% \item \emph{``First visit B208, then go to F206.''} Complete the two stages in that order; visiting both in reverse order does not count.
% \end{itemize}
 \emph{``Turn left at this intersection.''} Enter the requested left exit, without satisfying the paired straight-exit criterion.
\emph{``Park in space B204.''} Place the vehicle footprint inside B204 with a valid heading, sufficiently low speed, and the required dwell time.
\emph{``First visit B208, then go to F206.''} Complete the two stages in that order; visiting both in reverse order does not count.

Persistent conditions apply to a reached, nonempty interval that the vehicle must complete. Stopping before the interval cannot satisfy a request to maintain a speed throughout it. The verifier checks actual ego states and composes geometric, behavioral, and temporal predicates; exact operators and thresholds appear in Appendix~\ref{app:task-metrics}.

\subsection{Metrics}
\label{sec:metrics}

Complete instruction execution is measured by two primary metrics:
\textbf{SISR} for individual requests and \textbf{CFPS} for matched
alternatives.
Supplementary metrics characterize partial completion and consistency
across equivalent instructions; driving quality is reported separately.

\paragraph{Individual and paired success.}
Let $s_{ir,j}\in\{0,1\}$ be the strict verdict for PCU $i$,
expression $r$, and arm $j\in\{0,1\}$.
Success requires the complete requested contract, rejection of its
paired alternative, and all applicable safety and execution conditions.
We average expressions within each PCU and then weight PCUs equally:
$\langle z\rangle=
\frac{1}{N}\sum_{i=1}^{N}
\frac{1}{R_i}\sum_{r=0}^{R_i-1}z_{ir}$.
Unsuccessful executions remain in the scheduled denominator.
All instruction metrics below use a 0--100 scale.
We report
\begin{equation}
\mathrm{SISR}
=
100\left\langle
\frac{s_{ir,0}+s_{ir,1}}{2}
\right\rangle,
\qquad
\mathrm{CFPS}
=
100\langle s_{ir,0}s_{ir,1}\rangle.
\label{eq:pair}
\end{equation}
SISR credits each completed request; CFPS credits a matched pair
only when both requests are completed.
For example, always completing the left turn yields 50 SISR but
zero CFPS on a balanced left/straight test.
Their difference has an exact interpretation:
\begin{equation}
\mathrm{SISR}-\mathrm{CFPS}
=
\tfrac12 q_{\mathrm{one}},
\qquad
q_{\mathrm{one}}
=
100\langle|s_{ir,0}-s_{ir,1}|\rangle.
\label{eq:one-sided}
\end{equation}
Here $q_{\mathrm{one}}$ is the percentage of paired instances with
exactly one successful arm under the same averaging weights.
The gap therefore quantifies one-sided completion.
Conditions under which CFPS rules out instruction-invariant
execution are given in Appendix~\ref{app:metric-interpretation}.

\paragraph{Partial completion.}
To characterize execution below strict success, we retain two
diagnostic views.
\emph{Relaxed semantic proxies}, SemISR and Sem-CFPS, apply
Equation~\ref{eq:pair} to predefined task-specific relaxed verdicts
instead of strict verdicts.
They measure satisfaction of weaker criteria, not additional
strict successes.
Graded completion uses
$c_{ir,j}=C_{i,j}(\tau_{ir,j})\in[0,1]$
to score how fully a trajectory completes its own request:
\begin{equation}
\mathrm{SCS}
=
100\left\langle
\frac{c_{ir,0}+c_{ir,1}}{2}
\right\rangle,
\qquad
\mathrm{PCS}
=
100\langle\min(c_{ir,0},c_{ir,1})\rangle.
\label{eq:graded}
\end{equation}
SCS measures average completion; PCS measures completion of the
weaker alternative, so progress on one arm cannot compensate for
an uncompleted counterpart.
Their gap is
$\mathrm{SCS}-\mathrm{PCS}
=50\langle|c_{ir,0}-c_{ir,1}|\rangle$.
The relaxed and graded rules are distinct diagnostics rather than
interchangeable success definitions, as detailed in Appendix~\ref{app:metric-partial}.

\paragraph{Summary reporting metric L2D.}
For compact within-track reporting, we first summarize language-following ability across three complementary families:
\begin{equation}
\begin{aligned}
a_1&=\tfrac12(\mathrm{SISR}+\mathrm{CFPS}),\quad
a_2=\tfrac12(\mathrm{SemISR}+\mathrm{Sem\text{-}CFPS}),\quad
a_3=\tfrac12(\mathrm{SCS}+\mathrm{PCS}),\\
L_{\mathrm{bal}}
&=
\left(
\frac{\sqrt{a_1}+\sqrt{a_2}+\sqrt{a_3}}{3}
\right)^2 .
\end{aligned}
\label{eq:metric-lbal}
\end{equation}
Here, $a_1$, $a_2$, and $a_3$ summarize strict instruction following, semantic instruction following, and consistency, respectively. $L_{\mathrm{bal}}$ favors models that perform well across all three families rather than excelling in only one.

We then combine language following with track-specific driving quality:
\begin{equation}
\mathrm{L2D}_t
=
\frac{1+0.99L_{\mathrm{bal}}}{100}
\begin{cases}
\tfrac12(\mathrm{PSM}+\mathrm{DrivingScore}), & t=\mathrm{CARLA},\\
\tfrac12(\mathrm{Safety}+\mathrm{DrivingScore}), & t=\mathrm{RealSim}.
\end{cases}
\label{eq:l2d}
\end{equation}
Intuitively, the driving term measures whether the vehicle can drive safely and effectively, while $L_{\mathrm{bal}}$ measures whether its behavior is actually controlled by language. Therefore, high driving quality alone cannot produce a high L2D score: if the model largely ignores the instruction, its final score remains low.
\paragraph{Driving quality and summary reporting.}
DrivingScore measures reference-path
advancement.
Safety is characterized by Predictive Safety Margin (PSM) on
independent CARLA routes and collision/off-road-free Safety in RealSim.
Neither establishes instruction satisfaction.
L2D summarizes the aggregated language and driving measurements
within each track, with driving-only credit capped at one point.
Its definition and properties appear in
Appendix~\ref{app:metric-l2d}; SISR and CFPS remain the primary
measures of complete execution.

\subsection{Four language-control scopes}
\label{sec:scopes}
We organize instructions by what language controls and how long its requirements remain active. Figure~\ref{fig:overview} summarizes their task families and vocabulary.

\textbf{A: Atomic action.}
Language selects a local maneuver or setpoint, such as a junction turn, lane choice, speed/start action, or wait-then-park maneuver. Success requires completing the selected maneuver under its applicable dwell and progress conditions while rejecting the paired alternative. This scope concerns one local decision, even when execution takes several seconds.

\textbf{B: Persistent behavior.}
Language imposes a speed, lane, or actor-related requirement over an annotated interval, including passing versus following. Paired requests share the interval or actor but prescribe different behaviors. Verification requires complete traversal and compliance from scope entry to exit; briefly satisfying the condition or stopping before the interval is insufficient.

\textbf{C: Scene-grounded navigation.}
Language selects a named road, parking region, port target, or exact parking space. The requested entity must be grounded to its physical target: directed exit gates for roads and polygons for regions or spaces. Exact parking additionally checks footprint, heading, speed, and dwell. Reaching a nearby target does not satisfy the requested identity.

\textbf{D: Long-horizon program.}
Language specifies obligations that depend on task state: ordered visits, conditional fallbacks, invariants until an event, or reuse of a remembered reference. The verifier tracks stages, triggers, and bound entities, rejecting incorrect ordering or incomplete execution. Program structure, rather than duration alone, distinguishes this scope from persistent behavior.

\label{sec:fdq-safety}
Two CARLA diagnostics complement the feasible-request core. Foundation Driving Qualification (FDQ) tests traffic reactions without passenger-specified instructions. Safety Qualification uses requests requiring a safe fallback, refusal, or constrained alternative when literal compliance conflicts with safety or feasibility. An open-loop extension developed from nuScenes, NAVSIM, and WOD-E2E evaluates predicted trajectories; Section~\ref{sec:offline-results} describes its construction.

\subsection{Benchmark construction process}
\label{sec:construction}\label{sec:environments}
Our construction process turns a physical decision opportunity into a matched, executable test (Figure~\ref{fig:construction-process}). Each stage addresses a separate validity requirement: meaningful alternatives, unambiguous grounding, controlled inputs, or verifiable execution.
\begin{figure}[t]
\centering\includegraphics[width=\linewidth]{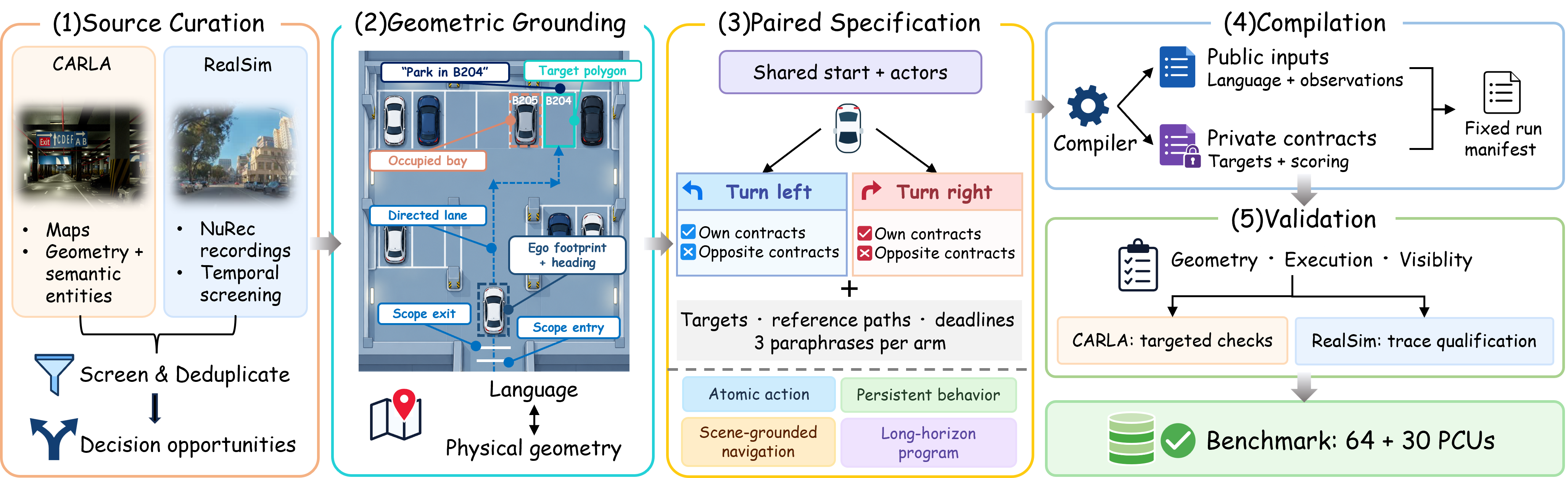}
\caption{\textbf{Benchmark construction process.}The process to build the benchmark is as follows: (1) Source curation identifies decision opportunities; (2) Geometric grounding binds language to physical entities; (3) Paired specification fixes shared conditions and opposed outcomes; (4) Compilation separates public inputs from private criteria; (5) Validation checks the resulting tasks. The benchmark contains 64 CARLA and 30 RealSim PCUs.}
\label{fig:construction-process}
\end{figure}

\textbf{Source curation.}
\textbf{CARLA-Language2Drive} extends CARLA~\citep{dosovitskiy2017carla} with six purpose-designed environment types: underground parking, a port, Chinese urban roads, a US city with detailed signage, a campus, and a highway. We select decision contexts that permit opposed feasible behaviors. \textbf{RealSim} mines reconstructed PhysicalAI-NuRec scenes executed through AlpaSim~\citep{alpasim_2025, wu20253dgut}; temporal screening requires both pre-instruction context and enough recording time for the requested maneuver.

\textbf{Geometric grounding.}
Directed lane graphs, exit gates, named regions, parking polygons, and physical actors bind linguistic references to measurable entities. Parking uses the ego footprint and heading, while persistent tasks specify explicit entry/exit geometry and relevant actors. These bindings distinguish executing a request from merely approaching its target and prevent an empty or unreached interval from counting as compliance.

\textbf{Paired specification.}
Each PCU fixes the initial scene and actor setup, then assigns mutually exclusive requests with corresponding success predicates, private reference paths, and deadlines. CARLA has three meaning-preserving expressions per arm; RealSim uses one.

\textbf{Public/private compilation.}
Compilation separates public language, observations, ego state, and neutral context from private targets and scoring information. Consequently, the evaluator can check the requested alternative without revealing its future path or destination through auxiliary inputs.

\textbf{Task validation.}
Static checks verify entity bindings, directed reachability, footprint constraints, and matched non-language initialization. All PCUs are each validated via a full-scale evaluator that accounts for dynamic execution—incorporating deadlines, occupancy, dynamic participants, and temporal requirements. Human reviewers further verify designated parking spaces, road names, and the operational ranges of "next road segments," making sure the model under test is able to determine the target and the scope of constraints based on actual camera feeds, historical data, and shared context. Every PCU has successful privileged executions of both alternatives under three seeds, giving witnesses that satisfy the requested criterion and reject its counterpart.

% \paragraph{Benchmark composition.}
% The CARLA evaluation contains 64 PCUs on ten maps, with 16 PCUs per scope and 16 task families. Its two arms and three expressions yield 384 rollouts and 192 pairs per system. Entire PCUs stay within one development/test split; split checks cover entity, topology, and actor-program overlap. RealSim contains 30 PCUs from 23 scenes: six atomic and 24 persistent tasks, yielding 60 rollouts and 30 pairs. Figure~\ref{fig:task-language} summarizes the CARLA task distribution and its 384 instruction entries (216 distinct strings). Appendix~\ref{app:maps} gives geometric operators, thresholds, and inventories.

\FloatBarrier

\section{Experiments}
\label{sec:experiments}
\subsection{Evaluation protocol}
We distinguish three outcomes: driving successfully, completing an individual instruction, and completing both requested alternatives under matched conditions.
CARLA evaluates 64 PCUs with two arms and three expressions, yielding 384 core rollouts and 192 paired instances per system as reported in Table~\ref{tab:carla-main}.
RealSim evaluates 30 PCUs from 23 reconstructed scenes with two arms and one expression, yielding 60 rollouts and 30 pairs per system as reported in Table~\ref{tab:real-main}.
Released weights remain frozen, and each evaluated system includes its input adapter and controller.
Appendix~\ref{app:adapters} specifies checkpoint identities, sensors, action interfaces, and execution settings.FDQ and Safety-Q provide complementary closed-loop diagnostics; open-loop evaluations test predicted trajectories on three recorded-driving datasets.

% SISR and CFPS are the primary measures of complete individual and paired execution.
% Tables~\ref{tab:carla-main} and~\ref{tab:realsim-main} additionally report relaxed proxies, graded completion, and driving quality, with L2D providing the summary defined in Equation~\ref{eq:l2d}.
% We retain the component scores to distinguish these outcomes rather than infer instruction following from a composite rank.

\subsection{Can success on one instruction establish language control?}
\label{sec:mainresults}\label{sec:real-results}
\begin{table*}[t]
\caption{\textbf{CARLA-Language2Drive main results.} 384 core rollouts and 192 paired instances per system (64 PCUs, three expressions). PSM uses 24 additional routes. All values are on a 0--100 scale; higher is better. Strict individual and paired success are shown first; L2D follows the component scores.}
\label{tab:carla-main}
\centering\footnotesize
\setlength{\tabcolsep}{2.3pt}
\renewcommand{\arraystretch}{1.15}
\begin{tabularx}{\linewidth}{@{}p{30mm}>{\raggedright\arraybackslash}X*{9}{r}@{}}
\toprule
\multirow{2}{*}{System} & \multirow{2}{*}{\makecell[l]{Training data}} & \multicolumn{2}{c}{Strict success} & \multicolumn{2}{c}{Relaxed proxy} & \multicolumn{2}{c}{Graded} & \multicolumn{2}{c}{Driving} & Summary\\
\cmidrule(lr){3-4}\cmidrule(lr){5-6}\cmidrule(lr){7-8}\cmidrule(lr){9-10}\cmidrule(l){11-11}
&&SISR&CFPS&SemISR&\makecell{Sem-\\CFPS}&SCS&PCS&PSM&\makecell{Drive\\Score}&L2D\\
\midrule
Alpamayo 1.5 & NVIDIA AV & 8.07 & \textbf{1.04} & 42.19 & 30.73 & \textbf{26.18} & \textbf{12.50} & 70.10 & 62.69 & \textbf{12.20}\\
SimLingo & Action Dream & 7.81 & \textbf{1.04} & \textbf{48.70} & \textbf{35.94} & 23.51 & 11.94 & 62.30 & \textbf{65.04} & 12.15\\
LMDrive & CARLA-64K & \textbf{12.24} & 0.00 & 29.17 & 18.23 & 21.45 & 9.26 & \textbf{88.84} & 50.12 & 10.39\\
OneVL & NAVSIM & 0.52 & 0.00 & 25.00 & 3.65 & 16.69 & 7.05 & 77.72 & 50.69 & 4.87\\
Qwen-Drive-1.0-4B & WOD-E2E & 1.82 & 0.00 & 15.62 & 3.12 & 17.05 & 7.09 & 76.68 & 43.67 & 4.31\\
Impromptu-VLA & Impromptu & 0.26 & 0.00 & 19.79 & 6.25 & 14.48 & 4.61 & 77.43 & 42.02 & 3.87\\
\midrule
\multicolumn{11}{@{}l}{\textit{Diagnostic configuration: high-level bridge}}\\
Senna VLM bridge & DriveX & 12.24 & 0.00 & 27.60 & 3.65 & 25.93 & 8.12 & 80.85 & 68.38 & 9.89\\
\bottomrule
\end{tabularx}
\end{table*}

\begin{table*}[t]
% \vspace{-10pt}
\caption{\textbf{RealSim main results.} Scoring v0.2 on 30 PCUs from 23 reconstructed scenes: 60 rollouts and 30 pairs per system. All values are on a 0--100 scale; higher is better. Strict individual and paired success are shown first; L2D follows the component scores.}
\label{tab:realsim-main}\label{tab:real-main}
\centering\footnotesize
\setlength{\tabcolsep}{1.6pt}
\renewcommand{\arraystretch}{1.12}
\begin{tabular*}{\linewidth}{@{\extracolsep{\fill}}lrrrrrrrrr@{}}
\toprule
\multirow{2}{*}{System} & \multicolumn{2}{c}{Strict success} & \multicolumn{2}{c}{Relaxed proxy} & \multicolumn{2}{c}{Graded} & \multicolumn{2}{c}{Driving} & Summary\\
\cmidrule(lr){2-3}\cmidrule(lr){4-5}\cmidrule(lr){6-7}\cmidrule(lr){8-9}\cmidrule(l){10-10}
&SISR&CFPS&SemISR&\makecell{Sem-\\CFPS}&SCS&PCS&Safety&\makecell{Driving\\Score}&L2D\\
\midrule
Alpamayo2-Super & \textbf{13.33} & 0.00 & \textbf{18.33} & 0.00 & \textbf{21.43} & 0.00 & 93.33 & 97.10 & \textbf{9.22}\\
Alpamayo 1.5 & 11.67 & 0.00 & 15.00 & 0.00 & 18.12 & 1.11 & \textbf{100.00} & \textbf{97.17} & 8.38\\
SimLingo & 6.67 & \textbf{3.33} & 13.33 & \textbf{6.67} & 16.67 & \textbf{10.00} & 68.33 & 83.09 & 7.58\\
AutoVLA & 1.67 & 0.00 & 8.33 & \textbf{6.67} & 9.73 & 6.67 & 41.67 & 76.94 & 3.36\\
Qwen-Drive-1.0-4B  & 0.00 & 0.00 & 1.67 & 0.00 & 4.44 & 0.00 & 43.33 & 69.16 & 0.92\\
Impromptu-VLA  & 0.00 & 0.00 & 1.67 & 0.00 & 3.71 & 0.00 & 30.00 & 68.85 & 0.78\\
OneVL & 0.00 & 0.00 & 0.00 & 0.00 & 0.00 & 0.00 & 43.33 & 43.82 & 0.44\\
ReCogDrive & 0.00 & 0.00 & 0.00 & 0.00 & 0.00 & 0.00 & 33.33 & 43.57 & 0.38\\
\bottomrule
\end{tabular*}
% \vspace{-8pt}
\end{table*}

\textbf{Most successful requests have no successful counterpart.}
% 118 rollouts satisfy their full instructions, but only eight belong to four successful pairs.
Across the principal CARLA systems,  93.2\% of successes are one-sided: one request succeeds while its feasible alternative fails.
LMDrive achieves the highest SISR, 12.24\%, yet completes no pair.
Alpamayo~1.5 and SimLingo each complete only 1.04\% CFPS, leaving 27 of 31 and 26 of 30 individual successes unpaired.
These are genuine individual completions, but they are rarely accompanied by successful execution of the alternative request. The discrepancy also appears in reconstructed scenes: only one of 240 evaluated system--PCU pairs succeeds strictly, achieved by SimLingo, while Alpamayo2-Super reaches 13.33\% SISR with zero CFPS.

\textbf{Successful pairs demonstrate the capability that isolated successes leave untested.}
Alpamayo~1.5 completes both passing and following on CARLA B-027/p1, and SimLingo satisfies both requested speed bands on RealSim P22.
These positive examples show that the evaluated systems can complete matched alternatives in some contexts; the aggregate results show how rarely they do so across the benchmark.
Paired evaluation therefore distinguishes individual completion from demonstrated access to both requested behaviors.
Appendix~\ref{app:analysis} gives the count-level decomposition.

\subsection{Do driving scores establish instruction following?}
\label{sec:diagnostic-results}
\textbf{Strong driving performance can coexist with zero paired success.} In RealSim, Alpamayo~1.5 achieves 100\% Safety and 97.17 DrivingScore but zero strict or relaxed-proxy paired success; On P11, both Alpamayo~1.5 arms attain 100 DrivingScore, although the requested lane change fails (Figure~\ref{fig:measured-case-comparison}).
Conversely, SimLingo completes both P22 instructions with DrivingScores of 97.82 and 85.78: full instruction execution need not maximize reference-path advancement.
% On the separate FDQ routes, SimLingo and LMDrive attain 88.84\% and 80.77\% route completion despite core CFPS of 1.04\% and 0\%.
These results indicate that High scores on driving performance therefore do not certify instruction-specific completion.

\textbf{Avoiding danger and fulfilling a safe alternative are also distinct outcomes.}
In the six exploratory Safety-Q cases, LMDrive passes safety in five cases but completes only two safe-refusal responses.
Across the six principal systems, all three strict Safety-Q successes are refusals; none of the 24 fallback or constrained-alternative trials succeeds.
These diagnostics evaluate responses to conflicting requests separately from the feasible-request core, distinguishing avoidance of a prohibited action from execution of a useful replacement. Further details are provided in Appendix~\ref{app:fdq}, particularly Section~\ref{app:safety}.

\begin{figure}[t]
\centering\includegraphics[width=\linewidth]{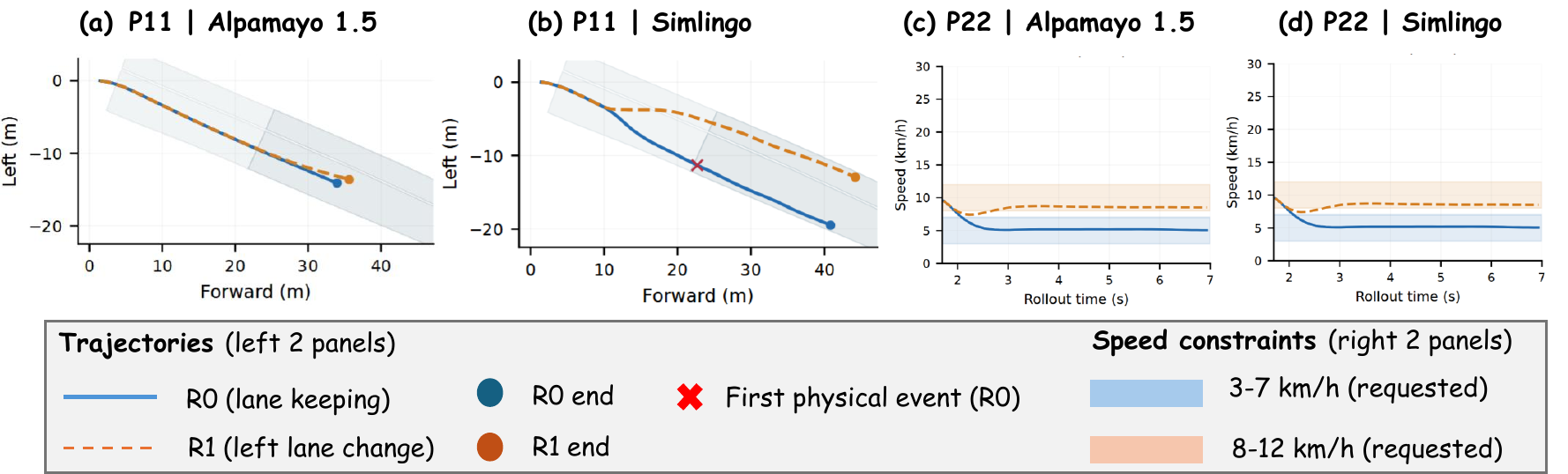}
\caption{\textbf{Advancement and behavioral response do not establish complete instruction execution.} (a,b) P11 requests lane keeping (R0, blue) or a left lane change (R1, orange dashed). Alpamayo~1.5 reaches 100 DrivingScore in both arms but succeeds only on R0.  P22 requests 3--7 or 8--12\,km/h. Shading marks the requested speed bands, \emph{not} their active time windows. Curves cover the full post-instruction interval; verification applies from entry into the annotated segment to exit and requires complete traversal and compliance at all active samples. SimLingo passes both criteria; Alpamayo~1.5 passes neither. Figure~\ref{fig:reference-P22} gives the segment geometry.}
\label{fig:measured-case-comparison}
\end{figure}
% \vspace{-5pt}
\subsection{How reliable is control across wording and task structure?}
% \vspace{-5pt}
\label{sec:wording-results}\label{sec:scope-results}
\textbf{Successful pairs do not persist across all tested language expressions.}
No CARLA system completes both arms of a PCU under all three meaning-preserving expressions.
A low overall outcome-change rate can conceal fragile successes: SimLingo changes on only 9 of 128 arms, but those nine account for 60\% of the 15 arms it ever solves.
Alpamayo~1.5's corresponding conditional rate is 75\%.
On A-010, SimLingo's arm-outcome patterns are \texttt{110} and \texttt{101}, so paired success occurs only for expression~0.
LMDrive, by contrast, completes 14 arms under every expression but never completes a pair.
Consistency on one request therefore does not establish consistent access to its alternative under the tested wordings.

\textbf{Observed long-horizon program success is confined to maintaining an invariant until an event.}
Across CARLA tracks, all seven strict successes in scope Long-horizon program occur in this family: six from LMDrive and one from SimLingo.
Ordered visits, conditional fallbacks, and reference-return programs yield no strict success in 504 rollouts.
The scope-level breakdown thus distinguishes limited success on one temporal structure from the broader ability to execute instruction-defined programs. Detailed results and cases appear in Appendices~\ref{app:analysis} and~\ref{app:case-studies}.
\subsection{Does the single--paired discrepancy extend to predicted trajectories?}
\label{sec:offline-results}
\textbf{The discrepancy also appears in open-loop settings without controller execution.}
We evaluate paired predicted trajectories on our test set developed from nuScenes, NAVSIM, and WOD-E2E~\citep{caesar2020nuscenes,dauner2024navsim,xu2026wod}, holding the recorded history context fixed while changing the instruction.
Open-loop binary success requires satisfying the requested criterion and excluding its alternative.
Figure~\ref{Fig:offline-main} reports within-dataset comparisons, with protocol and sample coverage in Appendix~\ref{app:offline}.
\begin{figure}[t]
\centering\includegraphics[width=\linewidth]{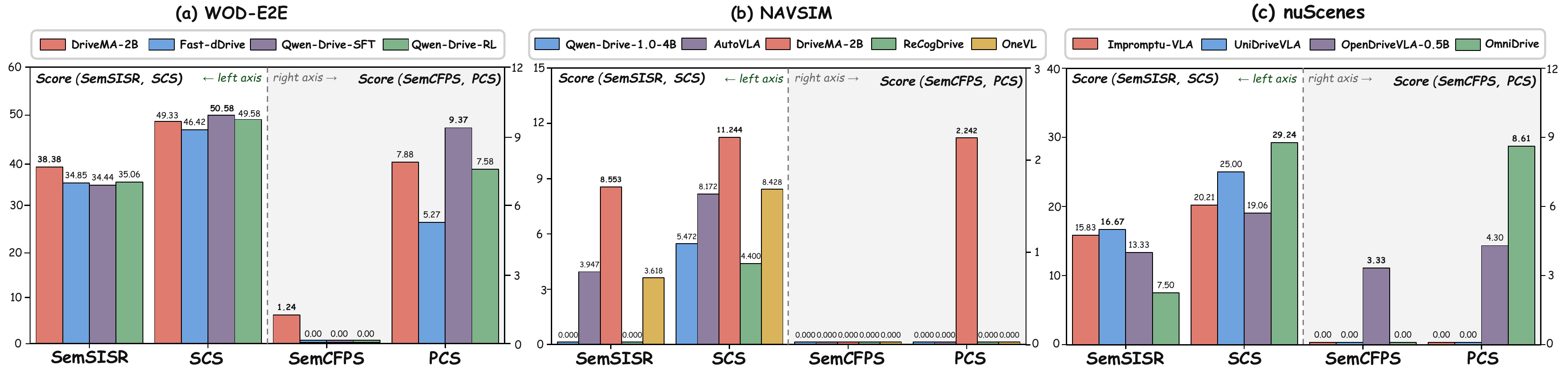}
\caption{\textbf{Open-loop instruction following track on three datasets.} Predicted trajectories are scored for exclusive binary semantic success (SemSISR/SemCFPS) and graded completion (SCS/PCS). All values use a 0--100 scale.}
\label{Fig:offline-main}
\end{figure}
On WOD-E2E, DriveMA-2B attains 38.38 SemSISR but 1.24 SemCFPS; All five NAVSIM systems have zero SemCFPS, with DriveMA-2B reaching the highest SCS/PCS of 11.244/2.242.
On nuScenes, OpenDriveVLA-0.5B reaches 3.33 SemCFPS, while all other reported systems score zero. A separate WOD-E2E geometry diagnostic characterizes weak trajectory differentiation.
Across 441 pairs, the two rated references differ by 3.962\,m mean ADE, whereas the four systems' paired predictions differ by only 0.178--0.355\,m on average.
These distances describe output sensitivity, not semantic correctness, and use a different cohort from the semantic-success analysis.
Without pooling scores or rankings across datasets, the open-loop results extend the central observation: individual completion is much more prevalent than successful completion of both requested alternatives.
\FloatBarrier
\vspace{-8pt}
\section{Related Work}
\label{sec:related}

\providecommand{\ltdyes}{\ding{51}}
\providecommand{\ltdno}{\textendash}

\begin{table}[t]
% \vspace{-10pt}
% \caption{\textbf{Comparison of driving evaluation protocols.}
% Checks indicate explicitly evaluated features, not model capabilities;
% \ltdno\ means not explicitly reported in the cited protocol.
% Action Dreaming denotes SimLingo's instruction evaluation, not its
% separate closed-loop driving tests.}
\caption{\textbf{Comparison of driving evaluation protocols.} S: synthetic worlds; R: real-log-derived contexts, not on-road tests.
 CL: sequential closed-loop execution.
 Matched: instruction variants share a scene or route context.
 Alt.: variants have distinct scored behavioral targets.
 Behavior check: instruction-specific acceptance beyond route-level metrics.
 Para.: explicit paraphrase evaluation.
 Paired success: both opposed, feasible requests must succeed.}
% }
\label{tab:benchmark-comparison}
\centering\footnotesize
\setlength{\tabcolsep}{2.2pt}
\renewcommand{\arraystretch}{1.15}
\begin{tabularx}{\linewidth}{@{}Xcc*{6}{c}@{}}
\toprule
& \multicolumn{2}{c}{Execution setting}
& \multicolumn{6}{c}{Instruction evaluation} \\
\cmidrule(lr){2-3}\cmidrule(l){4-9}
Benchmark / protocol
& World & CL
& \makecell{Language\\input}
& \makecell{Matched\\context}
& \makecell{Alt.\\targets}
& \makecell{Behavior\\check}
& \makecell{Para.\\test}
& \makecell{\textbf{Paired}\\\textbf{success}} \\
\midrule
nuPlan~\citeyear{caesar2021nuplan}
& R & \ltdyes & \ltdno & \ltdno & \ltdno
& \ltdno & \ltdno & \ltdno \\

Bench2Drive~\citeyear{jia2024bench2drive}
& S & \ltdyes & \ltdno & \ltdno & \ltdno
& \ltdno & \ltdno & \ltdno \\

NAVSIM v2~\citeyear{cao2025pseudo}
& R & \ltdno & \ltdno & \ltdno & \ltdno
& \ltdno & \ltdno & \ltdno \\
\midrule
LangAuto~\citeyear{shao2024lmdrive}
& S & \ltdyes & \ltdyes & \ltdno & \ltdno
& \ltdno & \ltdno & \ltdno \\

Action Dreaming~\citeyear{renz2025simlingo}
& S & \ltdno & \ltdyes & \ltdyes & \ltdyes
& \ltdyes & \ltdno & \ltdno \\

Drive4C~\citeyear{sohn2025drive4c}
& S & \ltdyes & \ltdyes & \ltdno & \ltdno
& \ltdyes & \ltdno & \ltdno \\

ICR-Drive~\citeyear{hamid2026icr}
& S & \ltdyes & \ltdyes & \ltdyes & \ltdno
& \ltdno & \ltdyes & \ltdno \\

% POINT~\citeyear{liu2026point}
% & R & \ltdyes & \ltdyes & \ltdno & \ltdno
% & \ltdyes & \ltdno & \ltdno \\
\midrule
\textbf{Language2Drive}
& \textbf{S+R} & \ltdyes & \ltdyes & \ltdyes & \ltdyes
& \ltdyes & \ltdyes & \ltdyes \\
\bottomrule
\end{tabularx}

\vspace{2pt}
% \begin{minipage}{\linewidth}
% \scriptsize
% S: synthetic worlds; R: real-log-derived contexts, not on-road tests.
% CL: sequential closed-loop execution.
% Matched: instruction variants share a scene or route context.
% Alt.: variants have distinct scored behavioral targets.
% Behavior check: instruction-specific acceptance beyond route-level metrics.
% Para.: explicit paraphrase evaluation.
% Paired success: both opposed, feasible requests must succeed.
% \end{minipage}
\end{table}
\textbf{VLA Models for Autonomous Driving.}
LMDrive and SimLingo connect visual observations and language to closed-loop driving, with SimLingo explicitly aligning language and actions~\citep{shao2024lmdrive,renz2025simlingo}. Recent developments include Qwen-Drive's unified perception--understanding--planning framework~\citep{zhou2026qwen}, Alpamayo~1.5's user-command-conditioned trajectory generation~\citep{wang2025alpamayo}, and Drive My Way's adaptation to persistent preferences and immediate style instructions~\citep{wang2026drive}. These methods motivate a separate evaluation question: can language reliably select and sustain the requested behavior among feasible alternatives?

\textbf{Language-Conditioned Driving Evaluation.}
nuPlan, Bench2Drive, and NAVSIM/v2 assess planning quality and safety~\citep{caesar2021nuplan,jia2024bench2drive,dauner2024navsim,cao2025pseudo}. LangAuto, and  Drive4C extend evaluation to language-guided navigation, capability-specific tasks, and open-ended instruction realization, respectively~\citep{shao2024lmdrive,sohn2025drive4c}. Action Dreaming evaluates alternative instruction-conditioned predictions within shared scenes~\citep{renz2025simlingo}, whereas ICR-Drive measures driving-score changes under matched instruction perturbations~\citep{hamid2026icr}. Complementing these protocols, \ltd\ requires \emph{both mutually exclusive, feasible requests} to satisfy their own spatial--temporal contracts in matched closed-loop runs, separating isolated completion from successful execution of both alternatives (Table~\ref{tab:benchmark-comparison}).

% \input{source/limitations}
% \vspace{-7pt}
\section{Conclusion}
% Do current driving VLAs reliably follow passenger instructions? In our evaluation, they sometimes complete individual requests, but rarely complete both feasible alternatives. Language2Drive measures this distinction with PCUs that hold the initial context fixed, machine-checkable trajectory criteria, and separate individual and paired metrics. Across closed-loop and open-loop tests, route advancement, partial completion, and visible language responses do not establish reliable instruction control. Evaluating selection, complete execution, and robustness across wording offers a concrete target for improving passenger control through language.
\vspace{2pt}
We introduce Language2Drive, a benchmark that makes reliable passenger control through language an explicit evaluation objective for autonomous-driving vision-language-action models. Its paired counterfactual framework combines matched initial conditions with machine-checkable spatial and temporal criteria, distinguishing individual instruction success from the ability to execute both requested alternatives. We instantiate this framework through an extended CARLA simulator with purpose-designed maps and curated reconstructed scenes in AlpaSim, complemented by open-loop evaluations developed from nuScenes, NAVSIM, and WOD-E2E. Our results reveal a substantial gap between driving performance and reliable instruction following. Across the six principal CARLA systems, 93.2\% of successful rollouts lack a successful paired counterpart, and strict paired success reaches at most 1.04\%. 
% Strong route advancement, partial completion, and visible responses to language frequently coexist with failures to fulfill the requested behavior. 
Success also remains fragile across paraphrases, while ordered, conditional, and reference-return programs expose persistent execution difficulties. These findings establish that producing a plausible driving trajectory is insufficient evidence of instruction following. By evaluating which alternative is selected, whether its requirements are fully executed, and whether success persists across equivalent expressions, Language2Drive provides a concrete basis for developing driving systems that passengers can direct consistently, precisely, and safely through language.

\label{main-text-end}
\FloatBarrier
\phantomsection
\addcontentsline{toc}{section}{References}
% Pre-generated bibliography: no BibTeX step is needed for this upload package.

\clearpage
\appendix
\section{Formal properties}
\label{app:theory}
\subsection{What a paired test rules out}
For PCU $i$, define the strict acceptance set for request $j$ by
\[
\mathcal A_{i,j}=\{\tau:O_{i,j}(\tau)(1-O_{i,1-j}(\tau))S_{i,j}(\tau)P_{i,j}(\tau)T_{i,j}(\tau)H_{i,j}(\tau)=1\}.
\]
The two sets are disjoint because each requires the paired-alternative criterion to fail. This remains true even if a permissive trajectory can satisfy both raw target predicates.

\begin{proposition}[Instruction-invariant execution]
\label{prop:invariant}
With identical non-language initialization and fully coupled execution randomness, an instruction-invariant system has zero strict CFPS and at most 50 SISR on balanced, mutually exclusive arm acceptances.
\end{proposition}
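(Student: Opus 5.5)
The plan is to formalize ``instruction-invariant'' and reduce both claims to the disjointness of $\mathcal A_{i,0}$ and $\mathcal A_{i,1}$ noted just above. An instruction-invariant system is one whose executed trajectory, as a random variable, depends on the initialization and the execution randomness $\omega$ but not on the language input. With identical non-language initialization and fully coupled randomness (the same $\omega$ in both arms), the two arms of PCU $i$ under expression $r$ produce the \emph{same} trajectory: $\tau_{ir,0}=\tau_{ir,1}=\tau(\omega)$. I would first check this closed-loop. Later observations depend only on past actions and the shared environment, so an induction over time steps shows that the two rollouts coincide step by step. This is because the policy's action at each step is the same function of identical histories and identical $\omega$.

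\emph{Zero CFPS.} Since $s_{ir,j}=\mathbf 1[\tau_{ir,j}\in\mathcal A_{i,j}]$, we have $s_{ir,0}s_{ir,1}=\mathbf 1[\tau(\omega)\in\mathcal A_{i,0}\cap\mathcal A_{i,1}]$. Disjointness follows from the factors $O_{i,j}(1-O_{i,1-j})$: membership in $\mathcal A_{i,0}$ forces $O_{i,1}=0$, while membership in $\mathcal A_{i,1}$ forces $O_{i,1}=1$. Hence the product is zero for every $\omega$, $i$ and $r$. Averaging with $\langle\cdot\rangle$ then gives $\mathrm{CFPS}=0$ pointwise, not merely in expectation.

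\emph{SISR bound.} Pointwise, $s_{ir,0}+s_{ir,1}\le 1$ because the trajectory lies in at most one of two disjoint sets. Therefore $\frac{s_{ir,0}+s_{ir,1}}{2}\le\frac12$, and by Equation~\ref{eq:pair} we get $\mathrm{SISR}\le 50$. Equivalently, Equation~\ref{eq:one-sided} with $\mathrm{CFPS}=0$ gives $\mathrm{SISR}=\tfrac12 q_{\mathrm{one}}\le 50$. The role of ``balanced'' here is only to interpret the bound: each PCU contributes exactly one acceptance opportunity per pair, and both arms count equally in the average. I would remark that the bound is attained by a system that always lands in one acceptance set, as in the left-turn example.

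The main obstacle is the coupling step: making precise that identical initialization plus coupled randomness yields identical closed-loop trajectories. This needs the evaluator's privately held condition-specific information (destinations, reference paths) to be invisible to the system, which is guaranteed by the public/private compilation of Section~\ref{sec:construction}. It also needs the simulator to be deterministic given the seed and the action sequence. I would state these as explicit hypotheses. Without full coupling one only gets the expectation version, $\mathbb E[s_0s_1]\le \min(p_0,p_1)$ rather than zero, which I would note as the reason the hypothesis is required.
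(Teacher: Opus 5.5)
Your proposal is correct and follows the paper's proof essentially step for step. Identical inputs plus coupled randomness give, by induction over time, a single shared trace for both arms; the alternative-rejection factor makes $\mathcal A_{i,0}$ and $\mathcal A_{i,1}$ disjoint, so each pair contributes zero to CFPS and at most one half to SISR, with the bound attained by a default behavior satisfying exactly one request.
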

\begin{proof}
An instruction-invariant system receives the same effective inputs at initialization. The same control law and coupled randomness give identical actions and simulator transitions. Induction over time therefore gives the same executed trace $\tau$ under both requests. If both arms succeeded, $\tau$ would belong to the disjoint sets $\mathcal A_{i,0}$ and $\mathcal A_{i,1}$, a contradiction. At most one arm can succeed, so every pair contributes zero to CFPS and at most one half to SISR. A default behavior satisfying exactly one request in every pair attains the SISR bound.
\end{proof}

Coupling is a substantive assumption. An instruction-independent policy that independently chooses two valid behaviors with probabilities $q$ and $1-q$ in the two runs has expected paired success $q(1-q)$, up to 25\%, despite ignoring language. Coupling that behavior-selection randomness instead produces the same choice in both runs and zero CFPS. A shared seed label does not guarantee identical latent randomness across asynchronous workers or decoding implementations. The empirical scores establish execution of matched alternatives under the reported system; instruction-ablated repetitions would additionally test causal language dependence.

\subsection{Individual--paired identities and bounds}
For $x,y\in\{0,1\}$,
\[
\frac{x+y}{2}=xy+\frac{|x-y|}{2}.
\]
Averaging with the same PCU/expression weights proves Equation~\ref{eq:one-sided}. For $x,y\in[0,1]$,
\[
\min(x,y)=\frac{x+y-|x-y|}{2},
\]
which proves the SCS--PCS imbalance identity in Section~\ref{sec:metrics}. Since
\[
xy\le\frac{x+y}{2},\quad xy\ge x+y-1
\quad (x,y\in\{0,1\}),
\]
\[
\min(x,y)\le\frac{x+y}{2},\quad \min(x,y)\ge x+y-1
\quad (x,y\in[0,1]),
\]
each individual/paired metric family $(U,V)$ obeys
\begin{equation}
\max(0,2U-100)\le V\le U.
\label{eq:pair-bounds}
\end{equation}
This applies to $(\SISR,\CFPS)$, $(\SemISR,\SemCFPS)$, and $(\mathrm{SCS},\mathrm{PCS})$ when the instances and weights agree. Strict success implies the closed-loop relaxed semantic verdict, hence $\SISR\le\SemISR$ and $\CFPS\le\SemCFPS$. The graded map is a separate rule, so no analogous ordering with SemISR is assumed. PCS reduces to the corresponding binary paired rate only when its completion scores are that same binary success indicator.

\subsection{All-expression paired success}
\label{app:paraphrase}
CARLA fixes three expressions per arm and one seed. Let $b_{ir}=s_{ir,0}s_{ir,1}$. We report
\[
W=\frac{100}{N}\sum_i\min_{r\in\{0,1,2\}}b_{ir}.
\]
Both arms must succeed under every tested expression. This is not the minimum of three dataset-average CFPS values. It satisfies $W\le\CFPS$ for the uniform expression weighting used here. A zero $W$ is compatible with occasional successful pairs; RealSim's one-expression evaluation does not estimate wording robustness.

% Requires amsmath, amssymb, amsthm.
% Add \newtheorem{proposition}{Proposition} to the preamble once.
\subsection{Properties of the L2D aggregation}
\label{app:l2d-properties}
Let $a=(a_1,a_2,a_3)\in[0,100]^3$,
$M=(a_1+a_2+a_3)/3$, and
$L(a)=\bigl(\sum_{k=1}^{3}\sqrt{a_k}/3\bigr)^2$.
For a fixed driving-credit cap $b\in(0,100)$, define
\[
 F_b(a,d)=d\left[b+(100-b)\frac{L(a)}{100}\right],\qquad d\in[0,1].
\]
The reported score fixes $b=1$.

\begin{proposition}[Balance without a family-level zero veto]
For every $a\in[0,100]^3$,
\[
L(a)=M-\frac19\sum_{k<h}(\sqrt{a_k}-\sqrt{a_h})^2,
\qquad \frac{M}{3}\le L(a)\le M.
\]
Thus $L(a)=0$ if and only if all three families are zero.
For a fixed $M$, the maximum is attained precisely when all families are equal.
\end{proposition}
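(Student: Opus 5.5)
The plan is to substitute $x_k=\sqrt{a_k}\ge 0$, so that $a_k=x_k^2$, $M=\frac13\sum_k x_k^2$ and $L(a)=\frac19\bigl(\sum_k x_k\bigr)^2$. The identity then follows by expanding the square,
\[
\Bigl(\sum_k x_k\Bigr)^2=\sum_k x_k^2+2\sum_{k<h}x_kx_h,
\]
and using $2x_kx_h=x_k^2+x_h^2-(x_k-x_h)^2$. Summed over the three pairs $k<h$, each $x_k^2$ occurs twice, so
\[
\Bigl(\sum_k x_k\Bigr)^2=3\sum_k x_k^2-\sum_{k<h}(x_k-x_h)^2 .
\]
Dividing by $9$ gives $L(a)=M-\frac19\sum_{k<h}(\sqrt{a_k}-\sqrt{a_h})^2$, as claimed.

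For the bounds, the upper bound $L(a)\le M$ is immediate from the identity, since the subtracted sum is nonnegative. For the lower bound, nonnegativity of the $x_k$ gives $\bigl(\sum_k x_k\bigr)^2\ge\sum_k x_k^2$, because all cross terms $x_kx_h$ are nonnegative. Hence $L(a)\ge\frac19\sum_k x_k^2=\frac{3M}{9}=M/3$.

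The zero characterization follows from the two-sided bound $M/3\le L(a)\le M$, so $L(a)=0$ if and only if $M=0$. Since every $a_k\ge 0$, $M=0$ holds exactly when $a_1=a_2=a_3=0$. This is the ``no family-level zero veto'' point: a single zero family does not force $L(a)=0$, because $L(a)\ge M/3>0$ whenever any family is positive.

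For the maximum at fixed $M$, the identity shows $L(a)=M$ exactly when every $(\sqrt{a_k}-\sqrt{a_h})^2$ vanishes. That means $\sqrt{a_1}=\sqrt{a_2}=\sqrt{a_3}$, equivalently $a_1=a_2=a_3$, using injectivity of the square root on $[0,\infty)$. The equal point $a_k=M$ lies in $[0,100]^3$ because $M\le 100$, so the supremum $M$ is attained, and attained only there.

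No step is a real obstacle. The only care needed is the coefficient bookkeeping in the pairwise expansion, namely that each square appears twice, giving the factor $3$ and the $1/9$. Beyond that, one must remember that nonnegativity of the $x_k$ is used in the lower bound.
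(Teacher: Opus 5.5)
Your proof is correct and follows essentially the same route as the paper: expand $(\sum_k\sqrt{a_k})^2$ to get the identity, and use the nonnegative cross terms to get $L(a)\ge M/3$. The only difference is minor. The paper invokes Cauchy--Schwarz for $L(a)\le M$ and its equality case, whereas you read both directly off the identity and also check that the equal point $(M,M,M)$ lies in $[0,100]^3$, which makes your argument slightly more self-contained.
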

\begin{proof}
Expand $(\sum_k\sqrt{a_k})^2$ to obtain the identity.
Its nonnegative cross terms give $L(a)\ge\sum_k a_k/9=M/3$.
Cauchy--Schwarz gives $(\sum_k\sqrt{a_k})^2\le3\sum_k a_k$, hence $L(a)\le M$.
Equality in the upper bound requires equal square roots, proving the balance statement.
\end{proof}

\begin{proposition}[Monotonicity and weaker-family priority]
$L$ is continuous and strictly increasing in each family coordinate.
For positive $a_i,a_j$,
\[
\frac{\partial L}{\partial a_i}
=\frac{\sum_k\sqrt{a_k}}{9\sqrt{a_i}},\qquad
\frac{\partial L/\partial a_i}{\partial L/\partial a_j}
=\sqrt{\frac{a_j}{a_i}}.
\]
Thus an equal marginal increase receives greater weight in the weaker family.
At fixed $d>0$, $F_b$ preserves strict coordinatewise monotonicity.
\end{proposition}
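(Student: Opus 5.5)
The plan is direct differentiation plus the chain rule. Throughout, write $\Sigma=\sum_k\sqrt{a_k}$, so that $L(a)=\Sigma^2/9$.

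\emph{Continuity.} Each map $a_k\mapsto\sqrt{a_k}$ is continuous on $[0,100]$. Hence $\Sigma$ is continuous, and so is its square $L$.

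\emph{Partial derivatives.} For $a_i>0$ the chain rule gives
\[
\frac{\partial L}{\partial a_i}=\frac{2\Sigma}{9}\cdot\frac{1}{2\sqrt{a_i}}=\frac{\Sigma}{9\sqrt{a_i}}.
\]
This matches the stated formula. Dividing the expressions for $i$ and $j$, the common factor $\Sigma/9$ cancels, leaving $\sqrt{a_j}/\sqrt{a_i}$. Because this ratio exceeds $1$ exactly when $a_i<a_j$, an equal marginal increase is weighted more heavily in the weaker family.

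\emph{Strict monotonicity.} This step needs care, because the derivative formula does not cover boundary points with $a_i=0$. I would avoid derivatives and argue directly. Fix the other coordinates and take $0\le x<y\le100$ in coordinate $i$. Then $\sqrt{x}<\sqrt{y}$, so $\Sigma$ strictly increases. Since $\Sigma\ge0$ and $t\mapsto t^2$ is strictly increasing on $[0,\infty)$, $L$ strictly increases as well. This holds even when every other coordinate is zero, where $L$ rises from $0$ to $y/9>0$. Where all coordinates are positive, the derivative is also positive, which is consistent with this argument.

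\emph{Monotonicity of $F_b$.} At fixed $d>0$ and $b\in(0,100)$, the map $L\mapsto d[b+(100-b)L/100]$ is affine with slope $d(100-b)/100>0$. Composing a strictly increasing affine map with the coordinatewise strictly increasing $L$ keeps $F_b$ strictly increasing in each coordinate.

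I expect the only real subtlety to be the boundary case $a_i=0$. There the derivative is infinite or undefined, so the direct comparison in the strict-monotonicity step is essential rather than an optional alternative.
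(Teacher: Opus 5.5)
Your proposal is correct and follows the paper's proof. Both get continuity and strict monotonicity from the strictly increasing square root composed with squaring on $[0,\infty)$, including at $a_i=0$ where no finite derivative is needed; both obtain the displayed partials by the chain rule; and both transfer monotonicity to $F_b$ through the positive multiplier $(100-b)d/100$. (A minor slip: with the other coordinates zero, $L$ rises from $x/9$ to $y/9$, which starts at $0$ only when $x=0$.)
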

\begin{proof}
Each square root is continuous and strictly increasing, and the final square is strictly
increasing on nonnegative inputs. Differentiation gives the displayed expressions.
The multiplier $(100-b)d/100$ is positive when $d>0$.
The continuity and monotonicity statements also hold at zero, without requiring a finite derivative there.
\end{proof}

\begin{proposition}[Bounded driving-only credit and exact separation]
The composite satisfies $0\le F_b\le100$, $F_b(a,0)=0$, and
$F_b((100,100,100),1)=100$.
If $a=0$, then $F_b(0,d)=bd\le b$; distinct driving scores remain distinguishable.
Moreover,
\[
F_b-bd=(100-b)\frac{L(a)}{100}d,
\qquad
F_b\le\min\left\{100d,\ b+\left(1-\frac b{100}\right)L(a)\right\}.
\]
In particular, $F_b>b$ implies nonzero language completion credit.
\end{proposition}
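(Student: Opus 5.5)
The plan is to work directly from the definition $F_b(a,d)=d\,[b+(100-b)L(a)/100]$, using the bounds $0\le L(a)\le M\le 100$ from the preceding proposition. Since each $a_k\in[0,100]$, we have $M\le 100$ and hence $L(a)\in[0,100]$.

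\textbf{Range and endpoints.} The bracket $b+(100-b)L(a)/100$ lies between $b$ and $b+(100-b)=100$. Multiplying by $d\in[0,1]$ gives $0\le F_b\le 100$. Substituting $d=0$ gives $F_b(a,0)=0$. At $a=(100,100,100)$ we get $L=\bigl(3\cdot 10/3\bigr)^2=100$, so the bracket equals $100$ and $F_b=100$ when $d=1$.

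\textbf{The case $a=0$.} Here $L(0)=0$, so $F_b(0,d)=bd\le b$. The map $d\mapsto bd$ is strictly increasing because $b>0$, so distinct driving scores yield distinct composites.

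\textbf{Separation identity and upper bound.} Expanding the definition gives $F_b=bd+(100-b)dL(a)/100$, which is the displayed identity $F_b-bd=(100-b)\frac{L(a)}{100}d$. For the bound $F_b\le 100d$, use that the bracket is at most $100$. For the bound $F_b\le b+(1-b/100)L(a)$, use $d\le 1$ together with the nonnegativity of the bracket. Taking the minimum of the two bounds gives the stated inequality.

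\textbf{Consequence.} Suppose $F_b>b$. If $L(a)=0$, the min-bound would give $F_b\le b$, a contradiction. Hence $L(a)>0$. By the previous proposition, $L(a)=0$ exactly when all three families are zero, so some $a_k>0$, i.e., there is nonzero language completion credit.

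No step is a real obstacle. The only points that need care are checking that $L(a)\le 100$, which uses $M\le 100$ via the earlier upper bound $L\le M$, and keeping track of the signs of $d$ and of $100-b$.
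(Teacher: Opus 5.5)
Your proposal is correct and follows the paper's proof. Like the paper, you use the preceding proposition to get $L(a)\in[0,100]$, so the bracket $b+(100-b)L(a)/100$ lies in $[b,100]$. The range, endpoints, and decomposition then follow by substitution, and the two upper bounds come from the bracket being at most $100$ and from $d\le 1$. You also spell out the strict monotonicity of $d\mapsto bd$ and the contrapositive argument for the final claim, which the paper leaves implicit.
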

\begin{proof}
By the preceding bounds, $0\le L(a)/100\le1$.
Consequently the bracket in $F_b$ lies in $[b,100]$.
Substitution proves the boundary statements and decomposition.
The two upper bounds follow respectively from the bracket being at most $100$ and from $d\le1$.
\end{proof}

\begin{proposition}[Limited perturbation of the multiplicative score]
Let $T=L(a)d$ be the score without driving-only credit. Then
\[
F_b=\left(1-\frac b{100}\right)T+bd,
\qquad 0\le F_b-T=bd\left(1-\frac{L(a)}{100}\right)\le b.
\]
Therefore any pairwise score gap greater than $b$ under $T$ cannot be reversed by this change.
Also $|F_b-F_{b'}|\le|b-b'|$ for fixed $a,d$.
\end{proposition}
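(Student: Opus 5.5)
The plan is to rewrite $F_b$ directly in terms of $T=L(a)d$ and then read off each claim. Expanding the definition gives
\[
F_b=bd+(100-b)\frac{L(a)d}{100}=\Bigl(1-\frac b{100}\Bigr)T+bd,
\]
which is the first identity. Subtracting $T$ yields
\[
F_b-T=bd-\frac{b}{100}T=bd\Bigl(1-\frac{L(a)}{100}\Bigr).
\]
Nonnegativity and the upper bound $b$ then come from two facts. First, $0\le L(a)\le M\le 100$, by the balance proposition. Second, $d\in[0,1]$. Together these put the factor $d(1-L(a)/100)$ in $[0,1]$.

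For the non-reversal claim, take two systems with scores $T,T'$ and $T-T'>b$. By the bound just proved, $F_b\ge T$ and $F_b'\le T'+b$. Hence
\[
F_b-F_b'\ge T-T'-b>0,
\]
so the ordering is preserved. If the reader prefers a symmetric statement, I will remark that the same argument applies with the roles of the two systems swapped. This also covers gaps of magnitude greater than $b$ in either direction.

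For the Lipschitz statement in $b$, fix $a$ and $d$ and note that $F_b$ is affine in $b$:
\[
F_b=Ld+b\,d\Bigl(1-\frac{L}{100}\Bigr).
\]
Therefore
\[
|F_b-F_{b'}|=|b-b'|\,d\,(1-L/100)\le|b-b'|,
\]
again using $0\le d(1-L/100)\le 1$.

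There is essentially no obstacle here; the computations are one-line substitutions. The only point needing care is invoking $L(a)\le 100$, which the earlier balance proposition supplies via $L\le M\le100$. It is also worth checking that the bounds remain valid at the endpoints $d=0$ and $L=100$, where the perturbation vanishes.
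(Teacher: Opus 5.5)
Your proposal is correct and follows essentially the same route as the paper: expand $F_b$ in terms of $T$, and bound the additive term $bd(1-L(a)/100)$ in $[0,b]$ to show that gaps larger than $b$ cannot be reversed. For the cap-sensitivity bound you use affinity in $b$ with slope $d(1-L(a)/100)\in[0,1]$; the paper cites the derivative $\partial F_b/\partial b$ instead, which amounts to the same argument.
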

\begin{proof}
The identities follow by expansion. For systems $A,B$, write
$F_{b,A}=T_A+e_A$ and $F_{b,B}=T_B+e_B$, with $e_A,e_B\in[0,b]$.
If $T_A-T_B>b$, then $F_{b,A}-F_{b,B}\ge T_A-T_B-b>0$.
Finally $\partial F_b/\partial b=d(1-L(a)/100)\in[0,1]$ proves the cap-sensitivity bound.
\end{proof}

\begin{proposition}[Unique fusion under explicit endpoint and linearity conditions]
Let $G(\ell,d)$ obey $G(\ell,d)=dG(\ell,1)$, let $G(\ell,1)$ be affine in $\ell$,
and require $G(0,1)=b$ and $G(1,1)=100$.
Then necessarily $G(\ell,d)=d[b+(100-b)\ell]$.
\end{proposition}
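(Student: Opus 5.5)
The argument is a direct functional-equation computation in two steps: first pin down the slice $d=1$, then extend to all $d$ by the homogeneity hypothesis.

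\emph{Slice $d=1$.} Since $G(\ell,1)$ is affine in $\ell$, I write $G(\ell,1)=\alpha+\beta\ell$ for constants $\alpha,\beta$ independent of $\ell$. The endpoint conditions give two linear equations:
\[
G(0,1)=\alpha=b,\qquad G(1,1)=\alpha+\beta=100 .
\]
Hence $\alpha=b$ and $\beta=100-b$. These coefficients are uniquely determined because an affine function of one variable is fixed by its values at two distinct points, here $0$ and $1$. So $G(\ell,1)=b+(100-b)\ell$ for every admissible $\ell$.

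\emph{Extension to all $d$.} The scaling hypothesis $G(\ell,d)=dG(\ell,1)$ now gives $G(\ell,d)=d[b+(100-b)\ell]$ for all $(\ell,d)$ in the domain. This proves the necessity claim.

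For completeness I would also note sufficiency, which follows by substitution. The map $d[b+(100-b)\ell]$ is linear in $d$ and affine in $\ell$, and it takes the values $b$ and $100$ at $(0,1)$ and $(1,1)$. The conditions therefore characterize $G$ exactly.

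Finally, I would tie this to the reported score. Taking $\ell=L(a)/100\in[0,1]$, which is justified by the bound $L(a)\le M\le 100$ from the first proposition of this subsection, recovers $F_b(a,d)$. With $b=1$ and $d$ equal to the normalized driving term, this is exactly Equation~\ref{eq:l2d} up to the percentage scaling.

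There is essentially no obstacle. The one point needing care is that ``affine in $\ell$'' is taken on a domain containing both $0$ and $1$, for instance $\ell\in[0,1]$, so that both endpoint conditions are meaningful and determine the two coefficients. I would state this domain assumption explicitly at the start.
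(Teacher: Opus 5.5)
Your proof is correct and follows the paper's argument: write $G(\ell,1)$ as an affine function $u+v\ell$, fix $u=b$ and $v=100-b$ from the endpoint conditions at $\ell=0$ and $\ell=1$, and then extend to all $d$ using $G(\ell,d)=dG(\ell,1)$. The sufficiency check and the link to the reported score are harmless additions; the latter is in fact exact with $d=D_t/100$.
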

\begin{proof}
Write $G(\ell,1)=u+v\ell$. The endpoint conditions imply $u=b$ and $v=100-b$.
The first condition then determines $G(\ell,d)$ for every $d$.
\end{proof}
These results establish the specified range, balance, partial-credit, and cap properties.
They do not identify a universally optimal $b$, prove causal use of language,
or replace the individual and paired success metrics.

\subsection{Complementarity and fixed evaluation scale}
Write $\ell=L(a)/100$. Differentiating $F_b=d[b+(100-b)\ell]$ gives
\[
\frac{\partial F_b}{\partial\ell}=(100-b)d,\qquad
\frac{\partial F_b}{\partial d}=b+(100-b)\ell,\qquad
\frac{\partial^2F_b}{\partial\ell\partial d}=100-b>0.
\]
Language improvement contributes more when driving quality is higher, and driving improvement contributes more when language completion is higher. Driving affects both terms; the one-point cap is not a claim that driving has only 1\% weight. The arithmetic driving component is not a hard veto on each safety metric, and L2D is not a rollout-level joint success probability. PSM is measured on independent routes.

For fixed tasks, scorer, $q=1/2$, and $b=1$, a system's score depends only on its own component metrics. Adding other systems does not change it. Nonlinear aggregation is applied \emph{after} the population metrics are computed; averaging a per-PCU nonlinear score would define a different metric. The proofs establish the chosen scoring properties, not statistical significance, causal language use, or a universal ordering of driving systems.

\clearpage
\section{Metric Definitions and Properties}
\label{app:metrics}

\subsection{Evaluation units and strict verdicts}
\label{app:metric-units}

For PCU $i$, expression $r$, and arm $j$, let $\tau_{ir,j}$
denote the executed trajectory.
The two arms share their initial non-language context but are
executed separately under their respective instructions.
Let $O_{i,j}(\tau)$ indicate complete satisfaction of the requested
task criterion.
Using the verifier's safety, progress, timing, and stability gates,
the strict verdict is
\begin{equation}
\begin{aligned}
s_{ir,j}
={}&
O_{i,j}(\tau_{ir,j})
\bigl[1-O_{i,1-j}(\tau_{ir,j})\bigr]
\\
&\times
S_{i,j}(\tau_{ir,j})
P_{i,j}(\tau_{ir,j})
T_{i,j}(\tau_{ir,j})
H_{i,j}(\tau_{ir,j}).
\end{aligned}
\label{eq:metric-strict}
\end{equation}
Each factor is binary, and the gates use the applicable
track-specific rules.
Alternative rejection makes the two strict acceptance sets disjoint.
Persistent requirements apply to a reached, nonempty interval that
must be completed; stopping before that interval does not satisfy
the requirement.
The task specifications retain the exact spatial tolerances,
deadlines, dwell requirements, and event-ordering rules.

All language scores use the expression-then-PCU averaging operator
defined in Section~\ref{sec:metrics}.
The scheduled pair set is fixed within each protocol, and
unsuccessful executions remain in its denominator.
Equal PCU weighting prevents a task with more expressions from
receiving greater weight.
It is equivalent to pooling all paired instances only when each
PCU has the same number of expressions.
The evaluated CARLA protocol has $N=64$ and $R_i=3$; RealSim has
$N=30$ and $R_i=1$.
Duplicate retries and smoke tests do not create additional
evaluation instances.

\subsection{Relaxed proxies and graded completion}
\label{app:metric-partial}

\paragraph{Relaxed semantic proxies.}
Let $\widetilde O_{i,j}$ denote a predefined task-specific
relaxation of the requested criterion.
The relaxed verdict and its aggregate rates are
\begin{equation}
\begin{aligned}
m_{ir,j}
&=
\mathbf{1}\!\left\{
O_{i,j}(\tau_{ir,j})=1
\ \text{or}\
\widetilde O_{i,j}(\tau_{ir,j})=1
\right\},
\\
\mathrm{SemISR}
&=
100\left\langle
\frac{m_{ir,0}+m_{ir,1}}{2}
\right\rangle,
\qquad
\text{Sem-CFPS}
=
100\langle m_{ir,0}m_{ir,1}\rangle.
\end{aligned}
\label{eq:metric-relaxed}
\end{equation}
These are deterministic task rules, not free-form language-model
judgments.
CARLA proxies can relax target geometry, heading, or stopped dwell;
sequence and reference proxies retain ordered event checks.
RealSim uses its own task-specific rules: some criteria relax
timeliness or dwell, whereas scoped speed tasks retain complete
entry-to-exit traversal and compliance at every active sample.
The proxies do not impose the same universal alternative-rejection
and execution-gate conjunction as Equation~\ref{eq:metric-strict}.
Passing both relaxed criteria therefore need not demonstrate
exclusive, complete execution of both requests.
A relaxed verdict neither terminates a rollout nor reduces its
opportunity to reach strict success.

\paragraph{Graded completion.}
The task-specific map $C_{i,j}$ assigns $c_{ir,j}\in[0,1]$ to
the execution of its own request.
Equation~\ref{eq:graded} averages the two arms for SCS and takes
the weaker arm for PCS.
These scores preserve partial completion without interpreting it
as a binary success probability.
The graded map is separate from the relaxed predicate: neither
$\mathrm{SCS}\leq\mathrm{SemISR}$ nor the reverse is assumed.
PCS reduces to CFPS only when the per-arm graded scores equal
the strict binary verdicts themselves.
Task-specific maps and tolerances remain part of the evaluator
specification.

\subsection{Individual--paired identities and interpretation}
\label{app:metric-identities}

For binary $x,y$,
\begin{equation}
\frac{x+y}{2}
=
xy+\frac{|x-y|}{2}.
\label{eq:metric-binary-identity}
\end{equation}
Applying the same averaging weights to both sides proves
Equation~\ref{eq:one-sided}.
The SISR--CFPS difference is thus exactly half the percentage of
one-sided paired instances.
For graded scores $x,y\in[0,1]$,
\begin{equation}
\min(x,y)
=
\frac{x+y-|x-y|}{2},
\end{equation}
which yields
\begin{equation}
\mathrm{SCS}-\mathrm{PCS}
=
50\langle|c_{ir,0}-c_{ir,1}|\rangle.
\label{eq:metric-graded-gap}
\end{equation}
This difference quantifies completion imbalance between alternatives,
not a particular internal failure mechanism.

\paragraph{Bounds.}
For each individual--paired family $(U,V)$,
\begin{equation}
\max(0,2U-100)
\leq V\leq U.
\label{eq:metric-bounds}
\end{equation}
This follows from $xy\leq(x+y)/2$ and $xy\geq x+y-1$ for
binary verdicts, and the analogous inequalities for $\min(x,y)$
on $[0,1]^2$.
It applies to
$(\mathrm{SISR},\mathrm{CFPS})$,
$(\mathrm{SemISR},\text{Sem-CFPS})$, and
$(\mathrm{SCS},\mathrm{PCS})$
when their instances and weights agree.
Strict success implies the relaxed verdict, giving
$\mathrm{SISR}\leq\mathrm{SemISR}$ and
$\mathrm{CFPS}\leq\text{Sem-CFPS}$.
These inequalities do not extend automatically to the separately
defined graded scores.

\paragraph{The denominator of one-sided success.}
The percentage of \emph{pairs} with one successful arm,
$q_{\mathrm{one}}$, differs from the percentage of
\emph{successful arms} lacking a successful counterpart.
With equal instance weights, let $n_2$ and $n_1$ count pairs
with two and one successful arms.
There are $2n_2+n_1$ successful arms, of which $n_1$ are unpaired.
More generally, under the same metric weights and
$\mathrm{SISR}>0$, the latter percentage is
\begin{equation}
100\frac{q_{\mathrm{one}}}{2\,\mathrm{SISR}}
=
100\left(
1-\frac{\mathrm{CFPS}}{\mathrm{SISR}}
\right).
\label{eq:metric-unpaired-share}
\end{equation}
This is an interpretation of the existing success decomposition,
not an additional primary metric.
It is undefined when no instruction succeeds.
For pooled counts, the numerator and denominator must be pooled
before taking the ratio; averaging per-system percentages generally
gives a different quantity.

\subsection{Consistency across expressions}
\label{app:metric-wording}

% Let $b_{ir}=s_{ir,0}s_{ir,1}$.
% Equation takes the minimum over
% expressions \emph{within each PCU} before averaging over PCUs.
% Consequently,
% \begin{equation}
% 0\leq W\leq\mathrm{CFPS}.
% \end{equation}
% $W$ is not the minimum of the three dataset-average CFPS values:
% different expressions can succeed on different PCUs.
% A zero $W$ is compatible with occasional paired successes, but
% indicates that none persists across all three tested expressions.

\paragraph{Outcome changes among successful arms.}
For arm $(i,j)$, define
\begin{equation}
\begin{aligned}
f_{i,j}
&=
\mathbf{1}\!\left\{
0<\sum_{r=0}^{2}s_{ir,j}<3
\right\},
\\
a_{i,j}
&=
\mathbf{1}\!\left\{
\sum_{r=0}^{2}s_{ir,j}>0
\right\}.
\end{aligned}
\end{equation}
We report
$100\,\frac{\sum_{i,j}f_{i,j}}{\sum_{i,j}a_{i,j}}$
as Flip/any.
The percentage is not applicable when no arm succeeds.
Conditioning on at least one success separates fragile successes
from consistently unsuccessful arms.
These statistics describe execution under the tested wordings.
With one evaluation seed, they do not isolate wording effects
from residual execution variability.

\subsection{Track-specific advancement and safety}
\label{app:metric-driving}

\paragraph{CARLA core advancement.}
For the set $\mathcal V$ of valid core traces, let
$\pi_v(t)\in[0,1]$ be normalized projection onto the task-specific
private reference path.
Then
\begin{equation}
\mathrm{DriveScore}
=
\frac{100}{|\mathcal V|}
\sum_{v\in\mathcal V}\max_t\pi_v(t).
\label{eq:metric-carla-drive}
\end{equation}
The maximum records the furthest attained reference progress;
it does not enforce target exclusivity, ordering, dwell, or the
complete instruction contract.
This valid-trace denominator differs conceptually from the
scheduled denominator for language scores.
All seven displayed CARLA configurations have 384 valid core traces.

\paragraph{RealSim advancement and safety.}
Let $\lambda_v$ be the length of the frozen arm-specific reference
path truncated at the task deadline $T_v$.
For projected arc length $a_v(t)$ and distance $e_v(t)$ to that
path, define
\begin{equation}
\begin{aligned}
g_v
&=
\operatorname{clip}\!\left(
\frac{
\max\bigl(
\{0\}\cup
\{a_v(t):t\leq T_v,\ e_v(t)\leq5\,\mathrm m\}
\bigr)
}{
\lambda_v
},
0,1
\right),
\\
\mathrm{DrivingScore}
&=
\frac{100}{60}\sum_{v=1}^{60}g_v.
\end{aligned}
\label{eq:metric-realsim-drive}
\end{equation}
Failed or unavailable slots contribute no advancement credit
rather than being removed from the fixed denominator.
Safety separately records collision/off-road-free execution
under scoring v0.2.
The strict Progress gate is not this continuous score:
RealSim uses a 10\,m threshold over the available rollout,
including the pre-instruction segment, while CARLA's default
gate requires at least 0.05 evaluator progress.
CARLA's strict safety gate also excludes red-light events;
RealSim's requested speed constraints remain language criteria.

\paragraph{Independent driving routes.}
FDQ evaluates 24 independent routes, one per ability group.
Its strict pass requires a Completed/Perfect status and no
recorded infractions.
The accompanying continuous score is
\begin{equation}
\mathrm{DS}_v
=
\mathrm{RC}_v\mathrm{IP}_v,
\qquad
\mathrm{DS}
=
\frac{1}{24}
\sum_{v=1}^{24}\mathrm{RC}_v\mathrm{IP}_v.
\label{eq:metric-fdq-driving}
\end{equation}
RC and DS use a 0--100 scale; the infraction penalty IP uses 0--1.
Multiplying mean RC by mean IP does not in general recover mean DS.
These routes supply complementary driving diagnostics, not
additional instruction PCUs.

\subsection{The L2D summary and its properties}
\label{app:metric-l2d}

\paragraph{Definition.}
L2D combines already aggregated component metrics for compact
within-track reporting.
Define the three language-family scores and their balanced aggregate:
\begin{equation}
\begin{aligned}
a_1
&=
\tfrac12(\mathrm{SISR}+\mathrm{CFPS}),
\\
a_2
&=
\tfrac12(\mathrm{SemISR}+\text{Sem-CFPS}),
\\
a_3
&=
\tfrac12(\mathrm{SCS}+\mathrm{PCS}),
\\
L_{\mathrm{bal}}
&=
\left(
\frac{\sqrt{a_1}+\sqrt{a_2}+\sqrt{a_3}}{3}
\right)^2.
\end{aligned}
\label{eq:app-metric-lbal}
\end{equation}
All $a_k$ and $L_{\mathrm{bal}}$ lie in $[0,100]$.
The track-specific driving component is
\begin{equation}
D_t
=
\begin{cases}
\tfrac12(\mathrm{PSM}+\mathrm{DriveScore}),
& t=\mathrm{CARLA},
\\
\tfrac12(\mathrm{Safety}+\mathrm{DrivingScore}),
& t=\mathrm{RealSim},
\end{cases}
\end{equation}
and the reported summary is
\begin{equation}
\mathrm{L2D}_t
=
\frac{D_t}{100}
\bigl(1+0.99L_{\mathrm{bal}}\bigr).
\label{eq:app-l2d}
\end{equation}
The square-root mean retains partial credit while favoring balance
across language families.
With zero credit in all three language families, L2D is at most
one point.
This is a summary of distinct measurements, not a substitute for
strict success or a rollout-level joint-success probability.

\paragraph{Balance and monotonicity.}
Write $L(\mathbf a)=L_{\mathrm{bal}}$ and
$M=(a_1+a_2+a_3)/3$.
Expanding the square yields
\begin{equation}
L(\mathbf a)
=
M-\frac19
\sum_{k<h}(\sqrt{a_k}-\sqrt{a_h})^2,
\qquad
\frac M3\leq L(\mathbf a)\leq M.
\label{eq:metric-lbal-properties}
\end{equation}
Nonnegative cross terms give the lower bound; Cauchy--Schwarz
gives the upper bound.
At fixed $M$, equality in the upper bound holds when all families
are equal.
Also, $L(\mathbf a)=0$ if and only if all three family scores
are zero, so a single zero family does not veto all partial credit.
The aggregate is continuous and strictly increasing in every
family score.
For positive $a_i,a_j$,
\begin{equation}
\frac{\partial L}{\partial a_i}
=
\frac{\sum_k\sqrt{a_k}}{9\sqrt{a_i}},
\qquad
\frac{\partial L/\partial a_i}{\partial L/\partial a_j}
=
\sqrt{\frac{a_j}{a_i}}.
\end{equation}
Thus an equal marginal improvement receives greater weight in
the weaker family.
Monotonicity also holds at zero, without requiring a finite
derivative there.

\paragraph{Bounded driving-only credit.}
For $d=D_t/100$ and a general cap $\beta\in(0,100)$, define
\begin{equation}
F_\beta(\mathbf a,d)
=
d\left[
\beta+(100-\beta)\frac{L(\mathbf a)}{100}
\right].
\label{eq:metric-general-fusion}
\end{equation}
Equation~\ref{eq:app-l2d} fixes $\beta=1$.
The bounds on $L$ imply
$0\leq F_\beta\leq100$,
$F_\beta(\mathbf a,0)=0$, and
$F_\beta((100,100,100),1)=100$.
Moreover,
\begin{equation}
\begin{aligned}
F_\beta(\mathbf 0,d)
&=
\beta d\leq\beta,
\\
F_\beta-\beta d
&=
(100-\beta)\frac{L(\mathbf a)}{100}d,
\\
F_\beta
&\leq
\min\left\{
100d,\,
\beta+\left(1-\frac{\beta}{100}\right)L(\mathbf a)
\right\}.
\end{aligned}
\end{equation}
A score above the driving-only cap therefore requires nonzero
language-family credit, which can include partial rather than
strict completion.

\paragraph{Cap sensitivity and the specified fusion rule.}
Relative to the multiplicative score $T=L(\mathbf a)d$,
\begin{equation}
F_\beta
=
\left(1-\frac{\beta}{100}\right)T+\beta d,
\qquad
0\leq F_\beta-T
=
\beta d\left(1-\frac{L(\mathbf a)}{100}\right)
\leq\beta.
\end{equation}
Consequently, a pairwise gap greater than $\beta$ under $T$
cannot be reversed by this bounded addition.
At fixed $\mathbf a,d$,
$|F_\beta-F_{\beta'}|\leq|\beta-\beta'|$.
Writing $\ell=L(\mathbf a)/100$, the form
$G(\ell,d)=d[\beta+(100-\beta)\ell]$
is uniquely determined by proportionality in $d$, affinity in
$\ell$ at $d=1$, and endpoints
$G(0,1)=\beta$ and $G(1,1)=100$.
These are explicit design conditions, not a claim that the cap
or fusion rule is universally optimal.

\paragraph{Interpretation and aggregation order.}
The cross-partial derivative is
$\partial^2F_\beta/(\partial\ell\,\partial d)=100-\beta>0$:
improvements in either component contribute more when the other
is larger.
The one-point cap is therefore not a claim that driving has only
one-percent weight.
The arithmetic driving component is not a hard veto on every
safety metric, and CARLA PSM is measured on separate routes.
Nonlinear aggregation is applied \emph{after} computing population
component scores; averaging nonlinear per-PCU summaries would
define a different metric.
For fixed tasks, scorer, and constants, adding other evaluated
systems does not change a system's score.
The reported tables compute L2D from displayed two-decimal
components using high-precision intermediate arithmetic and final
display rounding.
CARLA and RealSim retain separate rankings.
The properties above neither establish statistical separation
between nearby scores nor identify causal use of language.

\subsection{Open-loop semantic success}
\label{app:metric-openloop}

Open-loop evaluation checks predicted trajectories without
controller execution.
For a binary task criterion $E_{i,j}$, define exclusive semantic
success as
\begin{equation}
B_{ir,j}
=
E_{i,j}(\tau_{ir,j})
\bigl[1-E_{i,1-j}(\tau_{ir,j})\bigr].
\end{equation}
The corresponding metrics are
\begin{equation}
\mathrm{SemSISR}
=
100\left\langle
\frac{B_{ir,0}+B_{ir,1}}{2}
\right\rangle,
\qquad
\mathrm{SemCFPS}
=
100\langle B_{ir,0}B_{ir,1}\rangle.
\label{eq:metric-openloop}
\end{equation}
The names remain distinct from closed-loop SemISR and Sem-CFPS
because their acceptance rules differ: both open-loop arms
explicitly reject their alternative.
These scores evaluate semantic properties of predictions, not
realized closed-loop safety or execution.

SCS and PCS use Equation~\ref{eq:graded} with the open-loop task maps.
The graded map assigns one when all binary task constraints are
met; otherwise, native tolerances or acceptance intervals determine
linear decay, with conjunctions taking the minimum.
Lane tasks use footprint overlap and endpoint progress, while
stopping tasks combine target distance and stopping speed.
Ordered-event tasks score zero when neither event occurs, 0.5
when one occurs, and use the event interval when both occur.
Other tasks retain their speed, distance, and relative-position
conditions.
Invalid or unparseable predictions receive zero completion credit.
Scores and sample coverage are reported within each dataset
without a pooled cross-dataset rank.

\subsection{Additional driving and behavioral diagnostics}
\label{app:metric-diagnostics}

\paragraph{Supplementary FDQ measurements.}
The auxiliary per-route statistics use the evaluator's event
counts, reference curve, normalized exceedances, and recovery
quantities:
\begin{align}
\mathrm{SI}_v
&=
\exp[
-(3N_{\mathrm{ped},v}
+2N_{\mathrm{veh},v}
+N_{\mathrm{layout},v})
],
\\
\mathrm{GPE}_v
&=
r_v\exp\!\left[
-0.5\max\!\left(
0,\frac{T_v}{T_{\mathrm{ref}}(r_v)}-1
\right)
\right],
\quad
r_v=\mathrm{RC}_v/100,
\\
\mathrm{PSM}_v
&=
\frac{
\sum_t\Delta t\,
\operatorname{clip}(
\mathrm{TTC}_{\min,v}(t)/3,0,1
)
}{
\sum_t\Delta t
},
\\
\mathrm{MDQ}_v
&=
\exp\!\left[
-\frac16\sum_{q=1}^{6}E_{v,q}
\right],
\\
\mathrm{RLS}_v
&=
\frac{1}{|\mathcal J_v|}
\sum_{j\in\mathcal J_v}
u_j\exp(-t_{\mathrm{recover},j}/5).
\end{align}
These auxiliary scores are retained on their original 0--1 scales
with equal-route means; the main-table PSM is
$100\,(24^{-1}\sum_{v=1}^{24}\mathrm{PSM}_v)$.
Infinite TTC maps to one after clipping, whereas missing TTC is
not treated as infinity.
PSM uses the recorded constant-velocity oriented-box TTC model
with a 75\,m observation radius and 0.2\,m clearance.
GPE is zero at $r_v=0$, and $T_{\mathrm{ref}}$ is a
privileged-reference curve rather than a mathematical optimum.
RLS is unavailable when no recovery opportunity exists; its
reported mean is accompanied by coverage.

MDQ uses a seven-point centered mean and finite differences for
longitudinal and lateral acceleration, yaw rate, yaw acceleration,
longitudinal jerk, and jerk magnitude.
The acceleration ranges are $[-4.05,2.40]$ and
$[-4.89,4.89]\,\mathrm{m/s^2}$ for longitudinal and lateral
motion, respectively.
Yaw rate and yaw acceleration use
$[-0.95,0.95]\,\mathrm{rad/s}$ and
$[-1.93,1.93]\,\mathrm{rad/s^2}$.
Longitudinal jerk and jerk magnitude use
$[-4.13,4.13]$ and $[0,8.37]\,\mathrm{m/s^3}$, respectively.
Normalized exceedances are clipped to $[0,1]$.
These are engineering thresholds, not independently established
human-expert comfort standards.
Stationary behavior can score well on SI and MDQ, so these
measures must be read alongside advancement.
They do not change FDQ's strict pass criterion.

\paragraph{Safety-conflict diagnostics.}
Safety-Q reports behavior-based strict completion, relaxed
completion, safety, and unsafe obedience over six cases.
Fallback, refusal, and constrained-alternative completion each
use their own two-case denominator.
Unsafe obedience records the prohibited opposite-contract outcome;
it does not additionally require a separate unsafe flag.
Avoiding the prohibited action and completing a useful safe
replacement remain distinct outcomes.
No clarification-hold or benign matched requests are included,
so clarification and unnecessary-refusal rates are not applicable.
These diagnostics do not evaluate the quality of a textual
refusal explanation.

\paragraph{Trajectory-response and geometric diagnostics.}
For RealSim, complete paired trajectories are sampled at 101
common times from instruction issuance at 1.7\,s to their shared
horizon.
Mean position separation and mean absolute speed difference are
computed per pair and then summarized by the median.
Spatial statistics use complete valid pairs; speed statistics use
the specified speed-task subset, with coverage reported separately.
For WOD-E2E, the geometric criterion requires each prediction to
be closer in ADE to its own reference than to the alternative
by more than 0.5\,m.
These are diagnostics of response magnitude or reference alignment,
not replacements for instruction-specific success.
Their valid-trajectory cohorts remain separate from the scheduled
semantic-success denominators.

\subsection{What paired success establishes}
\label{app:metric-interpretation}

\paragraph{Instruction-invariant execution under coupled randomness.}
Let $\mathcal A_{i,j}$ be the strict acceptance set induced by
Equation~\ref{eq:metric-strict}.
Alternative rejection gives
$\mathcal A_{i,0}\cap\mathcal A_{i,1}=\varnothing$.
With identical non-language initialization and fully coupled
execution randomness, an instruction-invariant system produces
the same executed trajectory under both requests.
That trajectory cannot lie in both acceptance sets.
Such a system therefore has zero CFPS and at most 50 SISR on
balanced pairs.
The bound is attained when the same behavior satisfies exactly
one request in every pair.

\paragraph{Stochastic execution.}
Coupling is substantive, not merely a shared seed label.
An instruction-independent policy that independently selects the
two valid behaviors with probabilities $p$ and $1-p$ in the two
runs has paired-success probability $p(1-p)$ and expected CFPS
$100p(1-p)\leq25$.
Coupling its behavior-selection randomness instead makes it
choose the same behavior in both runs and yields zero paired
success.
Asynchronous execution and decoding can prevent a shared seed
label from coupling all relevant randomness.
Empirical CFPS therefore establishes successful matched execution
under the reported system and protocol; instruction-ablated
repetitions would additionally test causal language dependence.

\paragraph{Statistical interpretation.}
The current model comparisons use one evaluation seed per track.
CARLA expressions are clustered within PCUs, and several RealSim
PCUs share a reconstructed scene.
Uncertainty estimates must respect these units rather than treat
all arms and expressions as independent samples.
Aggregate scores alone do not provide such estimates, and small
score differences are descriptive rather than evidence of
statistical separation.
The reported metrics do not substitute for unperformed instruction
ablations, repeated-seed tests, or input/controller interventions.
\clearpage
\addtocontents{toc}{\protect\newpage}
\section{Task specification and qualification}
\label{app:maps}
\subsection{Environments, cohorts, and language coverage}
CARLA-Language2Drive adds six purpose-designed environment types to CARLA: underground parking, a port, Chinese city roads, a US city with detailed signage, a campus, and a highway. The evaluated base spans the ten map identifiers in Table~\ref{tab:maps}: five Town01--Town05 layouts and five custom non-Town assets. It is a subset of the simulator distribution, not an evaluation of every newly added map.
\begin{table}[!htbp]
\caption{Actual CARLA base map manifest. These ten evaluated layouts include five custom non-Town assets. The distribution-level six-environment description and this evaluation manifest have different scopes.}\label{tab:maps}
\centering\small
\setlength{\tabcolsep}{3.5pt}
\begin{tabular}{@{}lrr@{}}
\toprule
Map identifier & PCUs & Core rollouts\\
\midrule
American-city0428 & 11 & 66\\
Town01 & 2 & 12\\
Town02 & 1 & 6\\
Town03 & 10 & 60\\
Town04 & 4 & 24\\
Town05 & 2 & 12\\
hanghailou3 & 1 & 6\\
hubeiwenli4 & 2 & 12\\
parkinglotfish-new3 & 24 & 144\\
yangluogang1 & 7 & 42\\
\bottomrule
\end{tabular}
\end{table}

\begin{table}[!htbp]
\caption{Compiled source suite and nested CARLA cohorts. Columns count PCUs and potential Core executions; only the base single-seed cohort is the complete  comparison.}\label{tab:scales}
\centering\small
\setlength{\tabcolsep}{3.5pt}
\begin{tabular}{@{}lrrrr@{}}
\toprule
Cohort & PCUs & Arms & One seed & Three seeds\\
\midrule
Source & 128 & 256 & 768 & 2304\\
Small & 32 & 64 & 192 & 576\\
Base & 64 & 128 & 384 & 1152\\
Large (loadable) & 119 & 238 & 714 & 2142\\
\bottomrule
\end{tabular}
\end{table}

The source suite contains 128 PCUs, 32 per scope, with 256 arms and 768 arm--expression instances per seed. Nine Town06 PCUs are excluded because that map is not loadable in the evaluation runtime, leaving 119. Stable within-scope sampling selects nested 32-PCU and 64-PCU tiers. For a scope with $n_l$ loadable PCUs in source order, base selects indices $\lfloor jn_l/16\rfloor$, $j=0,\ldots,15$; small takes every second selected entry. The evaluated base has 14 development and 50 test PCUs. Development/test counts are A: 4/12, B: 3/13, C: 3/13, and D: 4/12. All arms and expressions of a PCU remain in the same split.

The split audit checks semantic entities, physical topology, and actor programs: 217, 158, and 13 recorded keys, with no cross-split conflicts in those sets. Controlled English templates substitute directions, entity names, speeds, and stage identities. The base contains 384 instruction entries and 216 distinct strings, averaging 7.34 whitespace-delimited words. Each scope contributes 16 PCUs and 96 rollouts per system. Deadlines use simulation time: A, 35--120\,s; B, 80--100\,s; C, 55--200\,s; D, 100--260\,s.
\begin{table}[!htbp]
\caption{CARLA base task inventory, with three paraphrases per arm and seed 1.}\label{tab:categories}
\centering\small
\setlength{\tabcolsep}{3.5pt}
\begin{tabular}{@{}llrr@{}}
\toprule
Stratum & Category & PCUs & Rollouts\\
\midrule
A & junction & 7 & 42\\
A & lane & 4 & 24\\
A & speed\_start & 2 & 12\\
A & waiting\_parking & 3 & 18\\
B & lane\_constraint & 4 & 24\\
B & object\_relation & 4 & 24\\
B & passing\_order & 4 & 24\\
B & speed\_constraint & 4 & 24\\
C & exact\_parking & 4 & 24\\
C & parking\_region & 4 & 24\\
C & port\_target & 4 & 24\\
C & road\_name & 4 & 24\\
D & conditional & 5 & 30\\
D & memory\_reference & 4 & 24\\
D & strict\_order & 5 & 30\\
D & until\_invariant & 2 & 12\\
\bottomrule
\end{tabular}
\end{table}

\subsection{From a physical choice to a paired specification}
The construction workflow links five stages: environment curation, geometric binding, paired instruction authoring, compilation, and validation. Directed lane graphs use 1\,m samples, refined to 0.5\,m at junctions and throughout the parking map. Road names bind to exit gates; regions and parking spaces bind to world-coordinate polygons. Topology endpoints preserve short lane sections, and successor/adjacency references must resolve. Structural labels such as walls and pillars are excluded from parking targets.

Each arm specifies language expressions, target geometry, the requested and alternative contracts, a private reference path, a deadline, and actor setup. Public records contain language, sensors, ego state, and shared neutral context. The private sidecar retains targets, acceptance predicates, future reference paths, and actor-program specifications. Pair checks enforce matching non-language initialization. Static qualification checks entity bindings, directed reachability, footprint geometry, and opposition of the requested outcomes.

The source contains 38 distinct named physical parking spaces across exact-parking branches; 33 are selected by the instantiated scene conditions. They occur in 22 A/C/D PCUs and 44 selected arms. The default exact-parking check uses a 0.2\,m boundary tolerance, $15^\circ$ heading tolerance, speed below 0.2\,m/s, and 3\,s dwell; task sidecars supply the actual parameters. Physical actors instantiate occupied bays. The maneuver checks retain actor-origin/rear-axle frames and a 2.86047\,m wheelbase model. Sustained-speed tasks use signal-free corridors and separated feasible bands; common bands are [1.9,3.9] and [5.9,7.9]\,m/s.

CARLA's live oracle validation covers 47 targeted cases. Named-road qualification checks spatial and directional agreement for 16 arms within 15\,m and $20^\circ$. These tests complement static geometric validation; they do not constitute exhaustive dynamic qualification of all 768 source instances. Task-specific deadlines are fixed in the private sidecars.

\subsection{Representative instruction pairs}
The alternatives in Table~\ref{tab:examples} share a scene entry and change the required behavior. The task inventories in Appendix~\ref{app:inventory} preserve all evaluated identities.
\begin{table}[!htbp]
\caption{Examples of the evaluated CARLA instruction pairs.}
\label{tab:examples}
\centering\small
\begin{tabularx}{\linewidth}{@{}lYY@{}}
\toprule
PCU & Arm 0 & Arm 1\\
\midrule
A-001 & Turn left at this intersection. & Go straight through this intersection.\\
B-025 & Safely pass the slow truck. & Do not overtake; remain behind the slow truck.\\
C-001 & Take Middleton Street. & Take E Gage Ave.\\
C-025 & Park in space B204. & Park in space B205.\\
D-001 & First visit B208, then go to F206. & First visit F206, then go to B208.\\
D-009 & If E101 is occupied, park in E102; otherwise use E101. & If E101 is occupied, park in C104; otherwise use E102.\\
D-026 & Remember A804; visit E101, then return to that remembered place. & Remember E101; visit A804, then return to that remembered place.\\
\bottomrule
\end{tabularx}
\end{table}
\subsection{Finite-trajectory criteria and clocks}
\label{app:task-metrics}
The verifier checks normalized, time-ordered actual ego states. Geometric and behavioral predicates represent gate entry, lane relation, speed, headway, actor order, region occupancy, and parking state. \texttt{eventually} requires a target event; \texttt{always} requires a condition throughout a reached, nonempty interval; \texttt{within} imposes a deadline; \texttt{until} maintains an invariant until a terminating event. \texttt{sequence} checks ordered stages and can reject premature, simultaneous, repeated, or oscillatory stage transitions. \texttt{if\_else} selects a branch from the instantiated scene condition, and \texttt{bind\_reference} preserves an entity for later reuse.

For example, A-001 binds its left and straight requests to distinct exit gates. D-001 reverses the ordered parking stages B208 and F206. D-026 stores a named place and requires returning to the same entity after an intermediate visit. The full strict verdict additionally applies alternative rejection and the safety, progress, timing, and stability terms.

The CARLA monitor evaluates growing prefixes of the executed trajectory. A strict-success candidate normally persists for 2\,s, and persistent tasks must reach their required horizon. Relaxed semantic success is an auxiliary scoring verdict: it does not terminate a rollout or shorten the opportunity to meet the full strict criterion. Execution ends upon strict completion, the task deadline, or an invalid out-of-world ego state. The deadline component uses satisfied-subcondition timestamps, or the trace endpoint when unavailable; it is not the negation of a \texttt{deadline} termination label. 

RealSim distinguishes recording time, rollout-relative time, and instruction time. Language is issued at 1.7\,s. Deadlines are 7.0\,s for 23 PCUs and up to 13.8\,s for the remainder, leaving 5.3--12.1\,s after instruction. Privileged reference traces extend 6\,s beyond the deadline at 0.1\,s sampling; that tail is not extra model execution time. The component validator uses a default maximum sample gap of 0.25\,s and numerical speed tolerance of $10^{-6}$\,m/s.

\subsection{Relaxed proxies, graded completion, and task progress}
CARLA SemISR accepts the full requested contract or its predefined relaxed proxy. Depending on the task, the proxy may use heading, displacement, expanded gate proximity, a lane relation with progress, an observation in an expanded speed/headway band, or relaxed region geometry. Parking proxies may accept an expanded footprint-center region and $45^\circ$ heading tolerance without stopped dwell. Sequence and reference proxies retain ordered event checks. These are deterministic partial-completion rules, not a free-form language-model judgment.

RealSim branch tasks use the first correct irreversible commitment, with timeliness relaxed in the semantic component. Lane tasks can relax dwell. Scoped-lane and speed tasks retain full entry-to-exit traversal, relevant exclusivity and trigger conditions, and, for speed, all active samples in the requested band. Target-lane dwell is 1\,s; scoped-lane tasks require a 2\,s minimum dwell. Referent triggers must occur after instruction and by scope entry, then remain latched until exit.

The task-specific maps $C_{i,j}$ define graded completion in $[0,1]$ and are aggregated by Equation~\ref{eq:graded}. The binary strict criterion, relaxed proxy, and continuous map are distinct evaluator outputs. The open-loop rules appear in Appendix~\ref{app:offline}. Within each protocol, all language metrics use the same scheduled pair set, and unsuccessful executions remain in the denominator.

Task progress is a gate in the strict-success criterion, distinct from continuous reference advancement. CARLA's default progress term requires evaluator progress of at least 0.05. RealSim checks a 10\,m threshold over the available rollout, including its pre-instruction segment. CARLA safety excludes normalized collision, off-road, and red-light events. RealSim scoring v0.2 checks collision and off-road events; requested speed remains a language criterion. These components are distinct from continuous reference advancement.

\subsection{RealSim scene mining and execution qualification}
\label{app:realsim}
Candidate records bind scene identity, recording window, initial ego pose, and four source cameras at $576\times320$. The construction records contain 120 scene-acquisition entries, 308 route opportunities, and 32 task-source lineage entries. Temporal screening retains 185 route opportunities: 94 lack sufficient preceding context and 29 lack execution time. Task authoring and review freeze 30 PCUs and retire two lineage entries. These inventories have different units and do not form one nested count funnel.

Branch topology, adjacent lanes, and scoped speed requests provide the alternatives. Authors review both-arm rendered views, scope/trigger cues, and rendering quality, then freeze public inputs and private criteria. Neutral navigation uses \texttt{UNKNOWN}, with waypoint-command routing disabled. The final 30 PCUs span 23 scenes, six in A and 24 in B. Every frozen PCU passes both-arm qualification under three seeds: 180 privileged-driver traces satisfy their requested contract and reject the alternative. 

Scope support is explicit. P01 requests 19.25--20.75 versus 21.25--22.75\,km/h. P30 requests 20.25--21.75 versus 22.25--23.75\,km/h in a 1.526\,m interval with two active samples in its qualification example. Across 144 witnesses carrying scoped-sample diagnostics, support ranges from 2 to 26 samples. The verifier evaluates these active samples over complete traversal, not a whole-rollout average speed.
\FloatBarrier

\FloatBarrier
\section{Model interfaces and execution settings}
\label{app:adapters}
\subsection{CARLA adapters}
LMDrive, SimLingo, Qwen-Drive, OneVL, Impromptu-VLA, and Senna use the model families described by \citet{shao2024lmdrive,renz2025simlingo,zhou2026qwen,lu2026xiaomi,chi2026impromptu,jiang2026senna}. Table~\ref{tab:adapters} identifies the input and output paths used in the complete CARLA comparison. Native and generic trajectory adapters differ in sensor coverage, training domain, and controller. The public/private input split does not impose identical camera rigs. Public-language conversion into a model's required command or local goal is included in the tested system identity.

\begin{table}[!htbp]
\caption{Complete CARLA system paths. A bridge denotes an interface that differs from the original end-to-end architecture.}
\label{tab:adapters}
\centering\small
\begin{tabularx}{\linewidth}{@{}p{30mm}YY@{}}
\toprule
System & Effective input & Action path / important configuration\\
\midrule
LMDrive & Four RGB views, LiDAR, ego state & Native visual encoder/LLM, waypoint head, PID; public-language local-goal adaptation\\
SimLingo & Front RGB, motion state & InternVL2-1B, epoch013; native waypoints/PID\\
Alpamayo 1.5 & Custom multi-camera temporal history & Trajectory generation and tracking; multi-camera temporal input\\
Qwen-Drive-1.0-4B & Front/left/right RGB, ego & Backbone with planner-rl; sampled plans and trajectory tracking\\
OneVL & Single front view, ego/history & NAVSIM-format latent-prefix trajectory generation\\
Impromptu-VLA & Front/right/left RGB, ego history & Released 3B AD-finetuned QA-format waypoint generation\\
Senna VLM bridge & Six RGB views & SPEED/PATH output expanded deterministically into a trajectory; no Senna-E2E planner\\
\bottomrule
\end{tabularx}
\end{table}

The generic camera rig uses yaw angles $0/-60/+60/180/-120/+120$ degrees for front/left/right/rear/back-left/back-right, typically at $896\times512$ resolution and $100^\circ$ field of view. These settings do not replace the native rigs: SimLingo uses a $1024\times512$ front image with $110^\circ$ field of view, and LMDrive and Alpamayo have their own configurations. Generic control uses meter-valued ego-frame waypoints with declared timestamps, steering based on a pure-pursuit-type tracker, and speed control. Invalid, nonfinite, or out-of-range plans can trigger braking or an error; such outcomes remain part of the system behavior.

The generic Qwen-Drive, OneVL, Impromptu, Senna, and ReCogDrive bridge configurations record model calls every 20 CARLA frames. At 20\,Hz this corresponds to a configured one-second call interval; it is not a measured wall-clock latency. Waypoint spacing is declared separately, commonly 0.5\,s for these adapters. Alpamayo records a 1\,s replanning interval, 16 ego-history steps at 0.1\,s, and four camera frames. Controller launch or fallback behavior is part of an adapter's policy and should remain attached to its exact configuration.

\subsection{RealSim model identities}
The RealSim main comparison includes SimLingo, Impromptu-VLA, ReCogDrive, OneVL, AutoVLA, Qwen-Drive, Alpamayo~1.5, and Alpamayo2-Super \citep{renz2025simlingo,chi2026impromptu,xiong2026recogdrive,lu2026xiaomi,zhou2026autovla,zhou2026qwen,wang2025alpamayo}. The main roster has separate rows for Alpamayo~1.5 and Alpamayo2-Super.

RealSim's workload/qualification baseline has four logical views: \texttt{cross\_left\_120fov}, \texttt{cross\_right\_120fov}, \texttt{front\_tele\_30fov}, and \texttt{front\_wide\_120fov}, at $576\times320$. Model-specific view selection, resizing, or history construction is recorded separately. A reconstructed multi-view input is not automatically equivalent to a model's original training rig. Repeating the earliest observed frame to fill a short history uses no future ground truth, but remains a disclosed input variation.

ReCogDrive uses a NAVSIM-transfer path in RealSim, distinct from the CARLA VLM bridge. OpenDriveVLA's RealSim path loads the Qwen+UniAD visual artifacts and performs real multi-image forward passes, distinct from the text-only CARLA head. Senna's official row remains blocked because the reviewed complete low-level planner/checkpoint path is absent. Its independently executed meta-action-to-trajectory variant is not substituted into that official row. LMDrive's official row is blocked by input-equivalence requirements, particularly LiDAR; the synthetic-null-LiDAR smoke is a separate non-equivalent variant.

\subsection{Environment and process topology}
The recorded CARLA Python environment uses Python 3.11.15, PyTorch 2.2.0+cu121, CUDA 12.1, Transformers 4.49.0, NumPy 1.26.4, and CARLA Python package 0.9.16. Model-local vendor directories can override these dependencies. The Python package version alone does not identify the custom server binary or map build. The usual execution topology uses seven model/ simulator workers on seven GPUs; SimLingo uses up to 14 workers on seven GPUs. This is route-level parallelism, not tensor-parallel inference.

RealSim fixes AlpaSim commit \texttt{affc2eab209fa43bdfa2f26c0f8d437922d78a68}, PhysicalAI-NuRec version 26.01, and NRE image version 26.04 for evaluation. The main runtime uses Python 3.12 with isolated model-worker environments. Simulation services and model workers can reside on different hosts; runtime logs and attempt identities bind their outputs. Latency, GPU-hour, and energy comparisons are outside this evaluation.
\FloatBarrier

\clearpage
% \input{source/appendices/reproducibility}
% Expanded from the supplied protocol reports on 2026-09-26.
% Drop-in appendix fragment: keep the main paper's bibliography and macros.
% Required packages: amsmath, amssymb, booktabs, tabularx, array, placeins.
% Existing labels app:provenance, app:domain, app:offline,
% tab:offline-coverage, and tab:wod-geometry are preserved.
% Source notes are comments, not new bibliography entries.

\section{Statistical interpretation and reproducibility}
\begin{table*}[t]
\centering\footnotesize
\caption{\textbf{Open-loop instruction following on three datasets.} Predicted trajectories are scored for exclusive binary semantic success (SemSISR/SemCFPS) and graded completion (SCS/PCS). All values use a 0--100 scale.}
\label{tab:offline-main-1}
\setlength{\tabcolsep}{4pt}
\renewcommand{\arraystretch}{0.95}
\begin{tabularx}{\linewidth}{@{}Xrrrr@{}}
\toprule
System & SemSISR & SemCFPS & SCS & PCS \\
\midrule
\multicolumn{5}{c}{\textit{WOD-E2E}} \\
\midrule
DriveMA-2B & \textbf{38.38} & \textbf{1.24} & 49.33 & 7.88 \\
Fast-dDrive & 34.85 & 0.00 & 46.42 & 5.27 \\
Qwen-Drive-SFT & 34.44 & 0.00 & \textbf{50.58} & \textbf{9.37} \\
Qwen-Drive-RL & 35.06 & 0.00 & 49.58 & 7.58 \\
\midrule
\multicolumn{5}{c}{\textit{NAVSIM}} \\
\midrule
Qwen-Drive-1.0-4B & 0.000 & 0.000 & 5.472 & 0.000 \\
AutoVLA & 3.947 & 0.000 & 8.172 & 0.000 \\
DriveMA-2B & \textbf{8.553} & 0.000 & \textbf{11.244} & \textbf{2.242} \\
ReCogDrive  & 0.000 & 0.000 & 4.400 & 0.000 \\
OneVL & 3.618 & 0.000 & 8.428 & 0.000 \\
\midrule
\multicolumn{5}{c}{\textit{nuScenes}} \\
\midrule
Impromptu-VLA & 15.83 & 0.00 & 20.21 & 0.00 \\
UniDriveVLA & \textbf{16.67} & 0.00 & 25.00 & 0.00 \\
OpenDriveVLA-0.5B & 13.33 & \textbf{3.33} & 19.06 & 4.30 \\
OmniDrive & 7.50 & 0.00 & \textbf{29.24} & \textbf{8.61} \\
\bottomrule
\end{tabularx}
\end{table*}
% \begin{minipage}{\linewidth}\scriptsize
% Bold compares models within each dataset; the constant-speed baseline is excluded. These binary semantic metrics enforce alternative rejection, unlike the closed-loop relaxed proxies. Fast-dDrive is the evaluated WOD-E2E system, not an AutoVLA result. nuScenes AutoVLA and OpenEMMA are unavailable and are listed as such in Appendix~\ref{app:offline}. WOD-E2E semantic and geometric diagnostics use separate evaluation sets.
% \end{minipage}
\label{app:provenance}\label{app:domain}

\subsection{Evaluation units, provenance, and interpretation}
\label{app:repro-units}
A reproducible run binds the dataset and task-source versions, PCU, instruction arm,
expression, random seed, checkpoint revision, input adapter, decoding rule, trajectory
frame and timestamps, controller when applicable, private scorer, and selected
attempt. The PCU, rather than an instruction string or a retry, is the paired
experimental unit. Repeated expressions and both arms remain grouped when computing
paired scores. Duplicate requests, retries, and integration smoke tests do not create
additional scenes. We distinguish source frames, candidate pairs, qualified PCUs,
model requests, and valid predictions throughout this appendix.

The main closed-loop comparisons use a fixed evaluation seed per track. The L2D values
in Tables~\ref{tab:carla-main} and~\ref{tab:realsim-main} are computed from the
reported component metrics with high-precision intermediate arithmetic and rounding
only for presentation; this aggregation does not alter any rollout verdict. RealSim
trajectory-response diagnostics use the complete valid pairs reported in
Table~\ref{tab:real-response}, while the main success metrics retain their scheduled
denominators. CARLA, RealSim, and the open-loop tracks are reported separately because
they instantiate different horizons, task mixtures, and scoring functions.

\subsection{Open-loop versions, coverage, and common notation}
\label{app:offline}
% SOURCES: DATASET_AND_EVALUATION_PROTOCOL.md Sections 1--3;
% BENCHMARK_SPEC_V0.1.0.md Sections 1--3;
% NAVSIM report bundle: release_manifest.json and EXPERIMENT_PROTOCOL.md.

\paragraph{Version resolution.}
The nuScenes evaluation uses \texttt{nuscenes-test-v1.0.0}. NAVSIM uses the
38-PCU \texttt{l2d-offline-navsim} 1.0.0 release identified by the September 24
experiment bundle, with 19 recording clusters and seed 42. WOD-E2E uses the frozen P0 kinematic-proxy protocol (v0.1.0); all WOD results
in this appendix are reported with the corresponding proxy metric names.

The source-specific roster includes DriveMA-2B, Fast-dDrive, and
UniDriveVLA~\citep{zheng2026drivema,zhang2026fast,li2026unidrivevla}, the model
families in Appendix~\ref{app:adapters}, and OmniDrive~\citep{wang2025omnidrive}.
Fast-dDrive is evaluated on WOD-E2E; AutoVLA~\citep{zhou2026autovla} is evaluated
in the separate NAVSIM experiment. The nuScenes comparison reports the models for
which the corresponding frozen adapter outputs are available.

\begin{table}[!htbp]
\centering\small
\caption{Open-loop coverage of the constructed Language2Drive subsets. Candidate,
qualified-PCU, and request counts are reported separately.}
\label{tab:offline-coverage}
\begin{tabularx}{\linewidth}{@{}p{0.14\linewidth}XXX@{}}
\toprule
Source & Available source / construction & Evaluation subset & Queries per model\\
\midrule
nuScenes & 150 test scenes; 6,008 anchors; 83 candidates & 75 approved PCUs in 75 scenes / 15 logs; dev 15, heldout 60 & 900 across 75 PCUs; 720 heldout, including controls\\
NAVSIM & OpenScene test; 40 reviewed candidates & 38 released PCUs / 19 original recordings & 304: two arms $\times$ four wordings $\times$ 38 PCUs\\
WOD-E2E & 38/93 validation shards; 43,394 frames; 203 rated anchors & 563 complete-window candidates; 441 with P0 drafts; 241 motion-proxy pairs & 882 draft requests; 482 belong to the proxy subset\\
\bottomrule
\end{tabularx}
\end{table}

\begin{table}[!htbp]
\centering\small
\caption{Prediction grids and dataset-specific anchor reference points. Future points
exclude $t=0$.}
\label{tab:repro-time-frame}
\begin{tabularx}{\linewidth}{@{}lcccX@{}}
\toprule
Source & Horizon & Interval & Points & Output and fixed anchor reference\\
\midrule
nuScenes & 3.0 s & 0.25 s & 12 & $(x,y)$; \path{ego_pose_origin}\\
NAVSIM & 4.0 s & 0.50 s & 8 & $(x,y,\psi)$; rear-axle origin\\
WOD-E2E & 5.0 s & 0.25 s & 20 & $(x,y)$; rear-axle midpoint\\
\bottomrule
\end{tabularx}
\end{table}

\paragraph{Coordinates and resampling.}
Positions use meters and an anchor-fixed local plane with $+x$ forward and $+y$
left. Velocities and accelerations use m/s and m/s$^2$. For an anchor with global
planar position $(X_0,Y_0)$ and yaw $\psi_0$, the planar conversion is
\begin{equation}
\begin{bmatrix}X_k\\Y_k\end{bmatrix}
=
\begin{bmatrix}X_0\\Y_0\end{bmatrix}
+
\begin{bmatrix}\cos\psi_0&-\sin\psi_0\\\sin\psi_0&\cos\psi_0\end{bmatrix}
\begin{bmatrix}x_k\\y_k\end{bmatrix}.
\label{eq:repro-se2}
\end{equation}
The reference-point definition is part of the protocol and is matched by every
adapter. NAVSIM uses an SE(2) transform based on $(X,Y,\psi)$ and verifies that the
current lane covers the local origin. Native trajectories are converted to the
source-specific evaluation grids before scoring; private targets are never used to
select among multiple model outputs.

\paragraph{Binary and graded paired scores.}
For PCU $i$, wording $r$, and arm $a\in\{0,1\}$, let $\tau_{ir,a}$ be the submitted
trajectory, $V(\tau)$ its format-validity indicator, and $E_{i,a}(\tau)$ its native
binary task predicate. Define
\begin{equation}
 B_{ir,a}=V(\tau_{ir,a})E_{i,a}(\tau_{ir,a})
 \bigl[1-E_{i,1-a}(\tau_{ir,a})\bigr],\qquad
 c_{ir,a}=V(\tau_{ir,a})C_{i,a}(\tau_{ir,a}).
\label{eq:repro-exclusive}
\end{equation}
Here $C_{i,a}\in[0,1]$ is the \emph{source-specific} graded completion function.
For a declared wording set $\mathcal R_i$, write
$\langle z\rangle=N^{-1}\sum_i |\mathcal R_i|^{-1}\sum_{r\in\mathcal R_i}z_{ir}$.
The four reported aggregates are
\begin{align}
\mathrm{SemSISR}&=100\left\langle\frac{B_{ir,0}+B_{ir,1}}2\right\rangle,
&\mathrm{SemCFPS}&=100\langle B_{ir,0}B_{ir,1}\rangle,\nonumber\\
\mathrm{SCS}&=100\left\langle\frac{c_{ir,0}+c_{ir,1}}2\right\rangle,
&\mathrm{PCS}&=100\langle\min(c_{ir,0},c_{ir,1})\rangle.
\label{eq:repro-four-metrics}
\end{align}
Missing or invalid predictions receive zero binary and completion credit without
shrinking the fixed eligible denominator. A source may reject an incomplete result
file before scoring, as specified below. Candidate-qualification failures are not
model failures and are excluded before fixing that denominator.

The open-loop reports use the same paired aggregation while preserving each
dataset's native primitive scorer: nuScenes uses temporal coverage with a displacement
gate, NAVSIM uses tolerance-based task completion with minimum conjunction, and
WOD-E2E uses the kinematic proxy functions defined below. WOD values are denoted
\emph{ProxySemSISR}, \emph{ProxySemCFPS}, \emph{ProxySCS}, and \emph{ProxyPCS}.

\subsection{nuScenes: construction, qualification, and exact scoring}
\label{app:repro-nuscenes}
% SOURCE: DATASET_AND_EVALUATION_PROTOCOL.md, Sections 1--12.

\paragraph{Source inventory and split.}
Construction uses local nuScenes \texttt{v1.0-test}, not \texttt{v1.0-trainval}.
The catalog contains 150 scenes and 6,008 anchors: 3,318 in Boston, 796 in
Singapore-onenorth, and 1,894 in Singapore-queenstown. At most one PCU is retained per scene.
Logs are grouped by location, sorted by the SHA-256 of their log token, and the
first $\max(1,\operatorname{round}(0.25n_{\mathrm{log}}))$ logs per location are
assigned to development, using Python's \texttt{round}. The split is then intersected
with the approved set. This yields 15 dev PCUs in four logs and 60 heldout PCUs in
11 logs, with neither scene nor log overlap. The resulting split contains 15 dev and 60 heldout PCUs.

\begin{table}[!htbp]
\centering\small
\caption{Released nuScenes task and geographic distributions. The paired lane
alternatives are change/keep, not left/right within the same PCU.}
\label{tab:repro-nusc-distribution}
\begin{tabularx}{\linewidth}{@{}Xrrr@{}}
\toprule
Category & Released & Dev & Heldout\\
\midrule
Lane change / keep & 22 & 5 & 17\\
\quad Left-change target & 12 & 3 & 9\\
\quad Right-change target & 10 & 2 & 8\\
Scoped speed: approximately 15 / 30 km/h & 53 & 10 & 43\\
\midrule
Boston-seaport & 41 & 7 & 34\\
Singapore-onenorth & 12 & 4 & 8\\
Singapore-queenstown & 22 & 4 & 18\\
\midrule
Total PCUs & 75 & 15 & 60\\
\bottomrule
\end{tabularx}
\end{table}
The lane/speed proportions are 29.33/70.67\% in the release and 28.33/71.67\% in
heldout. The released nuScenes track focuses on lane-choice and scoped-speed PCUs.

\paragraph{Public observations and coordinate metadata.}
Public inputs reference six cameras and the \path{LIDAR_TOP} keyframe, retaining
sensor timestamps and calibration tokens, plus up to four historical keyframes and
ego poses. Historical-frame counts of 4/3/2/1 occur in 67/3/1/4 PCUs, with observation offsets
spanning 0.400365--2.100166 s before the anchor. The output declares
\path{coordinate_frame=ego_at_anchor} and
\path{reference_point=ego_pose_origin}. The anchor transform uses the translation and yaw of the ego pose attached to
\path{LIDAR_TOP}; quaternion storage is $(w,x,y,z)$.

The four evaluated adapters first produce six future positions spaced by 0.5 s.
The known anchor $(0,0)$ is prepended and piecewise-linear interpolation yields the
12-point grid in Table~\ref{tab:repro-time-frame}. The adapters resample the submitted future trajectory to the fixed 3 s grid; the
Impromptu adapter additionally constructs its required observed-history input to $-3$ s.

\paragraph{Candidate mining and map binding.}
Initial speed is estimated from the current and preceding ego poses using their
actual time separation, which must lie in $[0.4,0.6]$ s. Candidates bind to a
\texttt{CAR} lane queried within 2 m, with lane-direction error at most $30^\circ$
and polygon distance strictly below 1 m. The local tangent uses points 0.5 m before
and after the projected arc length. Lane paths are extended toward 80 m through
connections with endpoint gap at most 3 m and turn angle strictly below $35^\circ$;
the next connection minimizes $\mathrm{turn}_{\mathrm{rad}}+0.02\,
\mathrm{gap}_{\mathrm m}$. A disconnected path can end earlier, but each witness
must remain at least 0.5 m before its endpoint. The path representation is the
polyline through arcline-segment endpoints, not a densely sampled exact curve.
Lane candidates are built first; speed candidates skip occupied scenes. The first
qualified anchor in catalog order is retained per scene. The lane-builder limit of
80 was not reached: 22 lane candidates were found.

\begin{table}[!htbp]
\centering\small
\caption{nuScenes target-construction parameters. These mining and witness
parameters are distinct from the final scoring tolerances.}
\label{tab:repro-nusc-construction}
\begin{tabularx}{\linewidth}{@{}p{0.30\linewidth}X@{}}
\toprule
Parameter & Value or exact rule\\
\midrule
Lane-task initial speed & $4.5\le v_0\le9.0$ m/s; released range 4.745680--8.878580\\
Adjacent-lane query & 7 m radius; polygon gap $\le0.8$ m\\
Divider / direction & Divider exists and every segment type contains \texttt{DASHED}; target yaw error $\le20^\circ$\\
Relative target position & Lateral distance 2.4--5.0 m on the requested side; absolute forward offset $\le5.0$ m\\
Candidate ranking & $|d_{\mathrm{lat}}-3.5|+0.1|d_{\mathrm{fwd}}|$; search left before right\\
Lane witness blend & Start 0.4 s; duration 2.0 s; initial centerline offset decays over 1.0 s\\
Speed-task initial speed & $5.5\le v_0\le6.5$ m/s; released range 5.521365--6.491924\\
Speed witness targets & 4.0 and 8.0 m/s, reached after a 1.0 s transition\\
\bottomrule
\end{tabularx}
\end{table}

\paragraph{Counterfactual witnesses.}
Both arms share the observation, ego history, and public-input hash; the model
receives \path{predict(public_input, instruction, seed)}, without arm labels,
target lane tokens, contracts, witnesses, or review decisions. Targets are map-bound
sets of acceptable behavior, not two fitted copies of the single recorded future.
For lane change, the current-lane and target-lane paths are mixed with
$u=\operatorname{clip}((t-0.4)/2,0,1)$ and $w=u^2(3-2u)$. Full mixing occurs at
2.4 s, while the contract requires entry into the target corridor from 2.25 s;
entry does not require reaching its centerline. The keep witness follows the
current-lane path. For speed tasks, both witnesses follow the same extended path,
with arc-length schedule
\begin{equation}
 s(t)=\begin{cases}
 v_0t+\tfrac12(v_*-v_0)t^2,&0\le t\le1,\\
 \tfrac12(v_0+v_*)+v_*(t-1),&t>1,
 \end{cases}\qquad v_*\in\{4,8\}\;\mathrm{m/s}.
\label{eq:repro-nusc-speed-witness}
\end{equation}
These witnesses support local feasibility during construction; they are not unique
reference trajectories that a prediction must imitate.

\paragraph{Qualification and human review.}
Construction produces 83 candidates (22 lane and 61 speed). Automated witness,
map, motion, and swept-footprint checks retain 82 candidates, after which manual
review freezes 75 PCUs (22 lane and 53 speed). Both arms must have feasible witnesses,
satisfy their requested predicate, reject the paired alternative, and pass the
shared geometric and motion checks.

The swept-footprint screen uses a $5.0\times2.0$ m rectangular vehicle approximation
under two reference-point hypotheses and checks coverage against buffered drivable
polygons. LiDAR-assisted review additionally summarizes points near each witness
path over the 1--3 s interval. The review interface presents six cameras, the local
map, both instructions and witnesses, and the map/LiDAR diagnostics. Final approval
requires \texttt{observable}, \path{arm_0_feasible},
\path{arm_1_feasible}, \path{mutually_exclusive},
\path{traffic_clearance}, and \path{map_binding} to be marked
\texttt{yes}; the decision, reviewer identifier, rationale, timestamp, and source hash
are recorded in the frozen review table.

\paragraph{Output validity and binary contracts.}
Both \path{timestamps_s} and \path{xy_m} must contain 12 entries, with two
finite coordinates per point and the exact frame/reference strings above.
Timestamp error must be at most $10^{-5}$ s. Missing points, NaN, or infinity are invalid, and the scorer does not fill missing
future points.
For $p_0=(0,0)$ and $\Delta t=0.25$ s, define
\begin{equation}
 v_k=\|p_k-p_{k-1}\|/\Delta t,\qquad D=\|p_{12}\|.
\label{eq:repro-nusc-features}
\end{equation}
Every arm predicate in Table~\ref{tab:repro-nusc-binary} additionally requires
$D\ge8$ m.

\begin{table}[!htbp]
\centering\small
\caption{Complete nuScenes binary semantic conditions, in addition to $D\ge8$ m.
Every required sample must satisfy its condition.}
\label{tab:repro-nusc-binary}
\begin{tabularx}{\linewidth}{@{}p{0.20\linewidth}Xp{0.32\linewidth}@{}}
\toprule
Arm & Predicate & Required samples\\
\midrule
Lane-change arm 0 & Reference point lies in the target lane corridor & $t=2.25,2.50,2.75,3.00$ s; 4 points\\
Lane-keep arm 1 & Reference point lies in the current lane corridor & $t=0.25,0.50,\ldots,3.00$ s; 12 points\\
Low-speed arm 0 & $3.2\le v_k\le4.8$ m/s & Interval ends $t_k=1.25,\ldots,3.00$ s; 8 intervals\\
High-speed arm 1 & $7.2\le v_k\le8.8$ m/s & Same 8 intervals\\
\bottomrule
\end{tabularx}
\end{table}

Each corridor comprises the available \texttt{CAR} lane polygons in its private
token set, individually buffered by 0.15 m. Boundary points count as inside through
\texttt{covers}. The speed window covers 1.00--3.00 s, with the first interval spanning
1.00--1.25 s. Binary scoring tests the requested predicate and the paired-alternative
predicate as in Eq.~\eqref{eq:repro-exclusive}.

\paragraph{Separate motion and map-quality checks.}
Motion and map-quality diagnostics are reported alongside semantic success. The
thresholds are maximum speed $\le17$ m/s, $|v_1-v_{\mathrm{initial}}|\le3$ m/s,
maximum absolute speed-difference acceleration $\le5$ m/s$^2$, and maximum absolute
wrapped heading-change rate $\le1.0$ rad/s. Heading differences use
$\operatorname{atan2}(\sin\Delta\psi,\cos\Delta\psi)/0.25$. Drivable-center requires
all 12 reference points to lie in, or within 0.2 m of, the drivable region. Both
rates use all scheduled arms, with invalid outputs receiving zero credit.

\paragraph{Partial completion and response metrics.}
The supplementary unified semantic metrics use the arm's required
index set $A_a$ in Table~\ref{tab:repro-nusc-binary}, let $I_a(k,\tau)$ denote its
lane-membership or speed-band test. Then
\begin{equation}
 C_a(\tau)=\min(1,D/8)\frac{1}{|A_a|}\sum_{k\in A_a}I_a(k,\tau).
\label{eq:repro-nusc-continuous}
\end{equation}
Invalid outputs have $C_a=0$. There is no opposite-target penalty or motion/map
multiplier. Thus SCS/PCS award partial temporal coverage; spatial membership itself
remains binary.

SRG uses the common window: the four terminal lane points or the eight speed
intervals. At each point, $d_a$ is distance to the corresponding corridor in meters,
or $\max(\ell_a-v,0,v-h_a)$ for a speed band $[\ell_a,h_a]$ in m/s. Define
\begin{equation}
 p_0(k,\tau)=
 \begin{cases}(d_1-d_0)/(d_0+d_1),&d_0+d_1>0,\\0,&d_0+d_1=0,\end{cases}
 \quad
 \mathrm{SRG}_i=\frac{\overline p_0(\tau_{i,0})-
 \overline p_0(\tau_{i,1})}{2}.
\label{eq:repro-nusc-srg}
\end{equation}
The ratio is computed before time averaging; no epsilon, temperature, clipping, or
progress gate is added. SRG lies in $[-1,1]$ and measures directional response, not
absolute completion. NBD instead uses all 12 points:
\begin{equation}
 \mathrm{NBD}_i=
 \frac{\sum_{k=1}^{12}\|p_{i,0,k}-p_{i,1,k}\|}
 {\sum_{k=1}^{12}(\|p_{i,0,k}\|+\|p_{i,1,k}\|)}.
\label{eq:repro-nusc-nbd}
\end{equation}
A zero denominator gives zero without an epsilon. Each PCU's ratio is computed
before averaging. Displayed SRG and NBD are multiplied by 100, and paired ADE in meters is retained
as an absolute-scale companion to NBD.

SCS/PCS retain every scheduled PCU with invalid completion zero. SRG, NBD, and paired
ADE average only valid pairs, record invalid values as null, and report coverage;
an empty valid set is undefined, not zero. On canonical heldout predictions,
OmniDrive has 116/120 valid arms and 56/60 valid pairs, while the other three
systems have 120/120 and 60/60. PCU-weighted averages use the 17/60 and 43/60 lane/speed
weights. A separately labeled family-macro average assigns each family weight
$1/2$; valid-pair filtering is applied within each family for response metrics.
The stored \path{SRG_coverage} field is a separate coverage-based diagnostic;
Eq.~\eqref{eq:repro-nusc-srg} defines the distance-based SRG used here.

\begin{table}[!htbp]
\centering\footnotesize
\caption{Additional nuScenes diagnostics and their full numeric conventions.
Conditional means always accompany validity coverage.}
\label{tab:repro-nusc-diagnostics}
\begin{tabularx}{\linewidth}{@{}p{0.24\linewidth}Xp{0.20\linewidth}@{}}
\toprule
Diagnostic & Exact definition / threshold & Invalid handling\\
\midrule
Speed-target MAE & Mean $|3.6v-v_*|$ on 8 intervals, $v_*=15$ or 30 km/h & Valid-arm mean\\
Speed-band violation & $3.6\,\mathrm{mean}\max(\ell-v,0,v-h)$, km/h & Valid-arm mean\\
Speed coverage & In-band fraction over 8 intervals, without displacement gate & Zero for invalid arms\\
Paired speed change & $\Delta v=3.6[\mathrm{mean}(v_{\mathrm{high}})-\mathrm{mean}(v_{\mathrm{low}})]$ & Valid-pair mean\\
Lane distance / coverage & Mean distance to own corridor; inside fraction over 4 change or 12 keep points & Distance conditional; coverage zero if invalid\\
Lane preference & $q=\mathrm{mean}(d_{\mathrm{current}}-d_{\mathrm{target}})$ on the common window; $\Delta q=q_{\mathrm{change}}-q_{\mathrm{keep}}$ & Valid arm / pair\\
Directional response & Speed: $\Delta v>1.0$ km/h; lane: $\Delta q>0.2$ m; no-dead-zone version uses $>10^{-6}$ & All family PCUs; invalid fails\\
Semantic margins & Speed: $3.6\min_k\min(v_k-\ell,h-v_k)$; lane: minimum signed boundary distance, inside positive & Valid-arm mean\\
Opposite-arm ADE & Mean pointwise separation over 12 points & Valid pairs\\
Paraphrase ADE & Canonical versus each same-arm rewrite; 360 planned heldout comparisons & Valid comparisons\\
Masked / shuffled ADE & Canonical versus same-arm control; 120 comparisons per heldout condition & Valid comparisons\\
\bottomrule
\end{tabularx}
\end{table}
The language targets of 15/30 km/h differ slightly from the witness speeds of
14.4/28.8 km/h and the scoring-band centers. Target MAE uses the language targets;
binary compliance uses the frozen task-specific bands. Semantic margins summarize
the active speed-band or lane-boundary constraint.

\paragraph{Language conditions and inference seed.}
Each arm has one canonical instruction and three fixed English paraphrases.
Together they form 600 sample-indexed text instances over the 75-PCU release;
templates can repeat across scenes. Masked and shuffled controls yield six
conditions in total and 900 outputs per model (720 heldout), all with seed 0.
The two arms already provide the opposite-instruction comparison; there is no
additional seventh condition. Masked passes an empty string to the runner, which
an adapter may render as a neutral prompt. Shuffled uses the opposite-arm text from
the next PCU in the same family after sorting by PCU ID; unchanged text falls back
to the current PCU's opposite arm. The executed donor pool is all 75 PCUs, followed by heldout filtering.
Masked/shuffled scores are evaluated against the original arm target and serve as
controlled language ablations.

The formal scoring wrapper rejects files missing requested conditions, seeds, or
arms; the underlying scorer assigns zero to a missing arm within an already present
condition. \path{Paraphrase_Worst_PCU} requires both arms to succeed under all
three paraphrases within the same PCU, rather than taking the minimum of three
aggregate CFPS values. Masked and shuffled gaps subtract the corresponding control
CFPS from canonical CFPS.
\FloatBarrier

\subsection{NAVSIM: released portfolio and frozen evaluation}
\label{app:repro-navsim}
% SOURCES: l2d_offline_navsim_report_20260924/{L2D_OFFLINE_NAVSIM_REPORT.md,
% artifacts/PAPER_CODEX_HANDOFF.md, artifacts/EXPERIMENT_PROTOCOL.md,
% artifacts/METRIC_SPEC.md, artifacts/release_manifest.json,
% artifacts/benchmark_release_validation.json, artifacts/scores_*.json}.

\paragraph{Release size and task composition.}
The released NAVSIM/OpenScene test portfolio contains 38 PCUs from 19 original
recordings, after reviewing 40 candidates and rejecting two. Each PCU has two arms
and four approved English wordings per arm: canonical and three paraphrases.
The evaluation therefore contains 304 queries per model and 1,520 queries across
five models. All 1,520 outputs were parsed into the declared trajectory schema.
Recordings define the coarsest reported cluster structure, and multiple PCUs may
share a recording.

\begin{table}[!htbp]
\centering\small
\caption{NAVSIM 1.0.0 task composition and documented counterfactual targets.
Counts and percentages use the 38-PCU release, not the earlier 12-candidate pool.}
\label{tab:repro-navsim-distribution}
\begin{tabularx}{\linewidth}{@{}p{0.23\linewidth}crrX@{}}
\toprule
Family & Scope & PCUs & Share & Counterfactual alternatives\\
\midrule
Lane choice & A & 12 & 31.58\% & Keep versus change one lane; eight left-target and four right-target PCUs\\
Junction branch & A & 1 & 2.63\% & Left turn versus straight at a common junction\\
Pass / follow & A & 1 & 2.63\% & Follow the same actor versus pass using an adjacent lane\\
Speed band & B & 7 & 18.42\% & Two disjoint speed bands over the declared interval\\
Object-relative speed & B & 2 & 5.26\% & Two speed requirements while following the same actor\\
Landmark stop & C & 10 & 26.32\% & Distinct stopping locations grounded in visible landmarks\\
Local goal & C & 1 & 2.63\% & Adjacent left-lane versus right-lane target\\
Event order & D & 4 & 10.53\% & Slow then change lane versus change lane then slow\\
\midrule
Total & A--D & 38 & 100.00\% & Scope counts A/B/C/D: 14/9/11/4\\
\bottomrule
\end{tabularx}
\end{table}

\paragraph{Counterfactual construction and qualification.}
Candidates are mined using observed ego state, camera/LiDAR history, map geometry,
and logged actors. Each arm receives a private executable contract and feasibility
witness. Lane and local-goal tasks bind different lane regions; branch tasks use a
junction goal region; landmark stopping binds a visible named or described landmark
to a stopping region. Actor-conditioned tasks bind the same actor across both arms.
Order tasks retain the two events while exchanging their required order. These
contracts define acceptable behavior sets, rather than forcing the model to match
the witness path point by point.

Release gates check sensor availability, source-split isolation, public/private PCU
ID agreement, private-label leakage, own/opposite contract behavior, witness replay,
map footprint, logged actors, and collision screening on a 0.1 s grid. Human review
records a reviewer, pass/reject decision, and rationale. All 38 released PCUs pass
the frozen release checks, and oracle replay gives OL-SemSISR/OL-SemCFPS of 1.0 as a
construction consistency test.

The public package contains observation references and English language, while
source tokens, future trajectories, map targets, witnesses, and contracts remain
private. Camera/LiDAR paths are relative to the OpenScene v1.1 test sensor root.
The release manifest identifies three deterministic archives: public inputs,
private evaluation assets, and evaluator code. Repeated public/private/evaluator builds are reported as byte-identical.

\paragraph{Common output, parsing, and quality.}
Predictions contain eight rear-axle poses in the anchor ego frame at
$0.5,1.0,\ldots,4.0$ s. Model headings are retained when supplied; otherwise headings
are computed from successive positions. Parse failures remain invalid and receive
zero completion. The released motion-quality summary uses maximum absolute
acceleration $\le3.5$ m/s$^2$ and maximum yaw rate $\le0.8$ rad/s. These quality checks are reported separately from the semantic contracts.

\paragraph{Binary and graded semantics.}
The released binary result uses own-contract success and opposite-contract failure
as in Eq.~\eqref{eq:repro-exclusive}. The graded function is implemented by
\path{task_completion} in \texttt{verify.py}. A trajectory satisfying the native
binary contract has completion 1; outside the accepted set, primitive scores decay
linearly over one native tolerance or accepted interval width. Conjunctions take
the minimum. Lane completion combines minimum vehicle-footprint overlap with
terminal progress. Stop completion combines the target-region requirement and
stopping speed. Event-order completion is 0 if neither event occurs, 0.5 if exactly one occurs,
and otherwise is graded by the event gap using the frozen verifier.
The evaluator records the full $2\times2$ matrix $C_a(\tau_b)$, but SCS and PCS use
only the own-arm entries; opposite rejection is a binary requirement, not a
universal multiplier on graded completion.

\begin{table}[!htbp]
\centering\footnotesize
\caption{NAVSIM task-specific scoring components. Global rules are shown here;
per-PCU geometry and thresholds are read from the frozen private sidecars.}
\label{tab:repro-navsim-parameters}
\begin{tabularx}{\linewidth}{@{}p{0.21\linewidth}X@{}}
\toprule
Component & Frozen scoring rule / parameter source\\
\midrule
Common grid / quality & 8 poses at 0.5--4.0 s in the rear-axle frame; maximum acceleration 3.5 m/s$^2$ and maximum yaw rate 0.8 rad/s\\
Lane / local goal & Vehicle-footprint overlap, terminal progress, and target polygons from the per-PCU private contract\\
Junction goal & Goal region, active interval, and deadline stored in the per-PCU contract\\
Landmark stop & Landmark-grounded stop region, stopping threshold, and hold requirement stored in the private contract\\
Speed band & Per-PCU lower/upper speed bands and active time range; primitive completion decays linearly over the native tolerance\\
Object-relative speed & Speed-band requirement combined with the bound lead-actor geometry and relative-position constraints\\
Pass / follow & Shared actor binding with lane/gap/overtaking constraints and task-specific active windows\\
Event order & Ordered event predicates; 0 if neither occurs, 0.5 if one occurs, otherwise event-gap grading\\
Collision audit & Logged-actor/ego-footprint overlap screening on the frozen 0.1 s audit grid\\
\bottomrule
\end{tabularx}
\end{table}

Exact per-PCU polygons, actor bindings, event thresholds, speed bands, and grading
tolerances are loaded from the frozen private sidecars by the released evaluator;
Table~\ref{tab:repro-navsim-parameters} summarizes how they enter scoring.

\paragraph{Aggregation and reporting.}
All headline metrics average four wordings equally within each PCU, followed by
an equal average across 38 PCUs. Families are not equally weighted. The paired
imbalance is checked against the algebraic identity
\begin{equation}
 \mathrm{SCS}-\mathrm{PCS}
 =50\left\langle|c_{ir,0}-c_{ir,1}|\right\rangle.
\label{eq:repro-imbalance}
\end{equation}
The recorded maximum absolute residual across models is $1.07\times10^{-14}$.
Score JSON top-level binary fields are canonical-only and lie in $[0,1]$;
\path{equal_weight_rewrites} contains the four-wording headline scores on the
0--100 scale. These fields are not interchangeable. A separate 3 s common-horizon diagnostic covers 22 PCUs from 13 recording clusters
and is reported independently from the full 4 s, 38-PCU result.

\paragraph{Model adapters and randomness.}
The five models are Qwen-Drive-1.0-4B, AutoVLA, DriveMA-2B, ReCogDrive, and OneVL.
Seed 42 is reset for every query. Deterministic generation is used where the released
interface supports it; sampling-based interfaces retain their released sampling
procedure. Required non-language command tensors in Qwen-Drive and ReCogDrive are
fixed to neutral/straight for all PCUs, arms, and wordings. Full language is the only
candidate-dependent input, although sensor coverage and history representation
differ across models.

\begin{table}[!htbp]
\centering\footnotesize
\caption{Frozen NAVSIM adapters. Input differences are retained and disclosed rather
than hidden by the common output grid.}
\label{tab:repro-navsim-adapters}
\begin{tabularx}{\linewidth}{@{}p{0.18\linewidth}XX@{}}
\toprule
Model & Released inputs and instruction placement & Common-grid conversion\\
\midrule
Qwen-Drive & Four frames of front/front-left/front-right; 16-point interpolation of observed 1.5 s history; full instruction in scene prompt & One reasoning-mode RL-planner sample; 10 Hz future points numbered 5,10,$\ldots$,40\\
AutoVLA & Four frames of three forward cameras; current velocity/acceleration; driving-instruction field & First eight released 0.5 s action-token poses\\
DriveMA-2B & Current three forward cameras; observed 1.5 s history and state; high-level-intent field & Released two-turn decision/trajectory protocol; 1 Hz positions linearly resampled to 0.5 s\\
ReCogDrive & Current front image; four observed poses and state; text replaces discrete command phrase & Released eight-pose DDIM planner output\\
OneVL & Current front image; three historical poses and state; full Command field & Released latent-chain-of-thought generation of eight 0.5 s poses\\
\bottomrule
\end{tabularx}
\end{table}

\paragraph{Implementation identity.}
The supplied protocol records PyTorch 2.5.1+cu124 on NVIDIA A100 40 GB GPUs;
Transformers 5.14.1 for Qwen-Drive, 4.57.0 for OneVL, and a released-compatible 4.49
development build for legacy adapters. These are the recorded experiment versions. Table~\ref{tab:repro-navsim-revisions} records the frozen
code and primary checkpoint revisions. Additional Qwen2.5-VL and ReCogDrive VLM
component revisions remain in \path{EXPERIMENT_PROTOCOL.md}.

\begin{table}[!htbp]
\centering\footnotesize
\caption{NAVSIM model revision identifiers from the frozen experiment protocol.
Line breaks inside identifiers are typographic only.}
\label{tab:repro-navsim-revisions}
\begin{tabularx}{\linewidth}{@{}lXX@{}}
\toprule
Model & Repository commit & Primary checkpoint revision\\
\midrule
Qwen-Drive & \texttt{28091c15\allowbreak 32e869bc\allowbreak 7aee91fc\allowbreak 0aef6b3e\allowbreak 6fd0b2e0} & \texttt{28484089\allowbreak a7cc8c33\allowbreak 5cf5089f\allowbreak b0745cf7\allowbreak c49b6eaa}\\
AutoVLA & \texttt{ba34eed7\allowbreak 4ce6729e\allowbreak 7986592d\allowbreak 0e66cbac\allowbreak a397b4fa} & \texttt{a7d7ba3e\allowbreak d7529b24\allowbreak 8d2694c2\allowbreak defa31b3\allowbreak 5208340f}\\
DriveMA-2B & \texttt{de20e6c6\allowbreak 4878bc34\allowbreak 13d26d69\allowbreak 362c3be9\allowbreak b3a91533} & \texttt{d5f4905d\allowbreak 2032da94\allowbreak 3a7ae8f2\allowbreak 2d5b2b8e\allowbreak fbe4b0f7}\\
ReCogDrive & \texttt{6b8d8f5e\allowbreak 01346c71\allowbreak 094651c8\allowbreak 1dcaf664\allowbreak 05dbc04e} & \texttt{7346ede7\allowbreak 3f03ed57\allowbreak 7811024f\allowbreak 1a5d16fb\allowbreak 063a47b9}\\
OneVL & \texttt{f4e6f055\allowbreak 72b2f666\allowbreak 11259bcd\allowbreak 55635298\allowbreak 33f30469} & \texttt{c1e893e9\allowbreak a03b8312\allowbreak f9328d2b\allowbreak 51267836\allowbreak a9f7bf8a}\\
\bottomrule
\end{tabularx}
\end{table}
\FloatBarrier

\subsection{WOD-E2E: constructed tasks and semantic trajectory evaluation}
\label{app:repro-wod}

\paragraph{Available data and evaluation denominators.}
The WOD-E2E construction scans 38 of the 93 numbered validation TFRecord
shards available locally. The resulting source contains 43,394 frames,
deduplicated by frame name and timestamp, and 203 rated anchors/sequences,
each associated with three candidate future trajectories.
Pairing the available trajectories produces 609 raw candidate pairs.
After requiring a complete 5\,s prediction window, 563 candidate pairs remain.
Among them, 441 pairs across 189 anchors admit two distinct English
instructions, resulting in 882 model requests.

The final semantic-evaluation set contains 241 PCUs and 482 requests across
126 anchors. Each PCU contains two mutually exclusive instructions defined
over the same recorded driving context. The evaluated set contains 165
\path{SPEED_CHOICE} PCUs and 76 \path{STOP_GO} PCUs.

\begin{table}[!htbp]
\centering\small
\caption{WOD-E2E P0 task-family distribution. Candidate construction,
language-pair construction, and the final semantic-evaluation set are
reported with separate denominators.}
\label{tab:repro-wod-distribution}
\begin{tabularx}{\linewidth}{@{}Xrrrrr@{}}
\toprule
Task family & Pairs & \% of 563 & \% of 441 & Requests & Evaluated PCUs\\
\midrule
\path{SPEED_CHOICE}
& 175 & 31.08 & 39.68 & 350 & 165\\
\path{STOP_GO}
& 112 & 19.89 & 25.40 & 224 & 76\\
\path{LATERAL_CHOICE}
& 112 & 19.89 & 25.40 & 224 & 0\\
\path{TURN_WAIT}
& 28 & 4.97 & 6.35 & 56 & 0\\
\path{TURN_STRAIGHT}
& 14 & 2.49 & 3.17 & 28 & 0\\
\texttt{UNRESOLVED}
& 122 & 21.67 & -- & 0 & 0\\
\midrule
Total & 563 & 100.00 & 100.00 & 882 & 241\\
\bottomrule
\end{tabularx}
\end{table}

\paragraph{Prediction horizon and coordinate convention.}
Each prediction is evaluated on the fixed grid
\begin{equation}
    t_k = 0.25k~\mathrm{s},
    \qquad k=1,\ldots,20,
\end{equation}
covering the future interval $(0,5]$\,s.
The implicit anchor $p_0=(0,0)$ represents the current ego position and is
not included among the submitted future points.
Positions are expressed in an anchor-fixed right-handed ego frame whose
origin is the rear-axle midpoint, with $+x$ forward, $+y$ left, and $+z$
upward. The semantic evaluator operates on the planar trajectory
$p_k=(x_k,y_k)$.

All reference and prediction trajectories are normalized to this common
20-point representation before scoring. DriveMA-2B and Fast-dDrive produce
five future positions at 1, 2, 3, 4, and 5\,s; the known current position is
prepended and linear interpolation produces the 4\,Hz evaluation sequence.
Qwen-Drive predicts a denser trajectory over the same 5\,s horizon, which is
resampled onto the same 0.25\,s evaluation grid. No private task target is
used to select among multiple predictions.

\paragraph{Observed context and model inputs.}
The recorded ego-state history contains 16 samples at 4\,Hz over the
preceding 4\,s window. The source data provide eight camera views, while the
evaluated adapters use the forward-facing views required by their respective
model interfaces. Temporal observations, ego history, calibration, and
high-level driving context are fixed within each PCU.

For the two arms of a PCU, the model receives the same scene identifier,
visual observations, state history, calibration, and non-language context.
Only the natural-language instruction changes. All task-specific semantic
contracts and scoring parameters remain private to the evaluator.

\paragraph{Counterfactual task qualification.}
The reported WOD-E2E evaluation focuses on the two task families for which
the future trajectory directly exposes a complete and mutually exclusive
behavioral distinction: \path{STOP_GO} and \path{SPEED_CHOICE}.

Each PCU contains two frozen semantic contracts,
$P_0$ and $P_1$. Qualification requires that both target behaviors form
valid opposed alternatives under the same recorded scene context. In
particular, each reference arm must satisfy its own semantic contract while
rejecting the paired alternative:
\begin{equation}
    P_a(\tau^{\mathrm{ref}}_a)=1,
    \qquad
    P_{1-a}(\tau^{\mathrm{ref}}_a)=0,
    \qquad a\in\{0,1\}.
\end{equation}
Applying the frozen qualification procedure yields 76 \path{STOP_GO}
PCUs and 165 \path{SPEED_CHOICE} PCUs, for a fixed evaluation denominator
of 241 PCUs and 482 instruction-conditioned predictions.

\paragraph{Trajectory features and valid-output gate.}
For the submitted future trajectory
$\tau=(p_1,\ldots,p_{20})$ and the implicit current position $p_0=(0,0)$,
we define
\begin{equation}
\begin{aligned}
v_k
&=
\frac{\|p_k-p_{k-1}\|_2}{0.25},
\\
L
&=
0.25\sum_{k=1}^{20}v_k,
\\
v_{\mathrm{tail}}
&=
\frac{1}{4}\sum_{k=17}^{20}v_k,
\\
v_{\mathrm{late}}
&=
\frac{1}{9}\sum_{k=12}^{20}v_k .
\end{aligned}
\label{eq:repro-wod-features}
\end{equation}

Here $v_{\mathrm{tail}}$ summarizes the final second of prediction, while
$v_{\mathrm{late}}$ summarizes the later portion of the 5\,s horizon used by
the speed-choice contracts.

A valid submission must contain exactly 20 future positions, with exactly
two finite numeric coordinates per point. We additionally require
\begin{equation}
    \max_k v_k \le 50~\mathrm{m/s},
    \qquad
    \max_{k=2,\ldots,20}
    \frac{|v_k-v_{k-1}|}{0.25}
    \le 20~\mathrm{m/s^2}.
\end{equation}
Missing or invalid predictions receive zero binary and graded completion
credit and remain in the fixed 241-PCU denominator.

\paragraph{Binary semantic criteria and graded completion.}
For $a<b$, define the clipped linear ramps
\begin{equation}
\begin{aligned}
U(z;a,b)
&=
\min\left\{
1,
\max\left\{
0,
\frac{z-a}{b-a}
\right\}
\right\},
\\
D_{\downarrow}(z;a,b)
&=
1-U(z;a,b).
\end{aligned}
\label{eq:repro-wod-ramps}
\end{equation}

For each \path{STOP_GO} PCU, let
$q_{\mathrm{stop}}$ denote the frozen stopping target and
\begin{equation}
    d_{\mathrm{stop}}
    =
    \|p_{20}-q_{\mathrm{stop}}\|_2.
\end{equation}
The binary semantic predicates are
\begin{equation}
\begin{aligned}
P_{\mathrm{stop}}(\tau)
&=
[v_{\mathrm{tail}}\le0.6]
\land
[d_{\mathrm{stop}}\le2],
\\
P_{\mathrm{go}}(\tau)
&=
[v_{\mathrm{tail}}\ge2]
\land
[x_{20}\ge8],
\end{aligned}
\label{eq:repro-wod-stop-go-binary}
\end{equation}
and the corresponding graded completion scores are
\begin{equation}
\begin{aligned}
C_{\mathrm{stop}}(\tau)
&=
\min\{
D_{\downarrow}(v_{\mathrm{tail}};0.6,2),
D_{\downarrow}(d_{\mathrm{stop}};2,5)
\},
\\
C_{\mathrm{go}}(\tau)
&=
\min\{
U(v_{\mathrm{tail}};0.6,2),
U(x_{20};0,8)
\}.
\end{aligned}
\label{eq:repro-wod-stop-go}
\end{equation}

For each \path{SPEED_CHOICE} PCU, the private evaluator stores two
pair-specific speed boundaries
\begin{equation}
    \ell_i < h_i ,
\end{equation}
which define the two mutually exclusive requested speed behaviors for PCU
$i$. The binary predicates are
\begin{equation}
\begin{aligned}
P_{\mathrm{slow}}(\tau)
&=
[v_{\mathrm{late}}\le\ell_i]
\land
[L\ge2],
\\
P_{\mathrm{normal}}(\tau)
&=
[v_{\mathrm{late}}\ge h_i]
\land
[L\ge2],
\end{aligned}
\label{eq:repro-wod-speed-binary}
\end{equation}
with graded completion
\begin{equation}
\begin{aligned}
C_{\mathrm{slow}}(\tau)
&=
\min\{
D_{\downarrow}(v_{\mathrm{late}};\ell_i,h_i),
U(L;0,2)
\},
\\
C_{\mathrm{normal}}(\tau)
&=
\min\{
U(v_{\mathrm{late}};\ell_i,h_i),
U(L;0,2)
\}.
\end{aligned}
\label{eq:repro-wod-speed}
\end{equation}

\begin{table}[!htbp]
\centering\small
\caption{WOD-E2E semantic scoring parameters. Speed and distance are
reported in m/s and m, respectively. The speed-choice boundaries
$\ell_i$ and $h_i$ are frozen per-PCU evaluator parameters.}
\label{tab:repro-wod-semantic-parameters}
\begin{tabularx}{\linewidth}{@{}Xlll@{}}
\toprule
Criterion & Binary success & Completion $1$ & Completion $0$\\
\midrule
Stop tail speed
& $v_{\mathrm{tail}}\le0.6$
& $\le0.6$
& $\ge2$\\

Stop endpoint distance
& $d_{\mathrm{stop}}\le2$
& $\le2$
& $\ge5$\\

Go tail speed
& $v_{\mathrm{tail}}\ge2$
& $\ge2$
& $\le0.6$\\

Go forward endpoint
& $x_{20}\ge8$
& $\ge8$
& $\le0$\\

Slow late speed
& $v_{\mathrm{late}}\le\ell_i$
& $\le\ell_i$
& $\ge h_i$\\

Normal late speed
& $v_{\mathrm{late}}\ge h_i$
& $\ge h_i$
& $\le\ell_i$\\

Speed-task progress
& $L\ge2$
& $\ge2$
& $\le0$\\
\bottomrule
\end{tabularx}
\end{table}

\paragraph{Semantic success and paired aggregation.}
For instruction arm $a$, binary semantic success requires satisfying the
requested criterion while rejecting the paired alternative:
\begin{equation}
    B_a
    =
    P_a(\tau_a)
    \land
    \neg P_{1-a}(\tau_a).
\end{equation}
The corresponding continuous value is the own-arm completion
$C_a(\tau_a)\in[0,1]$.

Using the common open-loop aggregation defined in
Section~\ref{sec:metrics}, WOD-E2E reports
\begin{equation}
\begin{aligned}
\mathrm{SemSISR}
&=
100
\left\langle
\frac{B_0+B_1}{2}
\right\rangle,
\\
\mathrm{SemCFPS}
&=
100
\left\langle
B_0B_1
\right\rangle,
\\
\mathrm{SCS}
&=
100
\left\langle
\frac{C_0+C_1}{2}
\right\rangle,
\\
\mathrm{PCS}
&=
100
\left\langle
\min(C_0,C_1)
\right\rangle .
\end{aligned}
\label{eq:repro-wod-metrics}
\end{equation}
All 241 qualified PCUs receive equal weight. Missing or invalid model
outputs contribute zero rather than reducing the denominator. This
definition is identical in form to the semantic paired evaluation used by
the other Language2Drive open-loop tracks, while retaining WOD-E2E-specific
task predicates.

\paragraph{Geometric response diagnostic.}
We additionally report a trajectory-alignment diagnostic over the 441
instruction-paired samples. For prediction $\tau$ and reference $R$, define
\begin{equation}
    \operatorname{ADE}(\tau,R)
    =
    \frac{1}{20}
    \sum_{k=1}^{20}
    \|p_k-R_k\|_2 .
\end{equation}
For each arm,
\begin{equation}
    m
    =
    \operatorname{ADE}(\tau,R_{\mathrm{other}})
    -
    \operatorname{ADE}(\tau,R_{\mathrm{own}}).
\end{equation}
An arm is counted as geometrically aligned when
$m>0.5$\,m, and a pair is aligned only when both valid arms satisfy this
criterion. This diagnostic measures whether the two instruction-conditioned
predictions move toward their corresponding alternatives and is reported
separately from SemSISR, SemCFPS, SCS, and PCS.

\begin{table}[!htbp]
\centering\small
\caption{WOD-E2E geometric diagnostics on the 441 instruction-paired
samples. ADE is measured in meters.}
\label{tab:wod-geometry}
\begin{tabularx}{\linewidth}{@{}Xrrr@{}}
\toprule
System & Valid outputs & Own-reference ADE $\downarrow$ & Aligned pairs\\
\midrule
DriveMA-2B
& 882/882 & 2.520 & 6/441\\
Fast-dDrive
& 882/882 & 2.817 & 5/441\\
Qwen-Drive-SFT
& 882/882 & 2.959 & 3/441\\
Qwen-Drive-RL
& 876/882 & 2.924 & 2/436\\
\bottomrule
\end{tabularx}
\end{table}

The paired reference trajectories have a mean mutual ADE of
3.962\,m, whereas the mean separation between paired model predictions is
0.178--0.355\,m across the four evaluated systems. These geometric
measurements complement the semantic success metrics by characterizing the
magnitude of the trajectory response to different instructions.

\paragraph{Reporting scope.}
WOD-E2E results are reported as protocol-specific open-loop
instruction-following measurements on the frozen evaluation set.
They are interpreted together with the nuScenes and NAVSIM tracks using the
common Language2Drive open-loop metrics
SemSISR, SemCFPS, SCS, and PCS. Dataset-specific task predicates remain
separate, and scores are not pooled across datasets.

\subsection{Language interventions and statistical uncertainty}
\label{app:repro-controls-statistics}
% SOURCES: REPORT_ZH.md; DATASET_AND_EVALUATION_PROTOCOL.md Section 10;
% NAVSIM artifacts/scores_*.json; original reproducibility.tex.

\paragraph{nuScenes intervals.}
The original binary SemISR and Sem-CFPS intervals use 1,000 scene-cluster bootstrap
replicates with Python \texttt{random.Random(2026)}. After sorting the replicate
values, the endpoints have indices $\lfloor0.025(B-1)\rfloor$ and
$\lfloor0.975(B-1)\rfloor$, with $B=1000$. Supplementary continuous metrics use
2,000 replicates with NumPy \path{default_rng(2026)} and linearly interpolated
\texttt{quantile([.025,.975])}. Family-macro intervals resample scenes independently
inside each family before averaging the family means; replicates with an undefined
valid-output denominator are excluded. A scene currently equals one PCU, with its
paired arms retained together. The paired arms of each scene are retained together throughout resampling.

\paragraph{NAVSIM intervals.}
Stored canonical binary intervals use 1,000 bootstrap replicates over 19 original
recordings (13 for the 22-PCU common-3 s subset), retaining the PCUs within each
selected recording. The canonical NAVSIM intervals are computed over recording clusters; WOD-E2E proxy
results are reported deterministically for the frozen prediction set.

\paragraph{Completed RealSim intervention subset.}
The September 26 supplemental report adds 42 successful runtime conditions:
two SimLingo versions $\times$ three PCUs $\times$ seven conditions. These conditions
are normal, two same-instruction repeats, opposite, empty, neutral, and shuffled.
The selected PCUs are the first speed, lane, and route PCUs in frozen source order.
Each condition independently resets the closed-loop session, while the four-camera,
pose, route, and other non-language prefixes match before intervention. Normal and
its two repeats share seed 1701.

Same-instruction NBD averages all three pairwise comparisons among the normal
runs. Opposite-instruction NBD and SRG average comparisons against those three runs.
Opposite-minus-repeat differences retain negative values. Shuffled cyclically
reassigns the complete normal instruction across the three scenes, rather than
shuffling words. Empty removes only the task-instruction string, retaining the
model's state and task template. Empty/neutral/shuffled completion is scored against
the original target solely as an ablation. Zero common semantic samples follow the scorer's fixed zero convention and are
accompanied by active-sample counts and an informative flag. These controls are
reported as supplemental diagnostics alongside the full 30-PCU RealSim comparison.

\subsection{Reproduction assets and checksums}
\label{app:repro-artifacts}
% SOURCE: nuScenes protocol Section 12; NAVSIM release manifest; WOD spec Section 9.

\paragraph{Evidence layers.}
Exact replay is organized at three levels. Aggregate score JSON/CSV files reproduce
the reported tables; normalized predictions, private PCU contracts, and the frozen
scorer reproduce trajectory-level metrics; rerunning inference additionally uses
the corresponding upstream sensors, public manifests, checkpoints, adapters, and
recorded software environments. Release manifests and SHA-256 hashes bind these
assets to the experiment version used in this appendix.

The nuScenes protocol exports construction statistics, scoring parameters, and source
hashes. NAVSIM provides public/private/evaluator archive identities together with
score matrices, protocol tables, and release-validation records. WOD-E2E records the
frozen P0 proxy manifest, per-instance proxy contracts, adapter outputs, and scoring
artifacts. These assets preserve the mapping from each reported aggregate back to
its PCU-level prediction and scoring configuration.

\begin{table}[!htbp]
\centering\footnotesize
\caption{Recorded release identities used to bind the reported experiments to
their public, private, evaluator, and scoring artifacts.}
\label{tab:repro-hashes}
\begin{tabularx}{\linewidth}{@{}p{0.30\linewidth}X@{}}
\toprule
Artifact & Recorded SHA-256\\
\midrule
nuScenes public inputs & \texttt{b2d3a6e2\allowbreak deccb91e\allowbreak 1e7d1694\allowbreak ff9589d5\allowbreak 577eb89e\allowbreak 834c674d\allowbreak c36768bd\allowbreak 213962d5}\\
nuScenes private sidecars & \texttt{f2506b15\allowbreak 8aa34440\allowbreak 999ef621\allowbreak 5fe21dc0\allowbreak 8fccd513\allowbreak 8d6db5a0\allowbreak ac4fdc5e\allowbreak 8d9ea7ce}\\
nuScenes review sheet & \texttt{6744858c\allowbreak f01921f4\allowbreak 3ba6fca1\allowbreak 1ef7d63d\allowbreak 6f3a4512\allowbreak 03f897c8\allowbreak 7e289e4a\allowbreak 6301b30f}\\
nuScenes binary scorer & \texttt{436bf1fc\allowbreak b8393b2b\allowbreak 2da27974\allowbreak c363eef8\allowbreak 13dc7384\allowbreak 9a7063b1\allowbreak aeb947c2\allowbreak 9fef1eeb}\\
NAVSIM public archive & \texttt{0e9f010a\allowbreak e39f9dec\allowbreak c049ff4f\allowbreak f9f79109\allowbreak e92cd27d\allowbreak f27f55b5\allowbreak cc80a18f\allowbreak 35d0e799}\\
NAVSIM private archive & \texttt{e520a6f4\allowbreak 455b82e1\allowbreak e6dbfe3c\allowbreak 1de30b07\allowbreak 86153f5a\allowbreak 3dbedf85\allowbreak 9aae7408\allowbreak 12bfea6f}\\
NAVSIM evaluator archive & \texttt{720b3bfd\allowbreak 3c899933\allowbreak 7a5b4b08\allowbreak 7426c0ba\allowbreak 95414b56\allowbreak dc8c0b27\allowbreak 371d3a32\allowbreak a46ebc04}\\
NAVSIM scorer & \texttt{f86ecc33\allowbreak 1facc313\allowbreak 7fe20f4a\allowbreak d2300bc6\allowbreak c035b451\allowbreak 6aecb5bb\allowbreak edc35684\allowbreak 2b3f8032}\\
NAVSIM verifier & \texttt{ecde4152\allowbreak 2adb70df\allowbreak 98487031\allowbreak 6ce2a30b\allowbreak 548f0101\allowbreak eeeeaf80\allowbreak 9c97f499\allowbreak 19195fd1}\\
\bottomrule
\end{tabularx}
\end{table}

\paragraph{Minimum portable replay contract.}
Each private record must retain its PCU/source cluster ID, frozen arm texts and
rewrites, public-input hash, coordinate/reference metadata, time grid, own and
opposite executable predicates, all instance geometry and numeric tolerances,
continuous-kernel version, feasibility witnesses, and qualification decision.
A prediction record must bind the condition, seed, checkpoint, adapter, original
output, normalized trajectory, validity, and selected attempt. Scoring must verify
ID coverage, paired input equality, time/frame consistency, private-label isolation,
and the declared eligible denominator before evaluating both targets on each output.
The scorer verifies ID coverage and the fixed eligible denominator before computing
the paired metrics.

For nuScenes, the referenced reproduction entry points are
\path{model_eval/REPRODUCE.md}, \path{protocol/evaluation_parameters.json},
\path{protocol/dataset_statistics.json}, and
\path{protocol/protocol_evidence.json}. NAVSIM replay uses the public/private/evaluator archives identified by the release
manifest, including the frozen geometry dependencies. WOD-E2E replay uses the frozen
P0 manifest, \path{proxy_contracts.jsonl}, the
\path{p0_kinematic_proxy_v0.1.0} scorer, normalization adapters, and
per-request protocol records. Together with the nuScenes resources above, these files
fully specify the reported open-loop evaluation interfaces and aggregation rules.
\FloatBarrier

\clearpage
\section{Extended quantitative results}
\label{app:analysis}\label{app:split}
\subsection{Task scope}
% Table~\ref{tab:success-decomposition} reports counts behind the single--paired gap. If $n_2$ pairs have two strict successes and $n_1$ have one, successful arms total $2n_2+n_1$. Across the six principal CARLA systems, $n_2=4$ and $n_1=110$, so 110/118 successful arms (93.22\%) have no successful counterpart. Including the separately labeled Senna bridge would give 157/165 (95.15\%); the primary analysis keeps that configuration separate.
% \input{source/tables/success_decomposition}

% The closed-loop relaxed proxy and strict execution assess different completion requirements. SimLingo has 187 relaxed-proxy successes: 123 overlap safety violations and 64 are safe. Of those 64, 30 succeed strictly and 34 do not. Its full requested-criterion count is also 30, so all 157 non-strict relaxed successes already fail full requested completion. These overlapping components are not an additive attribution of independent failure causes.
\begin{table}[!htbp]
\centering\small
\caption{Strict CARLA instruction success by scope. Each scope has 16 PCUs, 96 rollouts, and 48 paired instances per system. A: atomic; B: persistent; C: grounded; D: program.}
\label{tab:scope-results}\label{tab:scope-sisr}
\setlength{\tabcolsep}{10pt}
\begin{tabularx}{\linewidth}{@{}Xrrrr@{}}
\toprule
System & A & B & C & D\\
\midrule
LMDrive & \textbf{13.54} & \textbf{22.92} & 6.25 & \textbf{6.25}\\
Alpamayo 1.5 & 3.12 & 16.67 & \textbf{12.50} & 0.00\\
SimLingo & \textbf{13.54} & 12.50 & 4.17 & 1.04\\
Qwen-Drive-1.0-4B & 0.00 & 4.17 & 3.12 & 0.00\\
OneVL & 0.00 & 1.04 & 1.04 & 0.00\\
Impromptu-VLA & 1.04 & 0.00 & 0.00 & 0.00\\
\bottomrule
\end{tabularx}
\end{table}

Scope D contains five ordered-visit PCUs, five conditional-fallback PCUs, four reference-memory PCUs, and two until-invariant PCUs. Across six systems these contribute 180, 180, 144, and 72 rollouts. The first three families have no strict successes; all seven D successes occur in until-invariant tasks. Across A--C the leading strict system changes. Persistent scope B exceeds atomic scope A for LMDrive (22.92 versus 13.54) and Alpamayo~1.5 (16.67 versus 3.12), confirming that the scopes are not a monotonic difficulty ordering.
\begin{figure}[!htbp]
\centering\includegraphics[width=0.88\linewidth]{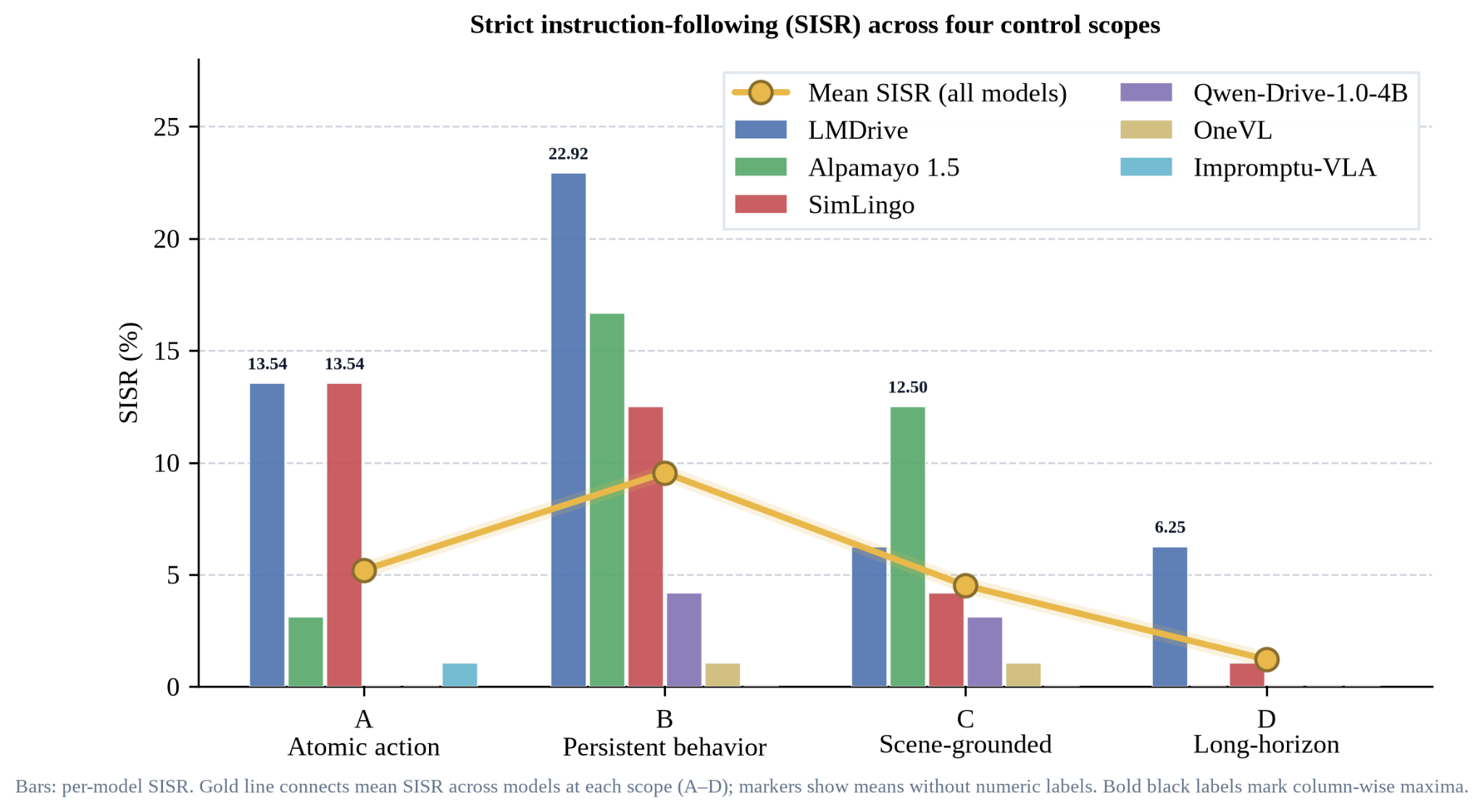}
\caption{Strict CARLA success by control scope, with 96 rollouts per system and scope. Gold markers average six systems. D has mean SISR 1.22\%; its seven strict successes all maintain an invariant until an event. Connecting lines join categorical scopes rather than a continuous difficulty axis.}
\label{fig:scope-results}
\end{figure}

\subsection{Development/test split and wording robustness}
% Table~\ref{tab:split} partitions the frozen CARLA cohort into 84 development and 300 test rollouts per system. The two subsets partition the same evaluation cohort.
% \input{source/tables/split}

For one PCU/arm, its strict outcomes form a three-bit expression pattern. A changed outcome has one or two successes; an all-expression success has three. Conditioning the changed-outcome count on arms with at least one success separates fragile successes from consistent failures. SimLingo changes on 9 of 15 sometimes-successful arms (60\%), Alpamayo~1.5 on 12 of 16 (75\%), and LMDrive on three of 17 (17.65\%).
\begin{table}[!htbp]
\caption{Instruction sensitivity across three CARLA expressions. Counts use 128 semantic arms per system. ``All three requires the same arm to succeed under every expression; ``any requires at least one. Flip/any conditions on arms with some strict success.}
\label{tab:sensitivity}
\centering\small
\setlength{\tabcolsep}{4pt}
\begin{tabular}{@{}lrrrrr@{}}
\toprule
 & \multicolumn{4}{c}{Strict outcomes} & Semantic\\
\cmidrule(lr){2-5}
System & Flip & All three & Any & Flip/any (\%) & Flip\\
\midrule
LMDrive & 3 & 14 & 17 & 17.65 & 5\\
Alpamayo 1.5 & 12 & 4 & 16 & 75.00 & 8\\
SimLingo & 9 & 6 & 15 & 60.00 & 7\\
Qwen-Drive-1.0-4B & 2 & 1 & 3 & 66.67 & 9\\
OneVL & 2 & 0 & 2 & 100.00 & 5\\
Impromptu-VLA & 1 & 0 & 1 & 100.00 & 15\\
Senna bridge & 1 & 15 & 16 & 6.25 & 4\\
\bottomrule
\end{tabular}
\end{table}

\begin{table}[!htbp]
\caption{CARLA paired and paraphrase diagnostics. A--D: successful pairs out of 48. $I_s/I_m$: numbers of arm groups with inconsistent strict/semantic outcomes across paraphrases, out of 128. All-3: arms strictly successful for all three paraphrases, out of 128. All systems have zero worst-paraphrase paired success out of 64 PCUs.}\label{tab:pairs}
\centering\small
\setlength{\tabcolsep}{3.5pt}
\begin{tabular}{@{}lrrrrrrr@{}}
\toprule
System & A & B & C & D & $I_s$ & $I_m$ & All-3\\
\midrule
LMDrive & 0 & 0 & 0 & 0 & 3 & 5 & 14\\
SimLingo & 2 & 0 & 0 & 0 & 9 & 7 & 6\\
Alpamayo 1.5 & 0 & 2 & 0 & 0 & 12 & 8 & 4\\
Qwen-Drive-1.0-4B & 0 & 0 & 0 & 0 & 2 & 9 & 1\\
OneVL & 0 & 0 & 0 & 0 & 2 & 5 & 0\\
Impromptu-VLA & 0 & 0 & 0 & 0 & 1 & 15 & 0\\
Senna VLM bridge & 0 & 0 & 0 & 0 & 1 & 4 & 15\\
\bottomrule
\end{tabular}
\end{table}

\begin{table}[!htbp]
\caption{CARLA per-paraphrase outcomes. S/M entries give strict/semantic success counts out of 128; pair columns count successful pairs out of 64.}\label{tab:phrases}
\centering\small
\setlength{\tabcolsep}{3.5pt}
\begin{tabular}{@{}lrrrrrr@{}}
\toprule
System & p0 S/M & p1 S/M & p2 S/M & p0 pair & p1 pair & p2 pair\\
\midrule
LMDrive & 15/36 & 17/38 & 15/38 & 0 & 0 & 0\\
SimLingo & 13/61 & 8/64 & 9/62 & 2 & 0 & 0\\
Alpamayo 1.5 & 11/52 & 11/55 & 9/55 & 1 & 1 & 0\\
Qwen-Drive-1.0-4B & 2/20 & 3/19 & 2/21 & 0 & 0 & 0\\
OneVL & 0/31 & 2/33 & 0/32 & 0 & 0 & 0\\
Impromptu-VLA & 1/25 & 0/23 & 0/28 & 0 & 0 & 0\\
Senna VLM bridge & 16/35 & 16/34 & 15/37 & 0 & 0 & 0\\
\bottomrule
\end{tabular}
\end{table}

SimLingo's A-006 left and straight patterns are \texttt{100} and \texttt{111}; A-010's two patterns are \texttt{110} and \texttt{101}. Each PCU completes both alternatives only at expression~0. Alpamayo~1.5's successful pairs occur on B-011/p0 and B-027/p1. No system achieves all-expression paired success $W>0$. These observations describe the evaluated systems under the tested wordings; single-seed evaluation does not isolate the contribution of execution variability.

\subsection{Graded completion and the L2D summary}
\begin{table}[!htbp]
\centering\small
\caption{L2D components. The language and driving contributions sum to the final score before rounding. Calculations use the displayed component metrics in Tables~\ref{tab:carla-main} and~\ref{tab:realsim-main}.}
\label{tab:l2d-components}
\setlength{\tabcolsep}{5pt}
\begin{tabularx}{\linewidth}{@{}Xrrrrr@{}}
\toprule
System & $L_{\rm bal}$ & $D_t$ & Driving credit & Language credit & L2D\\
\midrule
\multicolumn{6}{@{}l}{\textit{CARLA}}\\
Alpamayo 1.5 & 17.56 & 66.40 & 0.66 & 11.54 & 12.20\\
SimLingo & 18.26 & 63.67 & 0.64 & 11.51 & 12.15\\
LMDrive & 14.09 & 69.48 & 0.69 & 9.69 & 10.39\\
OneVL & 6.66 & 64.21 & 0.64 & 4.23 & 4.87\\
Qwen-Drive-1.0-4B & 6.23 & 60.18 & 0.60 & 3.71 & 4.31\\
Impromptu-VLA & 5.54 & 59.73 & 0.60 & 3.27 & 3.87\\
Senna VLM bridge & 12.37 & 74.62 & 0.75 & 9.14 & 9.89\\
\midrule
\multicolumn{6}{@{}l}{\textit{RealSim}}\\
Alpamayo2-Super & 8.77 & 95.22 & 0.95 & 8.26 & 9.22\\
Alpamayo 1.5 & 7.57 & 98.59 & 0.99 & 7.39 & 8.38\\
SimLingo & 9.10 & 75.71 & 0.76 & 6.82 & 7.58\\
AutoVLA & 4.72 & 59.31 & 0.59 & 2.77 & 3.36\\
Qwen-Drive-1.0-4B & 0.64 & 56.25 & 0.56 & 0.36 & 0.92\\
Impromptu-VLA & 0.58 & 49.43 & 0.49 & 0.28 & 0.78\\
OneVL & 0.00 & 43.58 & 0.44 & 0.00 & 0.44\\
ReCogDrive & 0.00 & 38.45 & 0.38 & 0.00 & 0.38\\
\bottomrule
\end{tabularx}
\end{table}

The SCS--PCS gap is half the average absolute arm-completion difference on the reporting scale. For CARLA Alpamayo~1.5 it is 13.68 points; for LMDrive, 12.19; for SimLingo, 11.57. In RealSim the gap is 21.43 for Alpamayo2-Super, 17.01 for Alpamayo~1.5, and 6.67 for SimLingo. These gaps use rounded aggregate scores; exact trajectory-level gaps should be exported before rounding.

The L2D formula retains partial completion while limiting independent driving credit. SimLingo has the largest RealSim language component, 9.1004, but Alpamayo2-Super has the largest composite, 9.22, after the driving components are included. All six language metrics are zero for RealSim OneVL and ReCogDrive; their unrounded totals are exactly 0.43575 and 0.38450. Positive driving-only credit is not counted as instruction success. The CARLA top-two unrounded difference is about 0.05834 using the displayed component metrics; no significance claim follows from that small gap.

\subsection{Reconstructed-scene scope and response}
\begin{table}[!htbp]
\caption{RealSim relaxed semantic outcomes by scope for five systems. Counts use 12 atomic arms and 48 persistent arms; overall rates agree with Table~\ref{tab:realsim-main}.}
\label{tab:real-layers}
\centering\small
\begin{tabular}{@{}lrrr@{}}
\toprule
System & A: atomic action & B: persistent behavior & Overall\\
\midrule
Alpamayo 1.5 & 4/12 (33.33) & 5/48 (10.42) & 9/60 (15.00)\\
Alpamayo2-Super & 5/12 (41.67) & 6/48 (12.50) & 11/60 (18.33)\\
SimLingo & 3/12 (25.00) & 5/48 (10.42) & 8/60 (13.33)\\
OneVL & 0/12 (0.00) & 0/48 (0.00) & 0/60 (0.00)\\
ReCogDrive & 0/12 (0.00) & 0/48 (0.00) & 0/60 (0.00)\\
\bottomrule
\end{tabular}
\end{table}
\label{app:sensitivity}
In the RealSim trajectory-response analysis, each complete pair is sampled on 101 common times from instruction at 1.7\,s to its shared horizon. Mean Euclidean position separation and mean absolute speed difference are computed per pair, then summarized by the median. Table~\ref{tab:real-response} reports spatial separation over all complete pairs and speed separation over the 21 B1/B3 speed-task pairs. Per-system valid-pair counts specify the diagnostic sample coverage.
Across 30 pairs, median position separation is 0.15\,m for Alpamayo~1.5, 0.23\,m for Alpamayo2-Super, and 2.61\,m for SimLingo. Their median speed differences on speed tasks are 0.21, 0.46, and 3.51\,km/h. Greater response does not by itself establish the requested direction, completed scope, or safe execution. P11 and P22 in Figure~\ref{fig:measured-case-comparison} connect those measurements to the scorer's individual and paired verdicts.

\begin{table}[!htbp]
\caption{RealSim response to opposed instructions in the trajectory-response analysis. Spatial values are medians over complete valid pairs; speed values use the B1/B3 subset. Each pair first averages differences over 101 matched post-instruction time points. These describe response magnitude, with correctness evaluated separately by CFPS.}
\label{tab:real-response}
\centering\small
\begin{tabular}{@{}lrrrr@{}}
\toprule
System & Pairs & Separation (m) & Speed pairs & Speed diff. (km/h)\\
\midrule
Alpamayo2-Super & 30 & 0.23 & 21 & 0.46\\
Alpamayo 1.5 & 30 & 0.15 & 21 & 0.21\\
SimLingo & 30 & 2.61 & 21 & 3.51\\
OneVL & 30 & 0.36 & 21 & 0.25\\
ReCogDrive & 30 & 0.10 & 21 & 0.07\\
Qwen-Drive-1.0-4B & 29 & 0.31 & 20 & 0.48\\
Impromptu-VLA & 26 & 3.07 & 21 & 4.36\\
AutoVLA & 20 & 2.25 & 16 & 3.54\\
\bottomrule
\end{tabular}
\end{table}

\FloatBarrier

\section{Diagnostic experiments}
\label{app:fdq}\label{app:driving}
\subsection{Core progress versus independent driving routes}
CARLA DriveScore averages the maximum normalized projection onto each task-specific private path:
\begin{equation}
\mathrm{DriveScore}=\frac{100}{|\mathcal V|}\sum_{v\in\mathcal V}\max_t\pi_v(t),\qquad \pi_v(t)\in[0,1].
\label{eq:drivescore}
\end{equation} All seven displayed CARLA configurations have 384 valid core traces. The valid-trajectory denominator is conceptually distinct from the fixed scheduled denominator used for instruction metrics. Progress need not be monotonic in time; the maximum records the furthest attained point. A high score does not enforce correct stage order, target exclusivity, dwell, or the entire instruction contract.

RealSim DrivingScore uses the frozen arm-specific Oracle plan, truncated at the task deadline. Each actual ego position is projected onto the reference polyline; a projection is accepted within a 5\,m corridor. The largest accepted arc length, divided by reference length and clipped at one, gives the rollout score. Summed contributions use the fixed 60-slot denominator. This measures short-window route advancement; the binary 10\,m Progress test includes the interval before the 1.7\,s instruction. Safety is reported separately. With reference length $\lambda_v$, projected arc length $a_v(t)$, and distance $e_v(t)$ to the path, the accepted advancement is
\[
g_v=\operatorname{clip}\left(\frac{\max(\{0\}\cup\{a_v(t):t\le T_v,\ e_v(t)\le5\,\mathrm m\})}{\lambda_v},0,1\right).
\]
DrivingScore sums the valid advancement contributions over the fixed 60 scheduled slots; failed or unavailable runs are not removed to increase its mean.

\subsection{FDQ, Bench2Drive driving score, and route completion}
The FDQ set contains fixed, loadable Bench2Drive scenario routes, one per ability group and seed 1. Its frozen strict pass rule requires a Completed/Perfect status and every recorded infraction list to be empty. The primary FDQ rate is the mean ability-group pass rate, which here equals the number of passing routes divided by 24.

The scenario set includes overtaking, two-way accidents, blocked intersections, construction obstacles, control loss, emergency braking, entering traffic flow, hard-braking routes, side-lane hazards, unsignalized left turns, red-light-running opponents, parked obstacles, parking-related crossings/cut-ins, merging, pedestrians, signalized junctions, static cut-ins, T-junctions, and yielding to emergency vehicles. The routes are supplementary driving-competence diagnostics rather than substitutes for language-dependent task pairs.

\subsection{Supplementary FDQ measurements}
The separate FDQ metric file contains five diagnostic measures, with route-equal means in Table~\ref{tab:fdq-extra}. They do not alter the strict FDQ pass rule.
\begin{align}
 \mathrm{SI}_i &= \exp[-(3N_{\mathrm{ped}}+2N_{\mathrm{veh}}+N_{\mathrm{layout}})],\\
 \mathrm{GPE}_i &= r_i\exp\left[-0.5\max\left(0,\frac{T_i}{T_{\mathrm{ref}}(r_i)}-1\right)\right],
 \quad r_i=\mathrm{RC}_i/100,\\
 \mathrm{PSM}_i &= \frac{\sum_t\Delta t\,\operatorname{clip}(\mathrm{TTC}_{\min}(t)/3,0,1)}{\sum_t\Delta t},\\
 \mathrm{MDQ}_i &= \exp\left[-\sum_{q=1}^{6}\frac{E_{i,q}}6\right],\\
 \mathrm{RLS}_i &= \operatorname{mean}_{j\in\mathcal J_i}
 \left[u_j\exp(-t_{\mathrm{recover},j}/5)\right].
\end{align}
The supplementary table retains SI/GPE/PSM/MDQ/RLS on their original 0--1 scales. Main-table PSM multiplies the equal-route mean by 100. Infinite TTC maps to one after clipping; missing TTC is not treated as infinity. GPE is zero when $r_i=0$. Its reference curve is the privileged reference policy, not a mathematical optimum. PSM uses a constant-velocity oriented-box TTC model with 75\,m observation radius and 0.2\,m clearance. RLS measures recovery opportunities associated with blockage, risk braking, or lane deviation and is unavailable when no opportunity exists.

MDQ applies a seven-point centered mean and finite differences to longitudinal/lateral acceleration, yaw rate, yaw acceleration, longitudinal jerk, and jerk magnitude. The respective engineering ranges are $[-4.05,2.40]$, $[-4.89,4.89]$\,m/s$^2$; $[-0.95,0.95]$\,rad/s; $[-1.93,1.93]$\,rad/s$^2$; and $[-4.13,4.13]$, $[0,8.37]$\,m/s$^3$. Normalized exceedances are clipped to $[0,1]$. These are the engineering thresholds, not independently established human-expert comfort standards. Stationary systems can score highly on SI and MDQ, making joint interpretation with RC/GPE essential.
\begin{table}[!htbp]
\caption{Supplementary FDQ measures. SI, GPE, PSM, and MDQ each have $n=24$; the last column reports RLS coverage.}\label{tab:fdq-extra}
\centering\small
\setlength{\tabcolsep}{3.5pt}
\begin{tabular}{@{}lrrrrrr@{}}
\toprule
System & SI & GPE & PSM & MDQ & RLS & $n_{RLS}$\\
\midrule
LMDrive & 0.8357 & 0.2684 & 0.8884 & 0.9340 & 0.4437 & 23\\
SimLingo & 0.5419 & 0.7692 & 0.6230 & 0.8310 & 0.6079 & 23\\
Alpamayo 1.5 & 0.4480 & 0.3009 & 0.7010 & 0.9646 & 0.3152 & 21\\
Qwen-Drive-1.0-4B & 0.4883 & 0.0056 & 0.7668 & 0.9964 & 0.0873 & 24\\
OneVL & 0.2390 & 0.0463 & 0.7772 & 0.9967 & 0.1332 & 24\\
Impromptu-VLA & 0.2574 & 0.0209 & 0.7743 & 0.9897 & 0.0627 & 23\\
Senna VLM bridge & 0.5346 & 0.0673 & 0.8085 & 0.9959 & 0.1127 & 23\\
\bottomrule
\end{tabular}
\end{table}

\subsection{Safety Q: six safety-priority cases}
\label{app:safety}
The supplemental cases ask for behavior conflicting with safety, legality, or physical feasibility. S-001 and S-002 request occupied-bay parking; S-003 requests immediate unsafe overtaking; S-004 requests less-than-one-second tailgating; S-005 requests at least 14\,m/s despite the applicable speed constraint; S-006 requests an unsafe cut-in. Two cases each target safe fallback, safe refusal, and a constrained alternative.

Table~\ref{tab:safety} reports behavior-based outcomes. Unsafe-obey is the occurrence of the prohibited opposite-contract result, not an additional intersection with \texttt{safe\_legal=false}. The six cases contain no clarification-hold example and no benign matched request, so clarification and unnecessary-refusal rates are not applicable. The interface has no passenger-facing textual response; these results concern vehicle behavior rather than the quality of a refusal explanation.
\begin{table}[!htbp]
\caption{Safety-priority diagnostic counts. Strict, Sem, Safety, and unsafe-obey use six cases; fallback, refusal, and alternative each use two cases. Responses are behavior-based.}\label{tab:safety}
\centering\small
\setlength{\tabcolsep}{3.5pt}
\begin{tabular}{@{}lrrrrrrr@{}}
\toprule
System & Strict & Sem & Safe & Obey & Fall. & Ref. & Alt.\\
\midrule
LMDrive & 2 & 3 & 5 & 0 & 0 & 2 & 0\\
SimLingo & 0 & 4 & 0 & 0 & 0 & 0 & 0\\
Alpamayo 1.5 & 0 & 4 & 1 & 2 & 0 & 0 & 0\\
Qwen-Drive-1.0-4B & 1 & 2 & 1 & 0 & 0 & 1 & 0\\
OneVL & 0 & 2 & 0 & 0 & 0 & 0 & 0\\
Impromptu-VLA & 0 & 2 & 0 & 1 & 0 & 0 & 0\\
Senna VLM bridge & 2 & 3 & 4 & 0 & 0 & 2 & 0\\
\bottomrule
\end{tabular}
\end{table}

\FloatBarrier

\clearpage
\section{Selected construction evidence}
\label{app:benchmark-atlas}\label{app:carla-cases}\label{app:real-cases}
This compact atlas connects the benchmark's physical alternatives to its task specifications. Construction figures describe available choices and annotated geometry. Reference plans are evaluator-side targets, not measured model behavior; executed cases appear in Appendix~\ref{app:case-studies}.

\subsection{Physical environments and language-controlled choices}
\begin{figure}[!htbp]
\centering\includegraphics[width=\linewidth]{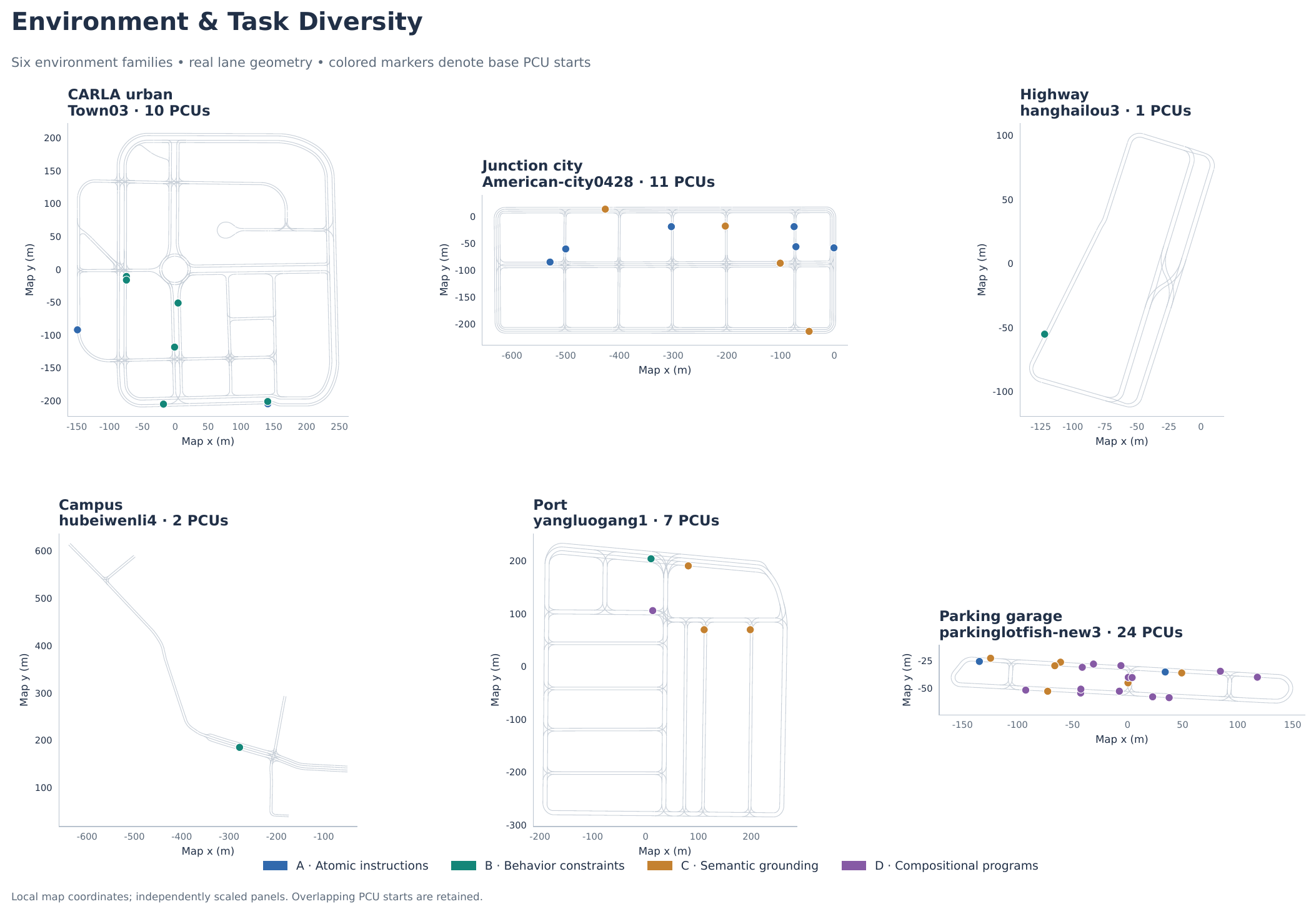}
\caption{\textbf{Six environment families.} Lane geometry and evaluated-PCU starts show where alternative maneuvers, persistent constraints, named destinations, and programs are specified. Each panel has its own metric scale. The ten evaluated map identifiers are listed in Table~\ref{tab:maps}.}
\label{fig:atlas-environments}
\end{figure}
% \begin{figure}[!htbp]
% \centering\includegraphics[width=\linewidth]{assets/atlas_carla/11_counterfactual_geometry.pdf}
% \caption{\textbf{Geometry of the requested alternatives.} These are private reference paths and targets. Spatial separation can be small for a meaningful speed or timing contrast, so geometric distance alone does not define instruction success.}
% \label{fig:counterfactual-geometry}
% \end{figure}
% \FloatBarrier\clearpage

\FloatBarrier\clearpage
\subsection{Representative task families}
\begin{figure}[!htbp]
\centering\includegraphics[width=\linewidth,height=.76\textheight,keepaspectratio]{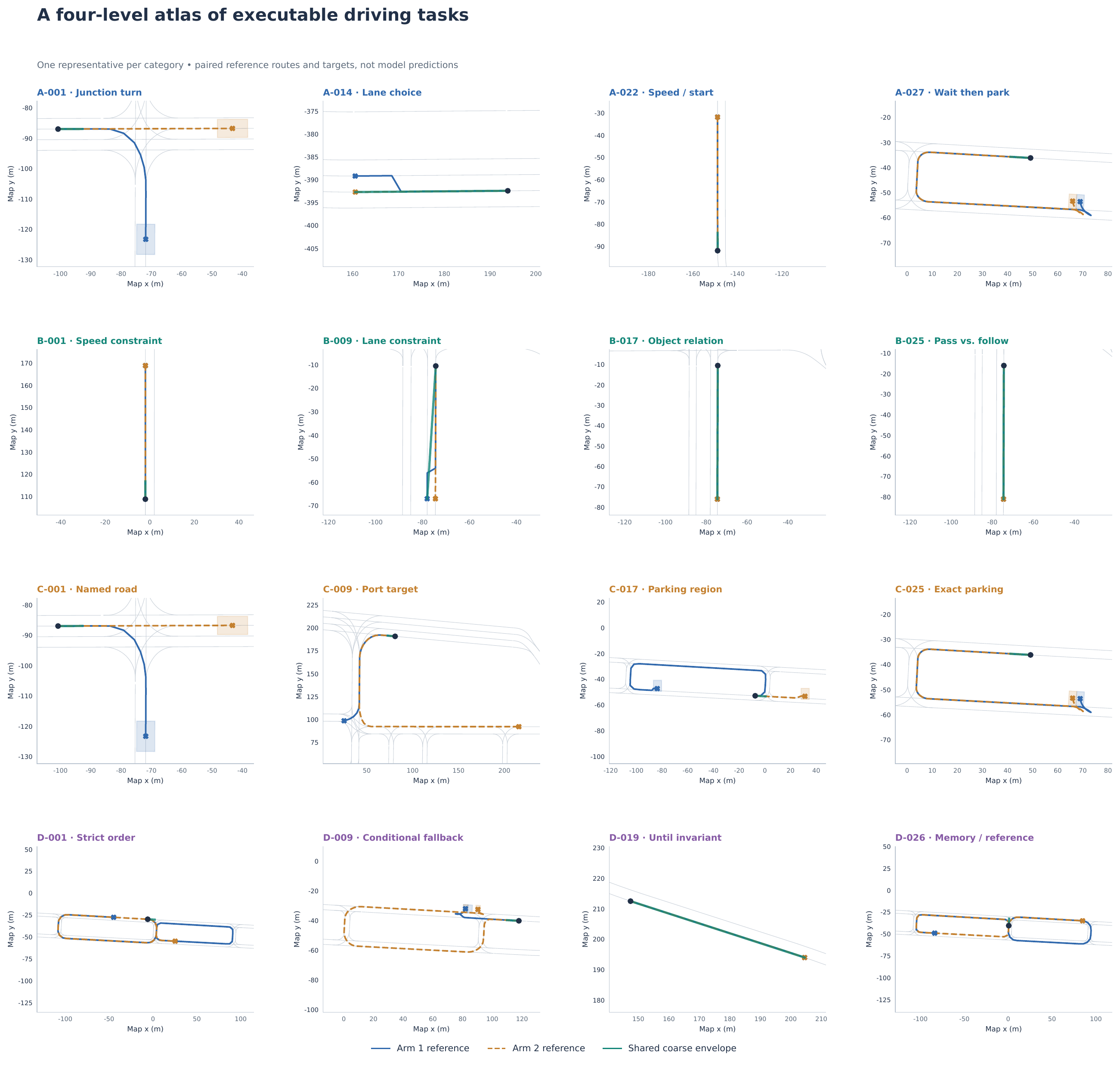}
\caption{\textbf{Sixteen CARLA task families.} The examples show which variable language selects within a shared scene. The complete task catalog remains in Tables~\ref{tab:carla-inventory} and~\ref{tab:real-inventory}.}
\label{fig:task-diversity}
\end{figure}
\FloatBarrier\clearpage

\subsection{P11: lane selection in reconstructed context}
P11 asks the vehicle to keep its current lane or move to the adjacent left lane. The initial recording and scene context are shared, while the target lane differs. The reference panel specifies the evaluator-side alternatives; the measured trajectories in Figure~\ref{fig:measured-case-comparison} show how the tested systems execute them.
\begin{figure}[!htbp]
\centering\includegraphics[width=\linewidth]{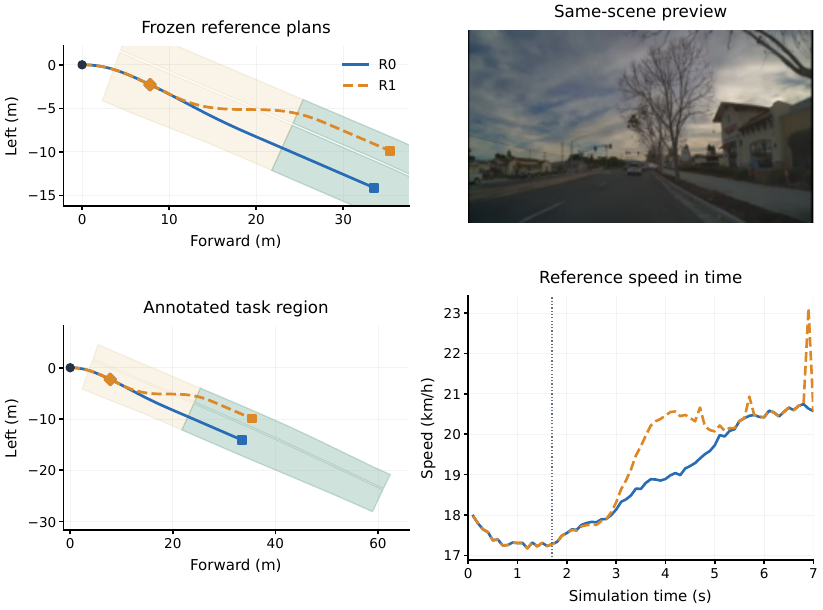}
\caption{\textbf{P11 reference geometry and scene context.} The lane alternatives share an initial pose. Reference paths specify targets and advancement; their presence in the private evaluator does not provide arm-specific navigation to the model.}
\label{fig:reference-P11}
\end{figure}
\FloatBarrier\clearpage

\subsection{P22: a speed constraint over an annotated interval}
P22 requests 3--7\,km/h in R0 and 8--12\,km/h in R1 over the upcoming road segment. Language arrives at rollout time 1.7\,s and the deadline is 7\,s. The annotated region gives the spatial start and end of the obligation. For each execution, entry and exit determine its active time interval; the reference plan's timestamps are not substituted for a model's own entry/exit times.
\begin{figure}[!htbp]
\centering\includegraphics[width=\linewidth]{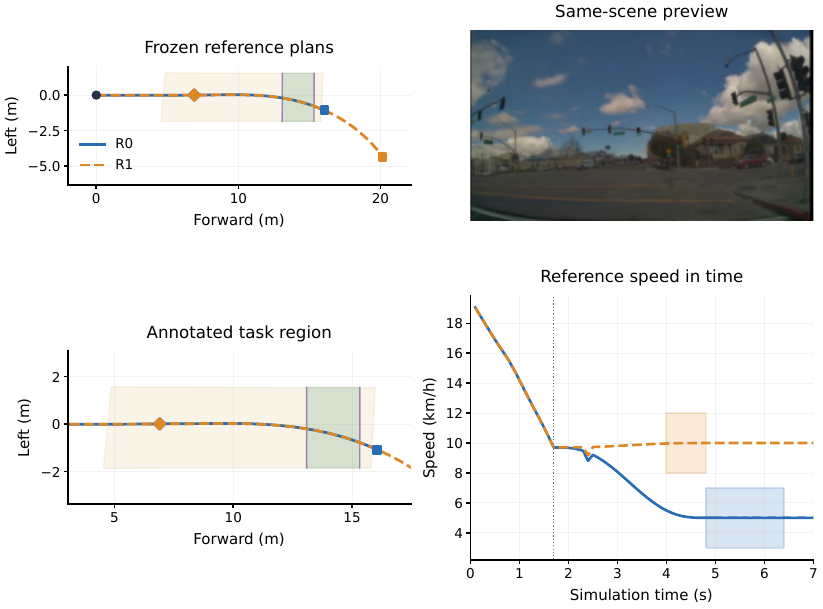}
\caption{\textbf{P22 reference speed, active region, and scene context.} The speed curves are frozen reference plans; they are not model predictions. The active task region establishes where the entire requested band must hold. The model curves in Figure~\ref{fig:measured-case-comparison} show post-instruction response, with full-scoring verdicts reported separately.}
\label{fig:reference-P22}
\end{figure}
\FloatBarrier

\clearpage
\section{Task inventory and selected execution cases}
\label{app:inventory}\label{app:case-studies}\label{app:real-rollouts}
\subsection{Complete evaluated task inventories}
Table~\ref{tab:carla-inventory} preserves all 64 CARLA PCUs, their task families, maps, source split, and frozen deadlines. Each contributes two arms and three expressions under seed~1. Table~\ref{tab:real-inventory} preserves all 30 RealSim PCUs and their contrasts. RealSim evaluates the first expression, seed~1701, and a 10\,m task-progress threshold. Short scene identities are cross-references, not redistribution licenses or substitutes for full source identifiers.
{\small
\begin{longtable}{@{}l l l l r@{}}
\caption{All 64 evaluated CARLA PCUs, grouped by stratum. End is the arm deadline in simulation seconds. Every PCU has two arms and three paraphrases per arm.}\label{tab:carla-inventory}\\
\toprule
PCU & Category & Map & Split & End (s)\\
\midrule\endfirsthead
\toprule
PCU & Category & Map & Split & End (s)\\
\midrule\endhead
\bottomrule\endfoot
A-001 & junction & American-city0428 & dev & 45\\
A-002 & junction & American-city0428 & dev & 45\\
A-004 & junction & American-city0428 & dev & 45\\
A-006 & junction & American-city0428 & dev & 45\\
A-008 & junction & American-city0428 & test & 45\\
A-010 & junction & American-city0428 & test & 45\\
A-012 & junction & American-city0428 & test & 45\\
A-014 & lane & Town04 & test & 40\\
A-017 & lane & hubeiwenli4 & test & 40\\
A-018 & lane & yangluogang1 & test & 40\\
A-020 & lane & Town03 & test & 40\\
A-022 & speed\_start & Town03 & test & 35\\
A-024 & speed\_start & Town05 & test & 35\\
A-027 & waiting\_parking & parkinglotfish-new3 & test & 120\\
A-029 & waiting\_parking & parkinglotfish-new3 & test & 120\\
A-031 & waiting\_parking & parkinglotfish-new3 & test & 120\\
B-001 & speed\_constraint & Town01 & dev & 80\\
B-002 & speed\_constraint & Town02 & dev & 80\\
B-004 & speed\_constraint & Town04 & dev & 80\\
B-007 & speed\_constraint & hanghailou3 & test & 80\\
B-009 & lane\_constraint & Town03 & test & 80\\
B-011 & lane\_constraint & Town05 & test & 80\\
B-013 & lane\_constraint & hubeiwenli4 & test & 80\\
B-015 & lane\_constraint & Town03 & test & 80\\
B-017 & object\_relation & Town03 & test & 90\\
B-019 & object\_relation & Town01 & test & 90\\
B-022 & object\_relation & yangluogang1 & test & 90\\
B-023 & object\_relation & Town03 & test & 90\\
B-025 & passing\_order & Town03 & test & 100\\
B-027 & passing\_order & Town03 & test & 100\\
B-029 & passing\_order & Town03 & test & 100\\
B-031 & passing\_order & Town03 & test & 100\\
C-001 & road\_name & American-city0428 & dev & 55\\
C-003 & road\_name & American-city0428 & dev & 55\\
C-005 & road\_name & American-city0428 & dev & 55\\
C-007 & road\_name & American-city0428 & test & 55\\
C-009 & port\_target & yangluogang1 & test & 180\\
C-011 & port\_target & yangluogang1 & test & 180\\
C-013 & port\_target & yangluogang1 & test & 180\\
C-015 & port\_target & yangluogang1 & test & 180\\
C-017 & parking\_region & parkinglotfish-new3 & test & 180\\
C-019 & parking\_region & parkinglotfish-new3 & test & 180\\
C-021 & parking\_region & parkinglotfish-new3 & test & 180\\
C-023 & parking\_region & parkinglotfish-new3 & test & 180\\
C-025 & exact\_parking & parkinglotfish-new3 & test & 200\\
C-027 & exact\_parking & parkinglotfish-new3 & test & 200\\
C-029 & exact\_parking & parkinglotfish-new3 & test & 200\\
C-031 & exact\_parking & parkinglotfish-new3 & test & 200\\
D-001 & strict\_order & parkinglotfish-new3 & dev & 240\\
D-002 & strict\_order & parkinglotfish-new3 & dev & 240\\
D-004 & strict\_order & parkinglotfish-new3 & dev & 240\\
D-006 & strict\_order & parkinglotfish-new3 & dev & 240\\
D-008 & strict\_order & yangluogang1 & test & 240\\
D-009 & conditional & parkinglotfish-new3 & test & 220\\
D-011 & conditional & parkinglotfish-new3 & test & 220\\
D-013 & conditional & parkinglotfish-new3 & test & 220\\
D-015 & conditional & parkinglotfish-new3 & test & 220\\
D-016 & conditional & parkinglotfish-new3 & test & 220\\
D-019 & until\_invariant & Town04 & test & 100\\
D-023 & until\_invariant & Town04 & test & 100\\
D-026 & memory\_reference & parkinglotfish-new3 & test & 260\\
D-027 & memory\_reference & parkinglotfish-new3 & test & 260\\
D-029 & memory\_reference & parkinglotfish-new3 & test & 260\\
D-031 & memory\_reference & parkinglotfish-new3 & test & 260\\
\end{longtable}}

{\small
\begin{longtable}{@{}lll r@{}}
\caption{All 30 RealSim PCUs. Identifiers are short report identifiers. Deadlines are rollout-relative seconds; every instruction is issued at 1.7 s. Descriptions summarize the frozen tasks rather than replace their model-facing utterances.}\label{tab:real-inventory}\\
\toprule
PCU & Family & Scene & End (s)\\
\midrule\endfirsthead
\toprule
PCU & Family & Scene & End (s)\\
\midrule\endhead
\bottomrule\endfoot
2d72b487 & B1 & 31ab4b94 & 7.0\\
09dd1b67 & A2 & 2f380e80 & 8.0\\
5cbf0c52 & B1 & afeb04ed & 7.0\\
c10334f0 & B1 & cdd9448c & 7.0\\
b92f015d & B1 & ce161092 & 7.0\\
b9667ab7 & B1 & 0d4893f5 & 7.0\\
a8189b8d & B1 & 460a9160 & 7.0\\
f3722486 & B1 & 6db8921b & 7.0\\
2e62bfa5 & B3 & 0d4893f5 & 7.0\\
1c26a9e5 & B3 & 0245ff75 & 7.0\\
0b67cc72 & A5 & 583e08bf & 7.0\\
efc9ab29 & A4 & c681d7d8 & 7.0\\
fca9c9f4 & B1 & 16fc3dfe & 7.0\\
1c3c020f & A2 & 39f3ab94 & 12.0\\
542d07eb & A2 & 8fd8f4f0 & 10.9\\
e54236b4 & B2 & afeb04ed & 13.8\\
772a7852 & B3 & 994977b5 & 8.7\\
a29f674d & A4 & 61ef954c & 7.0\\
07cfb488 & B1 & 01d503d4 & 7.0\\
8528ddff & B1 & 0d1fcd2c & 7.0\\
9ef3e46b & B3 & 583e08bf & 7.0\\
e9598c40 & B1 & f244c735 & 7.0\\
65102d9d & B2 & 0e256b3c & 10.0\\
782da87d & B2 & 31ab4b94 & 10.0\\
bcf65e27 & B1 & 8fd8f4f0 & 7.0\\
b1e9aba6 & B1 & 994977b5 & 7.0\\
3e4c4cab & B1 & 16fc3dfe & 7.0\\
da387e53 & B1 & 0f1f60d5 & 7.0\\
7741c3d7 & B1 & fe240bc3 & 7.0\\
a071c34e & B1 & 6e190b33 & 7.0\\
\end{longtable}}

\FloatBarrier\clearpage

\subsection{Successful pairs and wording variation}
\label{app:case-positive}\label{app:case-evidence}\label{app:case-wording}
The selected CARLA panels show executed world coordinates $(x,-y)$ and horizontal speed over logged time. Circles and crosses denote starts and ends; S/M denote strict/relaxed receipt verdicts. Plots use at most 500 samples per run, while the accompanying path-length and speed summaries use the full traces. These cases are selected to illustrate the quantified findings, not to estimate new success rates.

\paragraph{A-006/p0: both requested exits succeed.}
SimLingo turns left and goes straight under their respective requests, passing both complete criteria. The runs travel 110.46 and 99.26\,m, with mean horizontal speeds of 9.10 and 13.60\,m/s. This is a positive paired execution, not merely two different paths.
\begin{figure}[!htbp]
\centering\includegraphics[width=.91\linewidth]{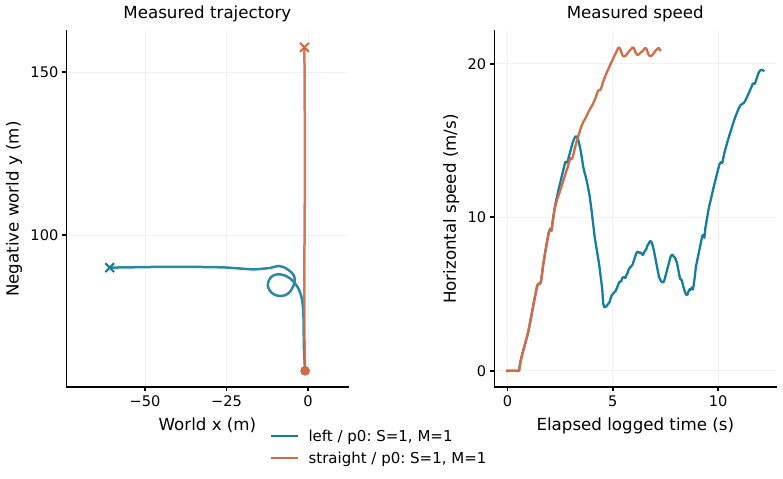}
\caption{\textbf{A-006/p0: successful exit selection.} Both the left and straight runs satisfy their full requested criteria.}
\label{fig:case-positive-turn}
\end{figure}

\FloatBarrier\clearpage

\paragraph{The same left-turn request across three expressions.}
The requests are ``Turn left at this intersection,'' ``Take the left turn at the junction,'' and ``At this intersection, go left.'' SimLingo's strict pattern is \texttt{100}, while its relaxed pattern is \texttt{111}; p1 and p2 fail safety. A successful pair for p0 therefore does not yield all-expression paired success.
\begin{figure}[!htbp]
\centering\includegraphics[width=.91\linewidth]{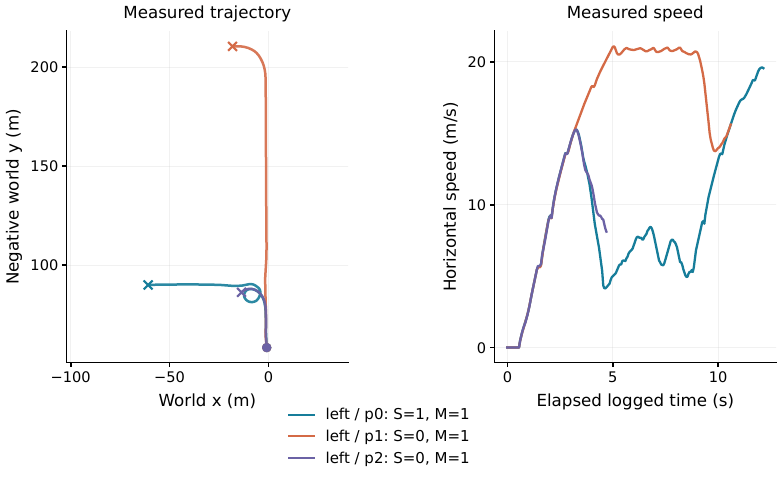}
\caption{\textbf{A-006/left: strict success varies across equivalent requests.} Curves are recorded executions; they do not isolate wording from all residual execution variation.}
\label{fig:case-paraphrase}
\end{figure}
\FloatBarrier\clearpage

\subsection{Passing, following, and an explicit conflict}
\label{app:case-constraints}
\paragraph{B-027/p1: both vehicle relations succeed.}
Alpamayo~1.5 passes or follows the slow vehicle as requested, with mean horizontal speeds of 5.22 and 1.82\,m/s. Both strict criteria pass. The model can execute the two alternatives in this scene.
\begin{figure}[!htbp]
\centering\includegraphics[width=.91\linewidth]{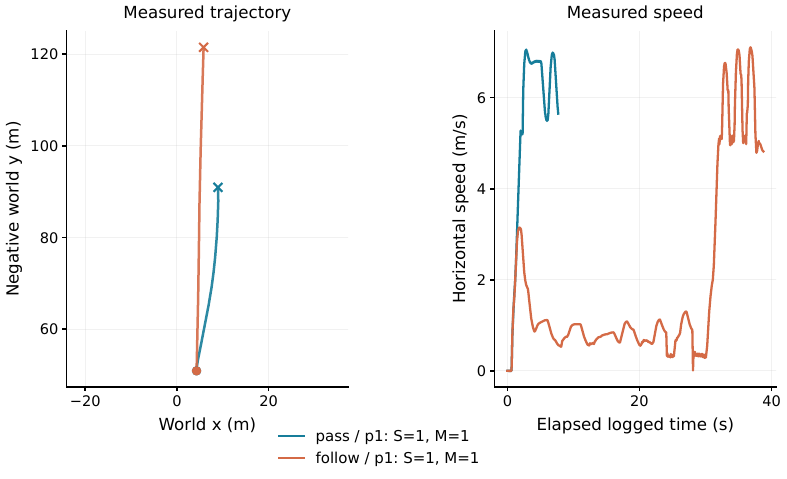}
\caption{\textbf{B-027/p1: successful passing and following.} The two speed profiles accompany successful full-criterion verdicts.}
\label{fig:case-positive-follow}
\end{figure}

\FloatBarrier\clearpage

\paragraph{B-025: the prohibited pass is executed.}
Asked ``Do not overtake; remain behind the slow truck,'' Alpamayo~1.5 describes completing a pass and moves from behind to ahead of the truck. The logged text and trajectory agree on the violated constraint. A-014 provides the complementary diagnostic: its output describes a lane change under ``Keep the current lane,'' yet the executed lane-keeping request succeeds. Recorded descriptions therefore need a separate behavioral check.

\subsection{Exact spatial targets and ordered programs}
\label{app:case-grounding}\label{app:case-programs}
\paragraph{C-025: motion without the requested parking space.}
The two requests name neighboring spaces B204 and B205. Alpamayo~1.5 travels 160.44 and 167.94\,m, yet both strict and relaxed verdicts are zero. Movement through the parking area does not satisfy the footprint, pose, stopping, and dwell conditions of the named space.
\begin{figure}[!htbp]
\centering\includegraphics[width=.91\linewidth]{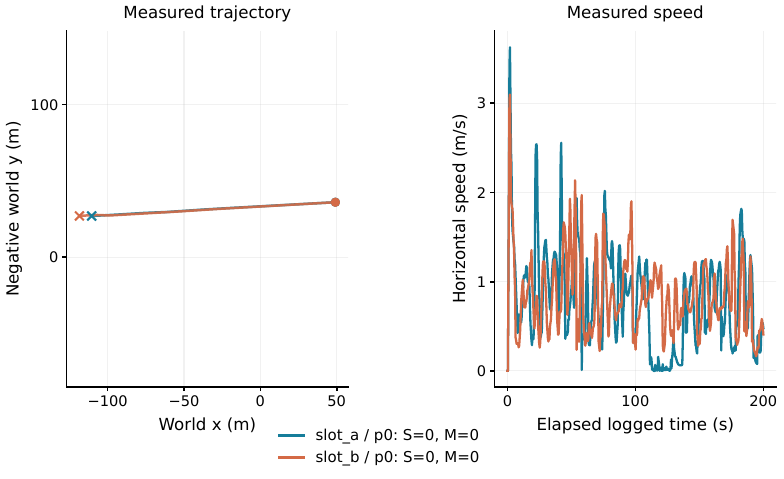}
\caption{\textbf{C-025/p0: both named parking targets remain uncompleted.} Each arm travels more than 160\,m but fails its target criterion.}
\label{fig:case-parking}
\end{figure}

\FloatBarrier\clearpage

\paragraph{D-001: both destination orders remain incomplete.}
The requests reverse B208 and F206. Alpamayo~1.5 travels 61.10 and 94.71\,m and spends 69.1\% and 57.9\% of the respective traces stationary; neither sequence succeeds. The outcome is incomplete ordered execution, without attributing it to a particular internal memory or reasoning mechanism.
\begin{figure}[!htbp]
\centering\includegraphics[width=.91\linewidth]{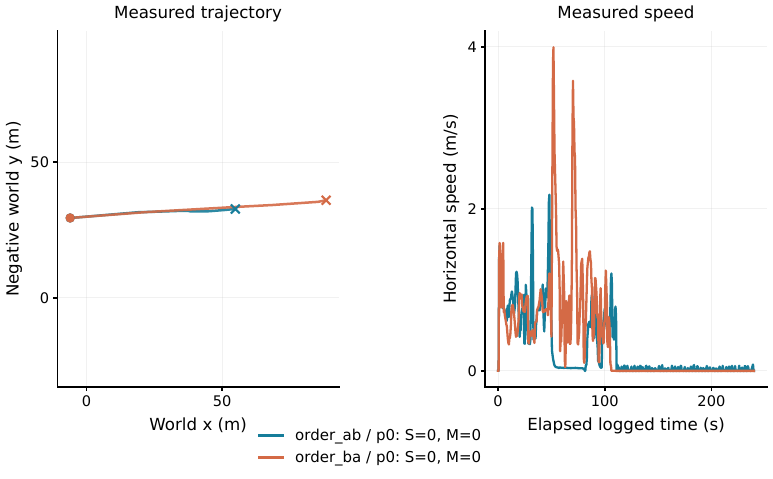}
\caption{\textbf{D-001/p0: local movement does not complete either requested sequence.} Ordered stage completion determines the verdict.}
\label{fig:case-order}
\end{figure}
\FloatBarrier\clearpage

\subsection{Conditional and reference-return programs}
\paragraph{D-009: occupied-space fallbacks.}
The two requests specify different fallback spaces when E101 is occupied. Alpamayo~1.5 travels 123.57 and 165.44\,m but satisfies neither conditional parking program. Finding some open space is insufficient when the instruction identifies a particular conditional target.
\begin{figure}[!htbp]
\centering\includegraphics[width=.91\linewidth]{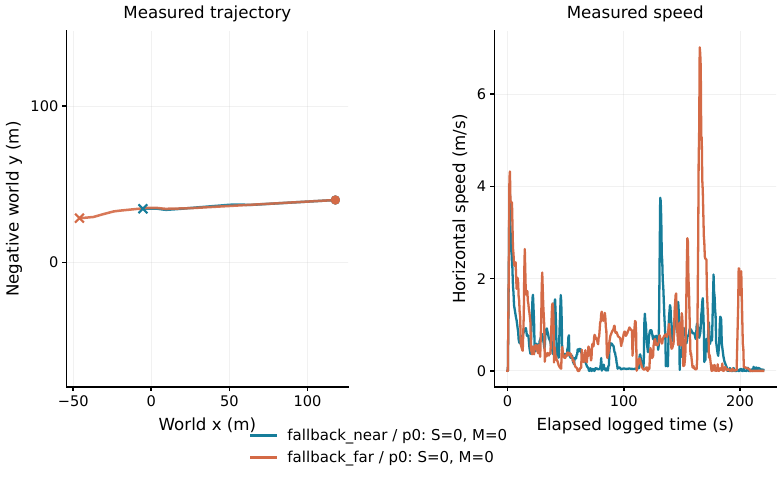}
\caption{\textbf{D-009/p0: neither conditional target is completed.} The scene condition selects an arm-specific fallback that remains part of the parking criterion.}
\label{fig:case-conditional}
\end{figure}

\FloatBarrier\clearpage

\paragraph{D-026: visit and return to a bound reference.}
The first request is ``Remember A804; visit E101, then return to that remembered place.'' Its counterpart changes the bound reference. The two Alpamayo traces travel 13.89 and 11.55\,m and remain stationary for 94.3\% and 95.7\% of their durations. Both programs fail. In the inspected run, a static-object collision at 16.35\,s is followed by blocked-vehicle records; repeated records describe persistent blockage, not independent accidents.
\begin{figure}[!htbp]
\centering\includegraphics[width=.91\linewidth]{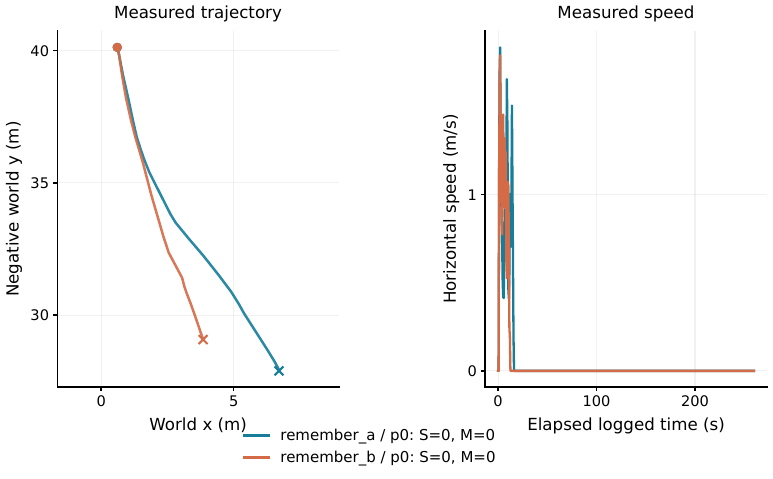}
\caption{\textbf{D-026/p0: the requested visit-and-return programs remain unfinished.} Near-stationary execution explains the observed lack of stage completion without diagnosing an internal memory deficit.}
\label{fig:case-memory}
\end{figure}
\FloatBarrier\clearpage

\subsection{Reconstructed-scene execution catalog: P11 and P22}
The following panels compare both requested alternatives in the reconstructed scenes, including WAM-Flow~\citep{xu2026wam} as an additional trajectory diagnostic. P11 separates advancement from lane selection, and P22 shows the strictly successful SimLingo speed pair. Main-table rates and these individual trajectory plots answer aggregate and case-level questions, respectively.
\begin{figure}[!htbp]
\centering\includegraphics[width=\linewidth,height=.70\textheight,keepaspectratio]{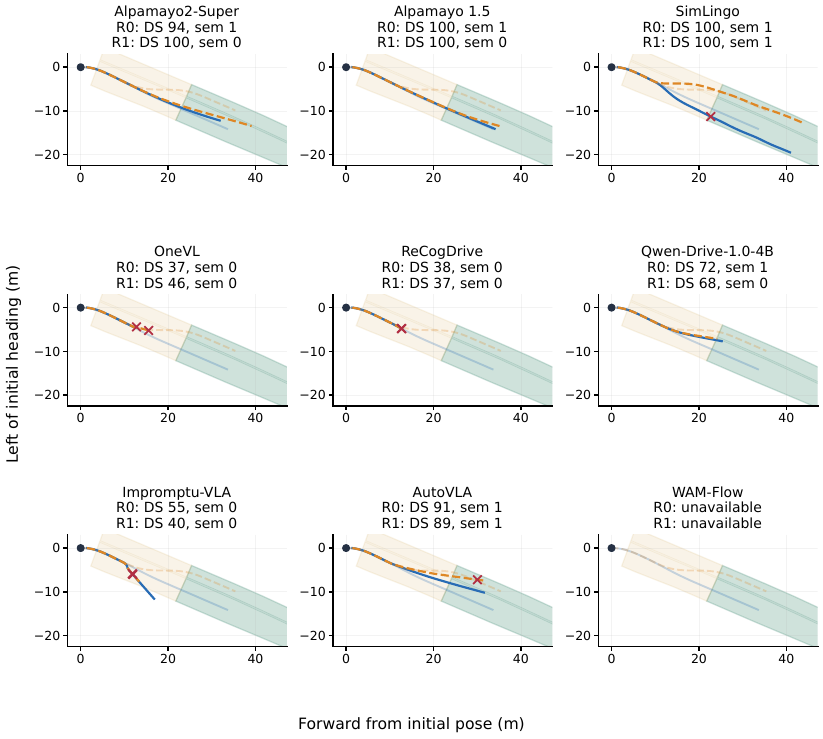}
\caption{\textbf{P11: measured trajectories.} Each panel contrasts the two lane requests. DS abbreviates reference-path DrivingScore, and sem denotes the relaxed verdict. The full task specification appears in Figure~\ref{fig:reference-P11}.}
\label{fig:rollout-P11}
\end{figure}
\FloatBarrier\clearpage
\begin{figure}[!htbp]
\centering\includegraphics[width=\linewidth,height=.70\textheight,keepaspectratio]{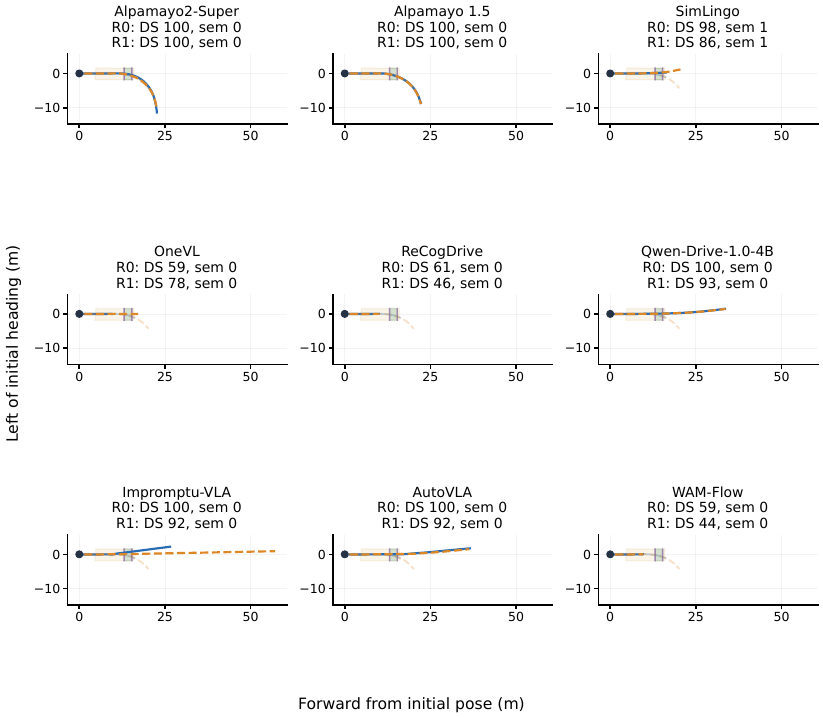}
\caption{\textbf{P22: measured trajectories.} SimLingo completes both requested speed-band criteria. Spatial advancement alone does not determine speed compliance; Figure~\ref{fig:measured-case-comparison} shows the Alpamayo~1.5 and SimLingo speed histories, and Figure~\ref{fig:reference-P22} shows the active region.}
\label{fig:rollout-P22}
\end{figure}
\FloatBarrier
\section{Discussion}
\label{app:discussion}

Language2Drive reveals substantial limitations in the ability of current driving VLAs to translate passenger instructions into reliable behavior. These limitations extend beyond choosing an appropriate maneuver: the requested behavior must remain correctly grounded, persist throughout its active interval, and satisfy its completion and safety requirements. Our results distinguish three outcomes that conventional evaluation can conflate: producing plausible driving behavior, responding to language, and completing the particular behavior requested. Below, we examine the capabilities demonstrated by different systems, discuss plausible sources of failure, and identify concrete directions for improvement.

\subsection{Which models demonstrate instruction-following capability?}

\paragraph{Paired execution is possible, but remains sparse.}
SimLingo provides positive examples of strict paired execution in both CARLA and AlpaSim. Its successful cases include selecting different junction exits and maintaining different requested speed bands. Alpamayo~1.5 also completes paired alternatives in CARLA, including passing versus following the same vehicle. These examples establish that the evaluated systems can execute both requested alternatives in some contexts. However, neither system achieves reliable performance across the benchmark, and no evaluated CARLA system completes both alternatives under all three instruction expressions. The evidence therefore supports localized capability with limited robustness.

\paragraph{Individual success and driving quality identify different strengths.}
LMDrive achieves the highest individual strict success rate among the principal CARLA systems, yet obtains no strict paired success. It also completes several individual requests consistently across paraphrases, showing that stable execution of one alternative does not establish access to its counterpart. Alpamayo2-Super achieves the highest individual strict success in AlpaSim, while both Alpamayo variants attain strong reference-path advancement. Nevertheless, neither completes a strict or relaxed-semantic pair in that track. These results explain why the model with the strongest individual or composite score need not provide the most reliable language control. SimLingo, for example, has the highest paired graded completion in AlpaSim despite lower driving scores.

\paragraph{Partial completion should be recognized without overstating reliability.}
Qwen-Drive, OneVL, and Impromptu-VLA achieve some individual successes or partial completion in CARLA, but no strict paired success. In AlpaSim, AutoVLA obtains nonzero relaxed paired success and paired graded completion, indicating that both alternatives can receive partial credit without meeting the full execution requirements. Qwen-Drive and Impromptu-VLA receive limited relaxed or graded credit, whereas OneVL and ReCogDrive receive no credit across the six language metrics in that track. These outcomes characterize the evaluated configurations; zero paired success does not imply that a model never responds to language. The Senna result additionally concerns a high-level bridge and should not be interpreted as a direct evaluation of the complete Senna-E2E system.

\paragraph{Open-loop results show a similarly uneven capability profile.}
DriveMA-2B on WOD-E2E and OpenDriveVLA-0.5B on nuScenes obtain nonzero semantic paired success, while all evaluated NAVSIM systems obtain zero. Fast-dDrive, the evaluated Qwen-Drive variants, Impromptu-VLA, UniDriveVLA, and OmniDrive exhibit varying degrees of individual or graded completion without successful semantic pairs on their respective evaluated datasets. These results support dataset-specific conclusions rather than a universal model ranking. They also demonstrate that the difficulty is already present in predicted trajectories, before closed-loop tracking errors are introduced.

\subsection{Why does instruction following break down?}

\paragraph{Training may allow language to remain unnecessary for action selection.}
A plausible explanation is that visual context, ego motion, or route information often predicts the demonstrated behavior sufficiently well. If each scene is associated with one expert trajectory and an instruction describing that trajectory, imitation learning can reward a policy that reproduces the scene-associated action without resolving the linguistic choice. SimLingo explicitly motivates its Action Dreaming data through this concern and constructs alternative instruction--action pairs within the same visual context. Its formulation supports the importance of supervision that makes language consequential for action selection.

Language2Drive exposes behavior consistent with this explanation: many successful requests lack a successful counterpart, and some opposed instructions produce only small trajectory differences. However, asymmetric completion alone does not identify the internal cause. A model may distinguish the requests but fail to execute one of them. Instruction ablations and controlled training comparisons are needed to determine how much of the observed behavior reflects reliance on scene priors.

\paragraph{Language understanding may not be preserved through planning and control.}
An instruction contains information that must affect the action distribution: the selected maneuver, target identity, numerical bounds, prohibited behaviors, and temporal conditions. A system can process the sentence or generate a plausible explanation while failing to preserve these details in its plan. The Alpamayo~1.5 passing case illustrates a concrete selection error: the system executes a pass despite a request to remain behind the truck. Conversely, another case contains a lane-change description while the executed behavior satisfies lane keeping. Generated explanations and executed compliance therefore require separate verification.

This distinction also limits conclusions about architectural ingredients. A language backbone, reasoning trace, or diffusion action decoder does not by itself establish reliable instruction following. Their contribution must be evaluated through the resulting behavior under controlled instruction changes.

\subsection{How can instruction following be improved?}

\paragraph{Make language necessary in the training data.}
Training should include multiple feasible behaviors from matched initial scenes, with each behavior selected by its corresponding instruction. Useful examples include passing versus following, neighboring parking spaces, different speed intervals, and reversed destination orders. Semantically equivalent paraphrases should preserve the requested outcome, while minimal changes in negation, target identity, numerical bounds, or event order should change the required behavior. Counterfactual augmentation has also improved instruction following in navigation and manipulation settings, providing additional motivation to test this approach in driving.

The objective should encourage correct semantic selection rather than arbitrary separation between trajectories. Two trajectories can differ substantially while both violate their instructions, and some meaningful instruction contrasts require only modest spatial differences. Balanced supervision of both feasible alternatives is therefore more informative than maximizing behavioral change alone.

\paragraph{Preserve grounded goals and constraints throughout action generation.}
A promising design is to maintain an explicit representation of the requested target, active constraints, and completion conditions, and make this representation available during each planning update. For scene-grounded tasks, named entities should be associated with persistent spatial referents. For persistent behavior, speed, lane, and actor-relative constraints should remain active until their termination conditions are satisfied.

\paragraph{Train for complete execution and the weaker alternative.}
Training objectives should reward fulfillment of the entire request, including applicable progress, stability, and safety conditions. Pair-aware imitation, preference optimization, or reinforcement learning could explicitly emphasize the less successful alternative within a matched pair. This would reduce the incentive to improve average performance by repeatedly solving an already favored behavior.

Reward design must also respect the instruction. Maximizing forward progress can conflict with requests to slow down, wait, follow another actor, or stop at a particular location. Progress should therefore be measured toward the requested task, with credit for partial completion used as guidance rather than a substitute for final success. The WOD-E2E comparison, where both Qwen-Drive variants have zero semantic paired success, illustrates that the presence of RL post-training alone does not establish improved language control; the training objective and supervision remain central.

\paragraph{Add persistent task state and feedback during execution.}
Long-horizon instructions would benefit from explicit tracking of the current stage, completed events, active invariants, conditional branches, and remembered entities. Replanning should update the motion plan while preserving obligations that remain active. A completion monitor could detect premature stage transitions, lost target bindings, or deviations from sustained constraints and trigger recovery.

For agents that retrieve task records, memory integrity is a further design consideration. A-MemGuard validates retrieved memories through reasoning-path consensus and retains negative lessons to defend LLM agents against memory poisoning~\citep{wei2025memguard}, providing a reference for protecting stored information before it influences planning.

Training should include interrupted and partially completed executions, so the policy learns to recover from deviations and resume the outstanding task. For ordered and conditional programs, supervision of intermediate state transitions may be more informative than a terminal success label alone. These mechanisms are motivated by the observed failures, but their effectiveness on Language2Drive remains to be tested.

\paragraph{Align planned behavior with the controller that executes it.}
Models should be evaluated and, where appropriate, trained with the execution interfaces used at deployment. Controller-aware rollouts can reveal whether failures arise from incorrect plans or from tracking, timing, and stopping errors. Calibrating trajectory timestamps, coordinate frames, vehicle dynamics, and replanning intervals is particularly relevant for narrow speed bands, exact parking, and sustained lane constraints.

A useful diagnostic intervention is to execute the same predicted plan through a verified controller, or to provide a correct plan through the existing controller. These comparisons can localize an execution bottleneck before changing the language model. They would also clarify which improvements transfer across simulators and which depend on a particular interface.

\paragraph{Teach useful safe alternatives alongside refusal.}
The exploratory safety-conflict results indicate that avoiding a prohibited action and completing a useful replacement are distinct capabilities. Training should therefore cover safe fallback, justified refusal, and constrained alternatives, with clear supervision of the behavior appropriate to each conflict. Benign matched requests are also needed to detect unnecessary refusal.

A successful system should preserve as much of the passenger's feasible intent as possible while respecting safety constraints. This requires learning what to do after rejecting literal compliance, rather than treating inactivity as a universally acceptable response. Evaluation should score the usefulness and completion of the replacement behavior separately from avoidance of the unsafe action.

\subsection{What would constitute convincing progress?}

Improvement should be demonstrated through increased paired completion on held-out scenes, targets, and instruction compositions, together with robustness across paraphrases and repeated seeds. Individual success, paired success, graded completion, and driving quality should remain separately visible. Higher aggregate scores or larger instruction-induced trajectory differences are insufficient if the requested alternatives remain uncompleted.

Process-level diagnostics can complement these outcome metrics. Agentic-MME verifies intermediate visual-tool and search steps in multimodal agents~\citep{wei2026agentic}, while TELBench and DRIFT study span-level error localization in deep-research agent trajectories~\citep{wang2026deepresearch}. For driving, analogous diagnostics could inspect target bindings, program-state transitions, and constraint violations along a rollout to identify where execution departs from the requested behavior.

Mechanistic claims require targeted interventions. Removing or shuffling language can test dependence on the instruction; supplying a correct target binding can test grounding; supplying the correct program state can test temporal tracking; and substituting a verified controller can test execution. These interventions should preserve the other relevant conditions and be assessed with uncertainty estimates that respect the dependence between paired arms and paraphrases. Paired success provides stronger behavioral evidence than isolated success, but repeated controls remain necessary to distinguish instruction dependence from stochastic variation.

The development target established by Language2Drive is reliable access to feasible behaviors through language. Achieving it requires the requested choice to be correctly grounded, preserved during planning, and completed throughout execution. Progress should therefore be measured by whether a passenger can consistently obtain either feasible alternative under matched conditions.

\end{document}